\definecolor{darkred}{RGB}{150,0,0}
\definecolor{darkgreen}{RGB}{0,150,0}
\definecolor{darkblue}{RGB}{0,0,200}
\newcommand*{\rom}[1]{\expandafter\@slowromancap\romannumeral #1@}
\def\thickhline{\noalign{\hrule height.8pt}}
\title{Retrieval Augmented Time Series Forecasting}
\author{
Kutay Tire\textsuperscript{$*$,$\dagger$,1} \qquad
Ege Onur Taga\textsuperscript{$*$,2} \qquad
M.~Emrullah Ildiz\textsuperscript{2} \qquad
Samet Oymak\textsuperscript{2} \\[12pt]
\textsuperscript{1}University of Texas at Austin \\
\texttt{kutaytire@utexas.edu} \\[4pt]
\textsuperscript{2}University of Michigan, Ann Arbor \\
\texttt{\{egetaga,eildiz,oymak\}@umich.edu}
}
\date{}
\begin{document}
\maketitle

\renewcommand{\thefootnote}{\fnsymbol{footnote}}
\footnotetext[1]{Equal contribution.
\textsuperscript{$\dagger$}Work done during an internship at the University of Michigan.}
\renewcommand{\thefootnote}{\arabic{footnote}}

\begin{abstract}
  Retrieval-augmented generation (RAG) is a central component of modern LLM systems, particularly in scenarios where up-to-date information is crucial for accurately responding to user queries or when queries exceed the scope of the training data. The advent of time-series foundation models (TSFM), such as Chronos or Moirai, and the need for effective zero-shot forecasting performance across various time-series domains motivates the question: Do the benefits of RAG similarly carry over to time series forecasting? In this paper, we advocate that the dynamic and event-driven nature of time-series data makes RAG a crucial component of TSFMs and introduce a principled RAG framework for time-series forecasting, called \emph{Retrieval Augmented Forecasting} (RAF). Within RAF, we develop efficient strategies for retrieving related time-series examples and incorporating them into the forecast. Through experiments and mechanistic studies, we demonstrate that RAF improves the forecasting accuracy across diverse time series domains and TSFMs, with gains that are more pronounced for larger models. 
\end{abstract}

\newcommand{\cs}[1]{\texttt{cos}(#1)}
\newcommand{\tn}[1]{\|{#1}\|_{\ell_2}}

\newtheorem{definition}{Definition}
\newtheorem{theorem}{Theorem}
\newtheorem{assumption}{Assumption}
\newtheorem{proof}{Proof}

% Mark sections of captions for referring to divisions of figures
\newcommand{\figleft}{{\em (Left)}}
\newcommand{\SO}[1]{\textcolor{red}{[SO:#1]}}
\newcommand{\so}[1]{\textcolor{red}{#1}}
\newcommand{\figcenter}{{\em (Center)}}
\newcommand{\figright}{{\em (Right)}}
\newcommand{\figtop}{{\em (Top)}}
\newcommand{\figbottom}{{\em (Bottom)}}
\newcommand{\captiona}{{\em (a)}}
\newcommand{\captionb}{{\em (b)}}
\newcommand{\captionc}{{\em (c)}}
\newcommand{\captiond}{{\em (d)}}

% Highlight a newly defined term
\newcommand{\newterm}[1]{{\bf #1}}

% Figure reference, lower-case.
\def\figref#1{figure~\ref{#1}}
% Figure reference, capital. For start of sentence
\def\Figref#1{Figure~\ref{#1}}
\def\twofigref#1#2{figures \ref{#1} and \ref{#2}}
\def\quadfigref#1#2#3#4{figures \ref{#1}, \ref{#2}, \ref{#3} and \ref{#4}}
% Section reference, lower-case.
\def\secref#1{section~\ref{#1}}
% Section reference, capital.
\def\Secref#1{Section~\ref{#1}}
% Reference to two sections.
\def\twosecrefs#1#2{sections \ref{#1} and \ref{#2}}
% Reference to three sections.
\def\secrefs#1#2#3{sections \ref{#1}, \ref{#2} and \ref{#3}}
% Reference to an equation, lower-case.
\def\eqref#1{equation~\ref{#1}}
% Reference to an equation, upper case
\def\Eqref#1{Equation~\ref{#1}}
% A raw reference to an equation---avoid using if possible
\def\plaineqref#1{\ref{#1}}
% Reference to a chapter, lower-case.
\def\chapref#1{chapter~\ref{#1}}
% Reference to an equation, upper case.
\def\Chapref#1{Chapter~\ref{#1}}
% Reference to a range of chapters
\def\rangechapref#1#2{chapters\ref{#1}--\ref{#2}}
% Reference to an algorithm, lower-case.
\def\algref#1{algorithm~\ref{#1}}
% Reference to an algorithm, upper case.
\def\Algref#1{Algorithm~\ref{#1}}
\def\twoalgref#1#2{algorithms \ref{#1} and \ref{#2}}
\def\Twoalgref#1#2{Algorithms \ref{#1} and \ref{#2}}
% Reference to a part, lower case
\def\partref#1{part~\ref{#1}}
% Reference to a part, upper case
\def\Partref#1{Part~\ref{#1}}
\def\twopartref#1#2{parts \ref{#1} and \ref{#2}}

\def\ceil#1{\lceil #1 \rceil}
\def\floor#1{\lfloor #1 \rfloor}
\def\1{\bm{1}}
\newcommand{\train}{\mathcal{D}}
\newcommand{\valid}{\mathcal{D_{\mathrm{valid}}}}
\newcommand{\test}{\mathcal{D_{\mathrm{test}}}}

\def\eps{{\epsilon}}

% Random variables
\def\reta{{\textnormal{$\eta$}}}
\def\ra{{\textnormal{a}}}
\def\rb{{\textnormal{b}}}
\def\rc{{\textnormal{c}}}
\def\rd{{\textnormal{d}}}
\def\re{{\textnormal{e}}}
\def\rf{{\textnormal{f}}}
\def\rg{{\textnormal{g}}}
\def\rh{{\textnormal{h}}}
\def\ri{{\textnormal{i}}}
\def\rj{{\textnormal{j}}}
\def\rk{{\textnormal{k}}}
\def\rl{{\textnormal{l}}}
% rm is already a command, just don't name any random variables m
\def\rn{{\textnormal{n}}}
\def\ro{{\textnormal{o}}}
\def\rp{{\textnormal{p}}}
\def\rq{{\textnormal{q}}}
\def\rr{{\textnormal{r}}}
\def\rs{{\textnormal{s}}}
\def\rt{{\textnormal{t}}}
\def\ru{{\textnormal{u}}}
\def\rv{{\textnormal{v}}}
\def\rw{{\textnormal{w}}}
\def\rx{{\textnormal{x}}}
\def\ry{{\textnormal{y}}}
\def\rz{{\textnormal{z}}}

% Random vectors
\def\rvepsilon{{\mathbf{\epsilon}}}
\def\rvtheta{{\mathbf{\theta}}}
\def\rva{{\mathbf{a}}}
\def\rvb{{\mathbf{b}}}
\def\rvc{{\mathbf{c}}}
\def\rvd{{\mathbf{d}}}
\def\rve{{\mathbf{e}}}
\def\rvf{{\mathbf{f}}}
\def\rvg{{\mathbf{g}}}
\def\rvh{{\mathbf{h}}}
\def\rvu{{\mathbf{i}}}
\def\rvj{{\mathbf{j}}}
\def\rvk{{\mathbf{k}}}
\def\rvl{{\mathbf{l}}}
\def\rvm{{\mathbf{m}}}
\def\rvn{{\mathbf{n}}}
\def\rvo{{\mathbf{o}}}
\def\rvp{{\mathbf{p}}}
\def\rvq{{\mathbf{q}}}
\def\rvr{{\mathbf{r}}}
\def\rvs{{\mathbf{s}}}
\def\rvt{{\mathbf{t}}}
\def\rvu{{\mathbf{u}}}
\def\rvv{{\mathbf{v}}}
\def\rvw{{\mathbf{w}}}
\def\rvx{{\mathbf{x}}}
\def\rvy{{\mathbf{y}}}
\def\rvz{{\mathbf{z}}}

% Elements of random vectors
\def\erva{{\textnormal{a}}}
\def\ervb{{\textnormal{b}}}
\def\ervc{{\textnormal{c}}}
\def\ervd{{\textnormal{d}}}
\def\erve{{\textnormal{e}}}
\def\ervf{{\textnormal{f}}}
\def\ervg{{\textnormal{g}}}
\def\ervh{{\textnormal{h}}}
\def\ervi{{\textnormal{i}}}
\def\ervj{{\textnormal{j}}}
\def\ervk{{\textnormal{k}}}
\def\ervl{{\textnormal{l}}}
\def\ervm{{\textnormal{m}}}
\def\ervn{{\textnormal{n}}}
\def\ervo{{\textnormal{o}}}
\def\ervp{{\textnormal{p}}}
\def\ervq{{\textnormal{q}}}
\def\ervr{{\textnormal{r}}}
\def\ervs{{\textnormal{s}}}
\def\ervt{{\textnormal{t}}}
\def\ervu{{\textnormal{u}}}
\def\ervv{{\textnormal{v}}}
\def\ervw{{\textnormal{w}}}
\def\ervx{{\textnormal{x}}}
\def\ervy{{\textnormal{y}}}
\def\ervz{{\textnormal{z}}}

% Random matrices
\def\rmA{{\mathbf{A}}}
\def\rmB{{\mathbf{B}}}
\def\rmC{{\mathbf{C}}}
\def\rmD{{\mathbf{D}}}
\def\rmE{{\mathbf{E}}}
\def\rmF{{\mathbf{F}}}
\def\rmG{{\mathbf{G}}}
\def\rmH{{\mathbf{H}}}
\def\rmI{{\mathbf{I}}}
\def\rmJ{{\mathbf{J}}}
\def\rmK{{\mathbf{K}}}
\def\rmL{{\mathbf{L}}}
\def\rmM{{\mathbf{M}}}
\def\rmN{{\mathbf{N}}}
\def\rmO{{\mathbf{O}}}
\def\rmP{{\mathbf{P}}}
\def\rmQ{{\mathbf{Q}}}
\def\rmR{{\mathbf{R}}}
\def\rmS{{\mathbf{S}}}
\def\rmT{{\mathbf{T}}}
\def\rmU{{\mathbf{U}}}
\def\rmV{{\mathbf{V}}}
\def\rmW{{\mathbf{W}}}
\def\rmX{{\mathbf{X}}}
\def\rmY{{\mathbf{Y}}}
\def\rmZ{{\mathbf{Z}}}

% Elements of random matrices
\def\ermA{{\textnormal{A}}}
\def\ermB{{\textnormal{B}}}
\def\ermC{{\textnormal{C}}}
\def\ermD{{\textnormal{D}}}
\def\ermE{{\textnormal{E}}}
\def\ermF{{\textnormal{F}}}
\def\ermG{{\textnormal{G}}}
\def\ermH{{\textnormal{H}}}
\def\ermI{{\textnormal{I}}}
\def\ermJ{{\textnormal{J}}}
\def\ermK{{\textnormal{K}}}
\def\ermL{{\textnormal{L}}}
\def\ermM{{\textnormal{M}}}
\def\ermN{{\textnormal{N}}}
\def\ermO{{\textnormal{O}}}
\def\ermP{{\textnormal{P}}}
\def\ermQ{{\textnormal{Q}}}
\def\ermR{{\textnormal{R}}}
\def\ermS{{\textnormal{S}}}
\def\ermT{{\textnormal{T}}}
\def\ermU{{\textnormal{U}}}
\def\ermV{{\textnormal{V}}}
\def\ermW{{\textnormal{W}}}
\def\ermX{{\textnormal{X}}}
\def\ermY{{\textnormal{Y}}}
\def\ermZ{{\textnormal{Z}}}

% Vectors
\def\vzero{{\bm{0}}}
\def\vone{{\bm{1}}}
\def\vmu{{\bm{\mu}}}
\def\vtheta{{\bm{\theta}}}
\def\va{{\bm{a}}}
\def\vb{{\bm{b}}}
\def\vc{{\bm{c}}}
\def\vd{{\bm{d}}}
\def\ve{{\bm{e}}}
\def\vf{{\bm{f}}}
\def\vg{{\bm{g}}}
\def\vh{{\bm{h}}}
\def\vi{{\bm{i}}}
\def\vj{{\bm{j}}}
\def\vk{{\bm{k}}}
\def\vl{{\bm{l}}}
\def\vm{{\bm{m}}}
\def\vn{{\bm{n}}}
\def\vo{{\bm{o}}}
\def\vp{{\bm{p}}}
\def\vq{{\bm{q}}}
\def\vr{{\bm{r}}}
\def\vs{{\bm{s}}}
\def\vt{{\bm{t}}}
\def\vu{{\bm{u}}}
\def\vv{{\bm{v}}}
\def\vw{{\bm{w}}}
\def\vx{{\bm{x}}}
\def\vy{{\bm{y}}}
\def\vz{{\bm{z}}}

% Elements of vectors
\def\evalpha{{\alpha}}
\def\evbeta{{\beta}}
\def\evepsilon{{\epsilon}}
\def\evlambda{{\lambda}}
\def\evomega{{\omega}}
\def\evmu{{\mu}}
\def\evpsi{{\psi}}
\def\evsigma{{\sigma}}
\def\evtheta{{\theta}}
\def\eva{{a}}
\def\evb{{b}}
\def\evc{{c}}
\def\evd{{d}}
\def\eve{{e}}
\def\evf{{f}}
\def\evg{{g}}
\def\evh{{h}}
\def\evi{{i}}
\def\evj{{j}}
\def\evk{{k}}
\def\evl{{l}}
\def\evm{{m}}
\def\evn{{n}}
\def\evo{{o}}
\def\evp{{p}}
\def\evq{{q}}
\def\evr{{r}}
\def\evs{{s}}
\def\evt{{t}}
\def\evu{{u}}
\def\evv{{v}}
\def\evw{{w}}
\def\evx{{x}}
\def\evy{{y}}
\def\evz{{z}}

% Matrix
\def\mA{{\bm{A}}}
\def\mB{{\bm{B}}}
\def\mC{{\bm{C}}}
\def\mD{{\bm{D}}}
\def\mE{{\bm{E}}}
\def\mF{{\bm{F}}}
\def\mG{{\bm{G}}}
\def\mH{{\bm{H}}}
\def\mI{{\bm{I}}}
\def\mJ{{\bm{J}}}
\def\mK{{\bm{K}}}
\def\mL{{\bm{L}}}
\def\mM{{\bm{M}}}
\def\mN{{\bm{N}}}
\def\mO{{\bm{O}}}
\def\mP{{\bm{P}}}
\def\mQ{{\bm{Q}}}
\def\mR{{\bm{R}}}
\def\mS{{\bm{S}}}
\def\mT{{\bm{T}}}
\def\mU{{\bm{U}}}
\def\mV{{\bm{V}}}
\def\mW{{\bm{W}}}
\def\mX{{\bm{X}}}
\def\mY{{\bm{Y}}}
\def\mZ{{\bm{Z}}}
\def\mBeta{{\bm{\beta}}}
\def\mPhi{{\bm{\Phi}}}
\def\mLambda{{\bm{\Lambda}}}
\def\mSigma{{\bm{\Sigma}}}

% Tensor
% \DeclareMathAlphabet{\mathsfit}{\encodingdefault}{\sfdefault}{m}{sl}
% \SetMathAlphabet{\mathsfit}{bold}{\encodingdefault}{\sfdefault}{bx}{n}
\newcommand{\tens}[1]{\bm{\mathsfit{#1}}}
\def\tA{{\tens{A}}}
\def\tB{{\tens{B}}}
\def\tC{{\tens{C}}}
\def\tD{{\tens{D}}}
\def\tE{{\tens{E}}}
\def\tF{{\tens{F}}}
\def\tG{{\tens{G}}}
\def\tH{{\tens{H}}}
\def\tI{{\tens{I}}}
\def\tJ{{\tens{J}}}
\def\tK{{\tens{K}}}
\def\tL{{\tens{L}}}
\def\tM{{\tens{M}}}
\def\tN{{\tens{N}}}
\def\tO{{\tens{O}}}
\def\tP{{\tens{P}}}
\def\tQ{{\tens{Q}}}
\def\tR{{\tens{R}}}
\def\tS{{\tens{S}}}
\def\tT{{\tens{T}}}
\def\tU{{\tens{U}}}
\def\tV{{\tens{V}}}
\def\tW{{\tens{W}}}
\def\tX{{\tens{X}}}
\def\tY{{\tens{Y}}}
\def\tZ{{\tens{Z}}}

% Graph
\def\gA{{\mathcal{A}}}
\def\gB{{\mathcal{B}}}
\def\gC{{\mathcal{C}}}
\def\gD{{\mathcal{D}}}
\def\gE{{\mathcal{E}}}
\def\gF{{\mathcal{F}}}
\def\gG{{\mathcal{G}}}
\def\gH{{\mathcal{H}}}
\def\gI{{\mathcal{I}}}
\def\gJ{{\mathcal{J}}}
\def\gK{{\mathcal{K}}}
\def\gL{{\mathcal{L}}}
\def\gM{{\mathcal{M}}}
\def\gN{{\mathcal{N}}}
\def\gO{{\mathcal{O}}}
\def\gP{{\mathcal{P}}}
\def\gQ{{\mathcal{Q}}}
\def\gR{{\mathcal{R}}}
\def\gS{{\mathcal{S}}}
\def\gT{{\mathcal{T}}}
\def\gU{{\mathcal{U}}}
\def\gV{{\mathcal{V}}}
\def\gW{{\mathcal{W}}}
\def\gX{{\mathcal{X}}}
\def\gY{{\mathcal{Y}}}
\def\gZ{{\mathcal{Z}}}

% Sets
\def\sA{{\mathbb{A}}}
\def\sB{{\mathbb{B}}}
\def\sC{{\mathbb{C}}}
\def\sD{{\mathbb{D}}}
% Don't use a set called E, because this would be the same as our symbol
% for expectation.
\def\sF{{\mathbb{F}}}
\def\sG{{\mathbb{G}}}
\def\sH{{\mathbb{H}}}
\def\sI{{\mathbb{I}}}
\def\sJ{{\mathbb{J}}}
\def\sK{{\mathbb{K}}}
\def\sL{{\mathbb{L}}}
\def\sM{{\mathbb{M}}}
\def\sN{{\mathbb{N}}}
\def\sO{{\mathbb{O}}}
\def\sP{{\mathbb{P}}}
\def\sQ{{\mathbb{Q}}}
\def\sR{{\mathbb{R}}}
\def\sS{{\mathbb{S}}}
\def\sT{{\mathbb{T}}}
\def\sU{{\mathbb{U}}}
\def\sV{{\mathbb{V}}}
\def\sW{{\mathbb{W}}}
\def\sX{{\mathbb{X}}}
\def\sY{{\mathbb{Y}}}
\def\sZ{{\mathbb{Z}}}

% Entries of a matrix
\def\emLambda{{\Lambda}}
\def\emA{{A}}
\def\emB{{B}}
\def\emC{{C}}
\def\emD{{D}}
\def\emE{{E}}
\def\emF{{F}}
\def\emG{{G}}
\def\emH{{H}}
\def\emI{{I}}
\def\emJ{{J}}
\def\emK{{K}}
\def\emL{{L}}
\def\emM{{M}}
\def\emN{{N}}
\def\emO{{O}}
\def\emP{{P}}
\def\emQ{{Q}}
\def\emR{{R}}
\def\emS{{S}}
\def\emT{{T}}
\def\emU{{U}}
\def\emV{{V}}
\def\emW{{W}}
\def\emX{{X}}
\def\emY{{Y}}
\def\emZ{{Z}}
\def\emSigma{{\Sigma}}

% entries of a tensor
% Same font as tensor, without \bm wrapper
\newcommand{\etens}[1]{\mathsfit{#1}}
\def\etLambda{{\etens{\Lambda}}}
\def\etA{{\etens{A}}}
\def\etB{{\etens{B}}}
\def\etC{{\etens{C}}}
\def\etD{{\etens{D}}}
\def\etE{{\etens{E}}}
\def\etF{{\etens{F}}}
\def\etG{{\etens{G}}}
\def\etH{{\etens{H}}}
\def\etI{{\etens{I}}}
\def\etJ{{\etens{J}}}
\def\etK{{\etens{K}}}
\def\etL{{\etens{L}}}
\def\etM{{\etens{M}}}
\def\etN{{\etens{N}}}
\def\etO{{\etens{O}}}
\def\etP{{\etens{P}}}
\def\etQ{{\etens{Q}}}
\def\etR{{\etens{R}}}
\def\etS{{\etens{S}}}
\def\etT{{\etens{T}}}
\def\etU{{\etens{U}}}
\def\etV{{\etens{V}}}
\def\etW{{\etens{W}}}
\def\etX{{\etens{X}}}
\def\etY{{\etens{Y}}}
\def\etZ{{\etens{Z}}}

% The true underlying data generating distribution
\newcommand{\pdata}{p_{\rm{data}}}
% The empirical distribution defined by the training set
\newcommand{\ptrain}{\hat{p}_{\rm{data}}}
\newcommand{\Ptrain}{\hat{P}_{\rm{data}}}
% The model distribution
\newcommand{\pmodel}{p_{\rm{model}}}
\newcommand{\Pmodel}{P_{\rm{model}}}
\newcommand{\ptildemodel}{\tilde{p}_{\rm{model}}}
% Stochastic autoencoder distributions
\newcommand{\pencode}{p_{\rm{encoder}}}
\newcommand{\pdecode}{p_{\rm{decoder}}}
\newcommand{\precons}{p_{\rm{reconstruct}}}

 % Laplace distributio
\newcommand{\E}{\mathbb{E}}
\newcommand{\Ls}{\mathcal{L}}
\newcommand{\R}{\mathbb{R}}
\newcommand{\emp}{\tilde{p}}
\newcommand{\lr}{\alpha}
\newcommand{\reg}{\lambda}
\newcommand{\rect}{\mathrm{rectifier}}
\newcommand{\softmax}{\mathrm{softmax}}
\newcommand{\sigmoid}{\sigma}
\newcommand{\softplus}{\zeta}
\newcommand{\KL}{D_{\mathrm{KL}}}
\newcommand{\Var}{\mathrm{Var}}
\newcommand{\standarderror}{\mathrm{SE}}
\newcommand{\Cov}{\mathrm{Cov}}
% Wolfram Mathworld says $L^2$ is for function spaces and $\ell^2$ is for vectors
% But then they seem to use $L^2$ for vectors throughout the site, and so does
% wikipedia.
\newcommand{\normlzero}{L^0}
\newcommand{\normlone}{L^1}
\newcommand{\normltwo}{L^2}
\newcommand{\normlp}{L^p}
\newcommand{\normmax}{L^\infty}

\newcommand{\parents}{Pa} % See usage in notation.tex. Chosen to match Daphne's book.

\let\ab\allowbreak

\section{INTRODUCTION}
\label{sec:intro}

The success of LLMs has motivated a broader push toward developing foundation models for other modalities. Time-series analysis, in particular, stands to directly benefit from recent advancements in sequence modeling techniques. Indeed, there has been significant progress in new time-series architectures \citep{ haoyietal-informer-2021, wu2021autoformer, zhou2022fedformer,logtrans, nie2023a, liu2022pyraformer, zhang2023crossformer, du2021adarnnadaptivelearningforecasting, rasul2021autoregressivedenoisingdiffusionmodels, liu2024itransformerinvertedtransformerseffective}, tokenization strategies \citep{nie2023a, chen2023tsmixerallmlparchitecturetime, ansari2024chronoslearninglanguagetime, talukder2024totemtokenizedtimeseries}, and more recently, time-series foundation models such as Chronos \citep{ansari2024chronoslearninglanguagetime}. These advances hold the premise of enhancing the accuracy, robustness, and few-shot learning capabilities of future time-series models. On the other hand, there is a notable shift from standalone models to compound AI systems \citep{nori2023generalistfoundationmodelsoutcompete, lewis2021retrievalaugmentedgenerationknowledgeintensivenlp, lee2019latentretrievalweaklysupervised, khattab2022baleenrobustmultihopreasoning} where LLMs are integrated with external databases and advanced prompting strategies to accomplish complex tasks. 

In particular, retrieval augmented generation (RAG) \citep{lewis2021retrievalaugmentedgenerationknowledgeintensivenlp}, has become a key component of LLM pipelines during recent years \citep{li2022survey}. In essence, RAG aims to facilitate factual and up-to-date generation by retrieving query-related documents from external databases. Notably, RAG also mitigates the need for retraining the model to incorporate fresh data or fine-tuning it for individual application domains. In the context of time-series forecasting, we expect RAG to be beneficial for several reasons. First, time-series data is inherently dynamic, heterogeneous, and context-dependent, making it challenging to forecast accurately without access to the relevant external context such as phenomenon-specific conditions. 
Second, time-series often exhibit complex patterns and dependencies that traditional univariate models (ARIMA \citep{box2015time}, ETS \citep{hyndman2002state,hyndman2021fpp3}) struggle to capture—especially under abrupt, nonlinear regime shifts (e.g., earthquakes, financial crises, elections), where assumptions of linearity and slowly varying structure break down. As a result, these models perform poorly on rare, out-of-distribution events for which historical data offer little precedent. Indeed, celebrated approaches in time series analysis, such as motif discovery, matrix profile, and dynamic time warping \citep{bailey2009meme,yeh2016matrix,muller2007dynamic}, are inherently about identifying and matching complex time-series patterns. Incorporating RAG holds the promise of augmenting time-series models with these capabilities to utilize external knowledge bases.
\begin{figure*}[t!]
    \centering
    \includegraphics[width=0.9\textwidth]{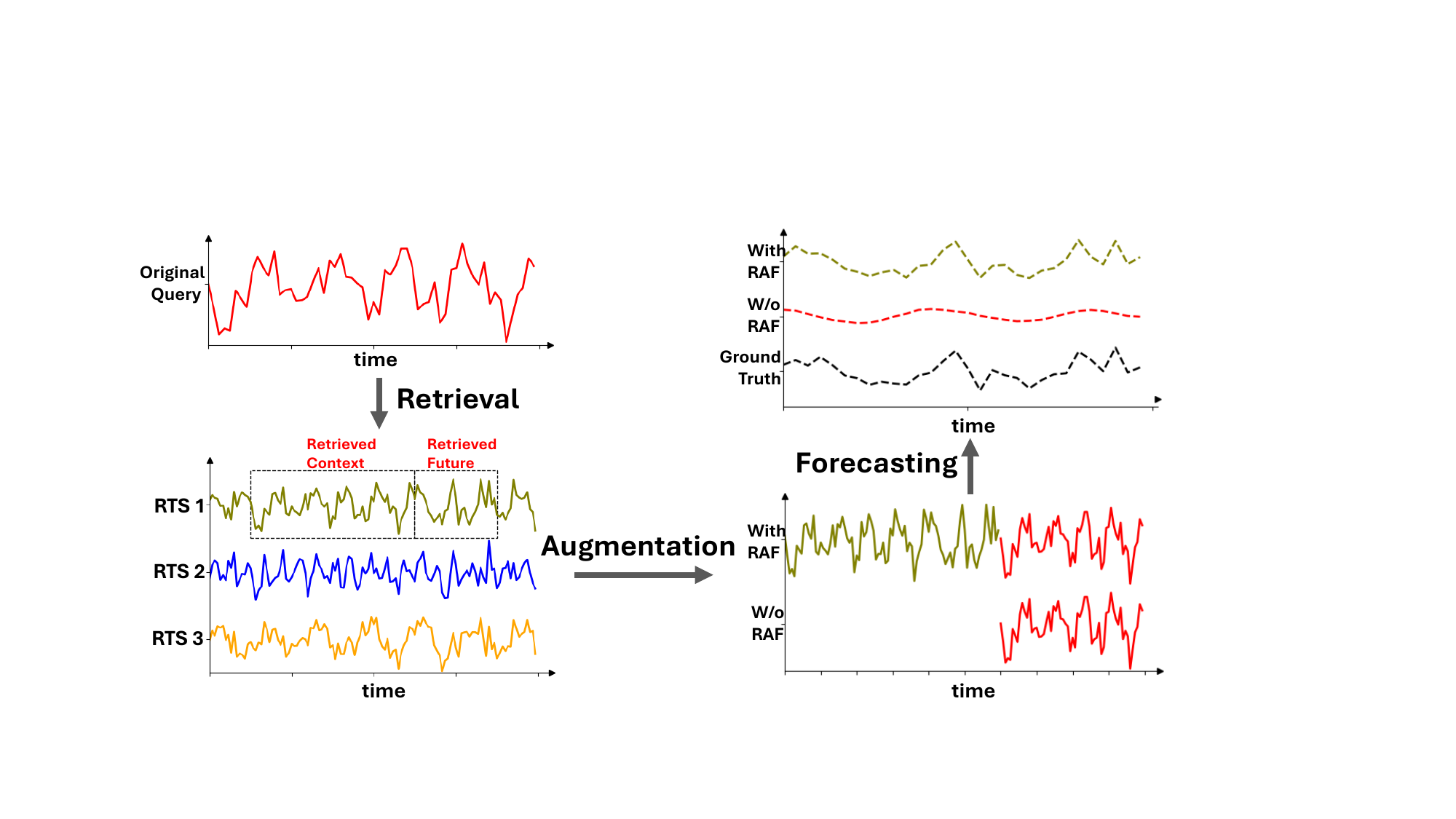}
    \caption{Overview of the \emph{ Retrieval Augmented Forecasting} (RAF) framework. \emph{Top left:} The original query is used to retrieve the best-matching time series (RTS 1, RTS 2, RTS 3, \dots). \emph{Bottom left:} We utilize the best match (RTS 1) to form the retrieved context and retrieved future. \emph{Bottom right:} These segments are then augmented with the original time series to produce an augmented input for forecasting. \emph{Top right} figure displays the forecasts generated by Chronos Base. \textsc{RAF} outperforms the base (no-retrieval) model and returns a forecast closer to the actual future values.} %\vspace{-0.45cm}}
    \label{fig:main_fig}
\end{figure*}

In this work, we provide a systematic study of RAG for time-series forecasting. We propose \emph{Retrieval Augmented Forecasting} (RAF) for TSFMs (Figure~\ref{fig:main_fig}) and observe an \emph{intrinsic retrieval capability} in modern TSFMs: with an appropriately retrieved motif, they align and reuse it in a zero-shot manner to improve forecasts, even without fine-tuning. The benefits are most pronounced on out-of-domain datasets, where domain-specific structure is missing from the base model. For example, a traffic dataset \citep{wu2021autoformer} has properties unique to transportation, such as the periodicity during a day, which are quite different from those of the NN5 dataset \citep{godahewa2021monashtimeseriesforecasting} in finance. RAF facilitates the model to capture these properties by retrieving the best-matching motif(s) from a domain-specific database and appending it to the query context, thereby leveraging the TSFM’s in-context learning capability. In doing so, it offers a resource-efficient alternative to fine-tuning. Our main contributions are as follows:

\begin{enumerate}
\item We introduce RAF as a principled forecasting framework for TSFMs. We formalize RAF in Section \ref{sec:problem_setup}, where we establish that transformer-based architectures are capable of RAF and contrast the retrieval performance under various signal-to-noise ratio conditions (see Figure \ref{fig:synthetic}). Notably, Chronos Mini fails to provide the correct forecast even if we use the query and its true future values as the retrieved example, whereas Chronos Small and Base can do so. These indicate that retrieval-augmented forecasting is an emerging capability of large time-series foundation models. \

\item We study RAF along two axes: (i) \emph{model size}—comparing Chronos Mini vs.\ Chronos Base (Tables~\ref{tab:avg_scores_benchmark_1_all} and \ref{tab:avg_scores_benchmark_2_all}); and (ii) \emph{architecture}—evaluating RAF across four TSFMs (Chronos, Moirai, TimesFM, and Lag-Llama; Table~\ref{tab:model_scores}). These evaluations show that RAF’s gains grow with model capacity, aligning with RAG results in LLMs \citep{guu2020realmretrievalaugmentedlanguagemodel, lewis2021retrievalaugmentedgenerationknowledgeintensivenlp, kaplan2020scalinglawsneurallanguage}, and that RAF is effective across diverse TSFMs. We additionally benchmark three non-Transformer baselines—DLinear, GBDT with LightGBM, and ARIMA—and observe negligible or negative average gains with RAF (Table~\ref{tab:nontransformer_baselines_raf}), consistent with the expectation that architectures without attention or cross-sequence fusion benefit less from retrieval at inference time.

\item Beyond capacity and architecture, we contrast two RAF modes: (i) \emph{Naive RAF} (hereafter “RAF”), which treats the forecaster as a black box and leaves all weights frozen; and (ii) \emph{Advanced RAF}, which additionally fine-tunes the model for retrieval-augmented forecasting. Both yield substantial gains. We evaluate probabilistic and point forecasts on Chronos, comparing both RAF variants to standard baselines with and without fine-tuning. Advanced RAF outperforms all alternatives, underscoring the effectiveness of our framework and its synergy with fine-tuning.
\end{enumerate}

\section{RELATED WORK}
\label{sec:related_work}

\noindent\textbf{Time Series Forecasting.} Among the most popular deep learning approaches for time series forecasting are RNN-based and transformer-based models. A line of research in RNN-based models include \citep{du2021adarnnadaptivelearningforecasting, WANG2019, salinas2020deepar, rasul2021autoregressivedenoisingdiffusionmodels}, while transformer-based models feature, among many others \citep{zhou2022fedformer, haoyietal-informer-2021, wu2021autoformer, zhang2023crossformer,liu2022pyraformer,liu2024itransformerinvertedtransformerseffective, LIM20211748, nie2023a}, ForecastPFN \citep{dooley2023}, TimePFN \citep{tagatimepfn}, and the model employed in this study: Chronos \citep{ansari2024chronoslearninglanguagetime}. While most references above highlight domain-specific models, there is an emerging trend in zero-shot forecasting. A line of work there includes \citep{perez2020,oreshkin2021,jin2022,dooley2023,ansari2024chronoslearninglanguagetime}. ForecastPFN \citep{dooley2023} and TimePFN \citep{tagatimepfn} are trained exclusively on a synthetic dataset using the Prior-data Fitted Networks (PFNs) framework, a concept originally introduced by \citet{muller2021transformers}. Chronos, on the other hand, is trained on real time series data corpora, which is augmented with synthetically generated time series examples via Gaussian processes to improve generalization.

\noindent\textbf{Time Series Foundational Models} describe large models trained on extensive datasets, enabling them to recognize patterns across various time series data domains \citep{Liang_2024}. A notable line of work includes Moirai \citep{woo2024unifiedtraininguniversaltime}, Moment \citep{goswami2024momentfamilyopentimeseries}, Lag-Llama \citep{rasul2024lagllamafoundationmodelsprobabilistic}, TimesFM \citep{das2024decoderonlyfoundationmodeltimeseries}, TimeGPT-1 \citep{garza2024timegpt1}, and Chronos \citep{ansari2024chronoslearninglanguagetime}. Chronos proposes a scaling and quantization technique to train standard LLM architectures such as T5 \citep{raffel2023exploringlimitstransferlearning} and GPT-2 \citep{radford2019language}, demonstrating state-of-the-art zero-shot and few-shot capabilities.

\noindent\textbf{Retrieval-Augmented Generation (RAG)} is a key component in modern LLM pipelines, improving domain adaptation in various tasks without retraining. A line of work includes those of \citep{lewis2021retrievalaugmentedgenerationknowledgeintensivenlp,guu2020realmretrievalaugmentedlanguagemodel, izacard2020leveraging,khattab2020colbert, fan2021augmenting,pmlr-v162-borgeaud22a,ma2023query, thomas2024retrievalfinetuningincontext}. Furthermore, recent studies have focused on utilizing Retrieval-Augmented Generation (RAG) for time series forecasting \citep{ravuru2024agentic-retrieval, wang2024ratsf-retrieval, saveri2024retrieval, han2025raft}. \citet{ravuru2024agentic-retrieval} explores its application in agentic settings, \citet{wang2024ratsf-retrieval} targets the domain of customer service, and \citet{saveri2024retrieval} investigates mining temporal logic specifications from data. Our work sets itself apart by focusing on retrieval augmented generation in the context of time series foundational models. 

\section{PROBLEM SETUP}
\label{sec:problem_setup}

\paragraph{Notation.} We use lower-case and upper-case bold letters (e.g., $\va, \mA$) to represent vectors and matrices, respectively; $a_i$ denotes the $i$-th entry of a vector $\va$. 

\smallskip
\paragraph{Setting.} Let $\vx \in \mathbb{R}^{L}$ denote a univariate time series of length $L$. Let $f(\vx) \in \mathbb{R}^{H}$ be the horizon-$H$ forecast produced by model $f$, representing the prediction for the next $H$ time steps given the input series $\vx$. We use $\vx[j, j+C-1]$ to denote the sub-series of length $C$ that starts at time $j$ and ends at time $j+C-1$. $(f(\vx))[j, j+H-1]$ is defined analogously for the horizon-$H$ output of $f$. A sub-series that approximately repeats within a longer time series is referred to as a \textbf{time-series motif}. Given a motif $\vm \in \mathbb{R}^{C}$, we say that $\vx[j, j+C-1]$ \emph{matches} $\vm$ if $\vx[j, j+C-1] \approx \vm$.

\subsection{Time-series Retrieval Problem}

Retrieval augmented forecasting is inherently related to the model's capability to identify motifs in the time series and utilize this to make inferences. This motivates our Time-Series Retrieval (TS-R) task, which measures the ability to match the current motif (i.e.~query) to an earlier similar motif (i.e.~key). The model is then asked to retrieve the context surrounding the earlier motif and use it as additional input to form its prediction.

\begin{definition}[TS-R problem]\label{def:tsr}
Let $\vm$ be the motif at the end of the time series, i.e., $\vm = \vx[L - C + 1, L] \in \R^C$. Suppose there is a unique matching motif in the history: $\vx[t, t + C - 1] = \vm$ for a unique timestamp $t < L - C + 1$. Let $\boldsymbol{\Upsilon} := \vx[t + C,\, t + C + H - 1] \in \R^H$ denote the length-$H$ segment that follows this past occurrence. 
We say that a model $f$ solves the TS-R problem if $f(\vx) = \boldsymbol{\Upsilon}$.
\end{definition}

The TS-R task is inspired in part by the associative recall (AR) and induction heads tasks in language modeling \citep{olsson2022incontextlearninginductionheads,gu2024mambalineartimesequencemodeling}. These tasks ask for completing a bigram based on previous occurrences. For instance, if `Geoff Hinton' occurred earlier in the text, the model should return `Hinton' after seeing `Geoff' next time. Similar to this, TS-R measures the ability to make predictions by motif retrieval. Compared to the induction heads problem, TS-R has two distinctions due to the continuous nature of time-series data: Rather than cosine similarity between tokens, we are asking for retrieval in Euclidean distance. This is because motifs within the input time series are expected to have distinct norms. Secondly, while tokens/words are clearly delineated in language, TS-R would benefit from intelligently encoding the motifs/sub-series. The following theorem provides a two-layer attention construction to solve the TS-R task.
\begin{theorem}[\emph{informal}]\label{thm informal} A transformer architecture with two-attention blocks and absolute positional encoding can solve the Time-Series Retrieval (TS-R) problem by employing patch-embeddings with stride length 1 and by suitably encoding the norms and directions of the patches.
\end{theorem}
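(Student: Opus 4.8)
The plan is to give an explicit two-layer attention construction. First I would set up the patch-embedding layer: with stride 1, the time-series $\vx\in\R^L$ is mapped to a sequence of overlapping patches $\vx[i, C]\in\R^C$ for $i=1,\dots,L-C+1$, so that the final token corresponds exactly to the query motif $\vm$. I would then fix a token representation that stores three pieces of information side by side: (i) a normalized direction vector $\vx[i,C]/\|\vx[i,C]\|$, (ii) the scalar norm $\|\vx[i,C]\|$ (or a monotone encoding of it), and (iii) an absolute positional encoding $p_i$. The key observation justifying this split is exactly the remark preceding the theorem: motifs are expected to have distinct norms, so matching in Euclidean distance between whole patches is equivalent to simultaneously matching directions (via an inner product / cosine-type score) and matching norms (via the scalar channel). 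The construction needs both channels because a pure cosine-similarity head — which is what a softmax-attention dot product naturally computes on normalized vectors — cannot by itself distinguish two motifs of the same direction but different magnitude.

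Second, I would use the first attention block to perform the \emph{match}: build query/key maps so that the attention logit between the last token (carrying $\vm$) and an earlier token $i$ is large precisely when $\vx[i,C]=\vm$, i.e.\ when both the direction inner product is maximal and the norms agree. Concretely I would feed the concatenation of the normalized direction and the norm channel through $W_Q,W_K$ so the logit is an affine function of $\langle \vx[i,C], \vm\rangle$ and $-(\|\vx[i,C]\|-\|\vm\|)^2$ (the latter obtained from $\|\vx[i,C]\|^2$, $\|\vm\|^2$, and a cross term, all available as coordinates or via a squaring that a two-layer MLP/attention can approximate). With a large inverse-temperature this head's softmax concentrates on the unique matching index $t$, and I would have it \emph{copy the positional encoding} $p_t$ of that match into the last token's residual stream. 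This is the step I expect to be the main obstacle: making the argument rigorous requires either (a) assuming the softmax can be taken with an arbitrarily large multiplier so the attention is exactly one-hot (the standard "hard-attention" idealization used in induction-head constructions), or (b) a more careful gap/separation argument showing the non-matching logits are bounded away from the matching one given the uniqueness hypothesis in Definition~\ref{def tsr}; I would go with (a) for the informal statement and note the quantitative version in the formal proof.

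Third, I would use the second attention block to perform the \emph{retrieve-and-shift}: the last token now knows $p_t$, so I construct queries that ask for the token at position $t+C$ (equivalently $t+C,\dots,t+C+H-1$), using positional encodings as keys. Because absolute positional encodings are, by design, in bijection with indices, a dot-product head can be made to attend to exactly the $H$ tokens immediately following the matched motif; reading off their values (which, through the stride-1 patching, expose the entries of $\Upsilon$) and passing through a fixed linear output map reconstructs $f(\vx)=\Upsilon$. I would note that attending to a block of $H$ consecutive positions either needs $H$ parallel heads (one per offset) or a single head whose value map aggregates the patch contents appropriately; either is a standard bookkeeping step. Finally I would remark that the whole construction is uniform in $\vm$ and $\Upsilon$ — nothing in the weights depends on the particular query — which is exactly what Definition~\ref{def tsr} demands, and that the two distinctions from induction heads flagged before the theorem (Euclidean vs.\ cosine matching, and the need to encode sub-series intelligently) are precisely what the norm channel and the stride-1 patch-embedding respectively handle.
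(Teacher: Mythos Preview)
Your two-stage match-then-shift architecture, the hard-attention limit ($c\to\infty$), stride-$1$ patching, and the direction/norm split all agree with the paper's construction; the overall strategy is the same. The differences are in the mechanics of each layer, and in each place the paper's route is simpler than yours. For the first layer you build a logit with an explicit $-(\|x_i\|-\|\vm\|)^2$ term and then worry about synthesizing the square; the paper sidesteps this entirely: once the norm is stored as an extra coordinate, $\bar x_i := [\,x_i^\top/\|x_i\|,\ \|x_i\|\,]^\top$, a \emph{plain} row-normalized inner product $\langle \bar x_i/\|\bar x_i\|,\ \bar x_q/\|\bar x_q\|\rangle$ equals $1$ iff $\bar x_i=\bar x_q$, i.e.\ iff both direction and norm agree (Cauchy--Schwarz), so a single linear head already implements exact Euclidean matching with no squaring and no MLP. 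For the second layer you assert that knowing $p_t$ lets you ``ask for the token at position $t+C$'' but do not say how a linear map sends $p_t$ to a query for $p_{t+C}$; the paper supplies exactly this missing structure by \emph{assuming} rotational positional encodings $p_{i+C}=\mR\,p_i$ for a fixed unitary $\mR$ (RoPE-style) and taking $W_2 = c\,\mPhi^{\perp}\mR\,\mPhi^{\perp}$, so the shift-by-$C$ is a rotation in PE space. Finally, your concern about needing $H$ parallel heads evaporates once you take $H\le C$ (the paper assumes $H=C$ WLOG): with stride-$1$ patching the single patch token at position $t+C$ already \emph{is} $\Upsilon$, so one head and a fixed decoding map $\vg$ suffice.
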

This result shows that the use of transformer-based architectures for time-series retrieval is well-founded. The patching strategy we employ is similar to earlier works \citep{nie2023a, tagatimepfn}, however, we use a stride length of 1 to ensure every sub-series is tokenized and can be retrieved. The formal theorem statement is provided under Theorem \ref{theorem:ts-r} of Appendix \ref{app:theoretical_results}. Throughout the paper, we denote the motif $\vm$ and $\boldsymbol{\Upsilon}$ as the retrieved context and retrieved future, respectively.
 
\subsection{Synthetic Retrieval Experiment}

\begin{wrapfigure}{b!}{0.5\textwidth}   
\vspace{-1.8cm}
  \begin{center}
\includegraphics[width=0.45\textwidth]{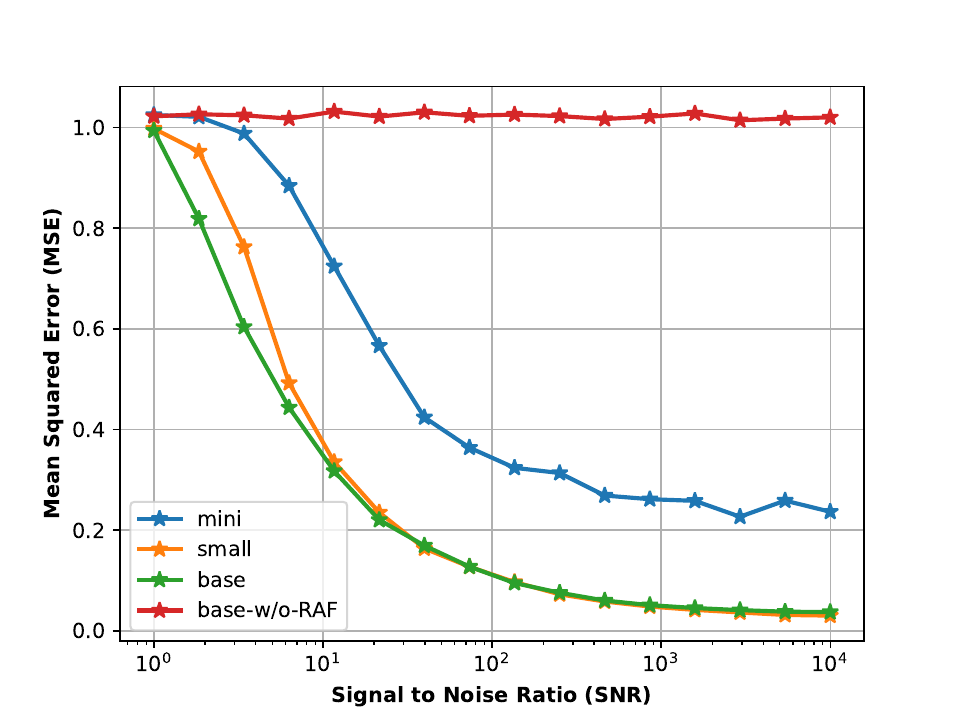}
  \end{center}
    \vspace{-0.35cm}
  \caption{We generated synthetic time-series data by transposing two sinusoidal signals and projecting them via orthogonal projections. We assessed extrapolation behavior using scaled mean squared error (assuming $0$ prediction as baseline) and chose a context and forecast length of $C=30$ and $H=30$. Evaluations were conducted on Chronos-\{mini, small, base\}.}
    \label{fig:synthetic}
    \vspace{-0cm}
\end{wrapfigure}

In practice, time-series data are noisy, unlike the idealized setting of Definition ~\ref{def:tsr}. To assess Chronos' time-series retrieval behavior under noisy conditions, we investigate the following experimental setting. We fix two sinusoidal components with distinct frequencies $f_1$ and $f_2$ and sample a random phase $\phi \sim \mathrm{Unif}(0,2\pi)$, yielding a signal of length $L=C+H$. We then apply a random orthonormal rotation $\rmQ \in \mathbb{R}^{L \times L}$ (with $\rmQ^\top \rmQ = \mI$) to avoid a trivial axis-aligned structure. The synthetic series is thus generated as
\[
\vs = \rmQ\big(\sin(\pi f_1 \vt + \phi) + \sin(\pi f_2 \vt + \phi)\big).
\]

To model noisy retrieval, we form a noisy match $\vs_r = \vs + \vz$ with additive noise $\vz \sim \mathcal{N}(\mathbf{0}, \sigma_s^2 \mathbf{I})$. The variance $\sigma_s^2$ determines the signal-to-noise ratio in Figure~\ref{fig:synthetic}. Based on the retrieved context $\vm_r := \vs_r[0,\, C-1]$ (the noisy counterpart of the clean motif $\vm := \vs[0,\, C-1]$) and the retrieved future $\vr := \vs_r[C,\, C+H-1]$, we construct the RAF query
\[
\vs_{\text{raf}} = [\,\vm_r \;\; \vr \;\; \vm\,] \in \R^{L + C}.
\]
We evaluate the forecast $(f(\vs_{\text{raf}}))[0,\, H-1]$ against the noise-free future $\vs[C,\, C+H-1]$, which serves as the ground truth, and report the per-step mean squared error.

\smallskip

As demonstrated in Figure \ref{fig:synthetic}, the mean squared error decreases as the signal-to-noise ratio increases across all model sizes. However, the convergence behavior varies: larger models exhibit faster convergence and smaller retrieval errors compared to smaller models. Additionally, the Chronos Mini completely fails; that is, even without noise, it is unable to perform retrieval, indicating a failure in the basic TS-R task.  It is also noteworthy that without retrieval, no model can extrapolate based solely on a given motif $\vm$. We provide an extensive discussion of these observations in Appendix \ref{app:theoretical_results}.

\section{METHODOLOGY}
\label{sec:methodology}
The time-series retrieval problem discussed in Section \ref{sec:problem_setup} and the experimental results shown in Figure \ref{fig:synthetic} demonstrate that transformer-based time-series models are well-equipped for retrieval-augmented forecasting. However, these assume the existence of a retrieval mechanism and a time-series model. Here, we describe the design choices and implementation details of the RAF framework.

\paragraph{Indexing and Database Formation.}
To retrieve the best matching time series as described in Section~\ref{sec:problem_setup}, we construct a database specific to each dataset, since datasets differ in scale, sampling frequency, seasonality, and noise. Consequently, we reserve 20\% of the series in each dataset for testing and use the remaining 80\% to form the \emph{retrieval database} via a random but fixed split. To prevent information leakage, the database contains only time steps that occur strictly \emph{before} the prediction window of the query series; segments may overlap the context window but never include values from the forecast horizon. From this dataset-specific database, we then retrieve the best matches using approximate nearest-neighbor search over the chosen representations. An ablation study on retrieval database formation—comparing dataset-specific, same-domain, and cross-domain corpora—is provided in Appendix~\ref{app:database}.

\paragraph{Matching and Similarity Metric.}The selection of the best-matching time series to the original time series is based on embedding similarity. In this approach, we first obtain an embedding of the original time series using the Chronos-Base model’s encoder. Since decoder-only models do not have encoders, we use the Chronos-Base encoder as a shared feature extractor to obtain their embeddings as well. Then, we identify the top-n best matches by calculating the $\ell_2$ norm between the original time series and the time series in our allocated database. The $\ell_2$ norm is given by \( || \vm - \vy ||_{\ell_2} =  \sqrt{\sum_{i=1}^{n} (m_i - y_i)^2} \), where \( \vm \) and \( \vy \) represent the vectors corresponding to the embeddings of the original time series and the time series retrieved from the database, respectively. Moreover, we compare $\ell_2$, $\ell_1$, cosine similarity, and Pearson's correlation in Appendix~\ref{sec:sim_metrics}. On longer-context Benchmark~\rom{1} datasets, $\ell_2$ and Pearson's correlation tend to perform best, while on shorter-context Benchmark~\rom{2} datasets, cosine similarity and Pearson's correlation dominate, suggesting that direction-based metrics become more important when scale information is limited.

\paragraph{Instance Normalization.}To mitigate the distribution shift effects between training and testing data, we apply instance normalization \citep{ulyanov2017instancenormalizationmissingingredient, kim2022reversible}. We normalize each time series instance \( \vx^{(i)} \) with zero mean and unit standard deviation. For the baseline approach, we normalize each \( \vx^{(i)} \) before prediction, and the mean and deviation are added back to the output prediction. On the other hand, original time series \( \vx^{(i)} \) and the retrieved time series \( \vx'^{(i)} \) are normalized separately before being input into the model in the form of \textit{retrieval query formation}. The mean and deviation of \( \vx^{(i)} \) are then added back to the output prediction at the end.

\paragraph{Retrieval Query Formation.} 
The initial step in query formation is identifying the best-matching time series from our database using the given similarity metric. The goal is to find a time series that closely matches the context (historical pattern - motif) of the time series we are working with. This retrieved time series serves as the \textit{retrieved context}. After identifying the retrieved context, we focus on its future portion, which is the segment immediately following the retrieved context. The length of this segment, termed the \textit{retrieved future}, corresponds exactly to the prediction length we aim to forecast. This combination of the retrieved context and retrieved future forms the \textit{retrieved time series}. The retrieved time series, which includes both the retrieved context and the retrieved future, undergoes instance normalization. This step ensures that the data from the retrieved series is scaled appropriately, eliminating any discrepancies that may arise due to varying magnitudes in different time series. The same normalization process is also applied to the original context to maintain consistency. 

Once both the original context and the retrieved time series are normalized, we enforce continuity at the join by applying an additive offset to the retrieved segment so that its last value matches the first value of the context. Concretely, we shift the retrieved sequence by this offset and then concatenate the adjusted retrieved segment and the context end-to-start. This alignment prevents any abrupt changes or discontinuities that might negatively affect the performance of the models. After the alignment, we concatenate the retrieved time series and the normalized original context. This combined, augmented time series serves as the input for the TSFMs. Although our current setup employs a single retrieved time series ($k{=}1$), we examine the effects of multiple retrievals and present an ablation study comparing our approach with different retrieval settings such as retrieval-only baselines and matched input-length comparisons (see Appendix \ref{sec:ablation_ret} - Appendix \ref{sec:matched_length}).

\paragraph{Time-series models.}
We evaluate RAF on four TSFMs: \textbf{Chronos} \citep{ansari2024chronoslearninglanguagetime}, a probabilistic auto-regressive decoder, for which we report results on both \emph{Mini} and \emph{Base} variants to study size effects; \textbf{Moirai} \citep{woo2024unifiedtraininguniversaltime}, pretrained in a unified, cross-domain fashion; \textbf{TimesFM} \citep{das2024decoderonlyfoundationmodeltimeseries}, a decoder-only model trained on massive real-world time-series, providing zero-shot probabilistic forecasts; and \textbf{Lag-Llama} \citep{rasul2024lagllamafoundationmodelsprobabilistic}, a causal decoder-only architecture with lag-token conditioning.
In addition, we include three non-Transformer baselines: \textbf{DLinear} \citep{zeng2023are}, a decomposition-style linear forecaster; \textbf{GBDT (LightGBM)} \citep{Ke2017}, trained on fixed-length lag and calendar features; and \textbf{ARIMA}/auto-ARIMA \citep{box2015time,hyndman2008automatic} as a classical Box–Jenkins baseline.

\section{EXPERIMENTS}
\label{experiments}

This section provides an overview of the baselines and evaluation metrics, followed by our main results and evaluations. The code is available at: \url{https://github.com/kutaytire/Retrieval-Augmented-Time-Series-Forecasting}.

\subsection{Baselines and Evaluation Metrics}
\label{metrics}

Our first set of experiments compares Chronos models augmented with \textsc{RAF} to their non-retrieval baselines, both in the zero-shot setting (Tables~\ref{tab:avg_scores_benchmark_1_all}, \ref{tab:avg_scores_benchmark_2_all}) and after fine-tuning on the target dataset (Table~\ref{tab:fine_tune_results}). For a fair comparison, we evaluate across two benchmarks: Benchmark~\rom{1} sweeps four context lengths $C\!\in\!\{50,75,100,150\}$ and three horizons $H\!\in\!\{10,15,20\}$; Benchmark~\rom{2} uses $C\!\in\!\{10,15,18,21\}$ and $H\!\in\!\{3,4,5\}$.

In the second set of experiments, we focus on architecture-level generality. We pick four datasets from each benchmark (eight total) and evaluate four TSFMs (Chronos, Moirai, TimesFM, Lag-Llama). To isolate architectural effects, we fix $C{=}100$, $H{=}10$ for Benchmark~\rom{1} and $C{=}21$, $H{=}4$ for Benchmark~\rom{2} datasets. We then report the average percentage improvement of \textsc{RAF} for each model over the corresponding baseline (Table~\ref{tab:model_scores}).

\begin{table*}[t!]
    \centering
    \scriptsize 
    \renewcommand{\arraystretch}{1.15} 
    \caption{Average performance of RAF on Benchmark \rom{1} across three prediction lengths, $H \in \{10, 15, 20\}$, with context lengths $C \in \{50, 75, 100, 150\},$ evaluated using two models: Chronos Mini (m) and Chronos Base (b). We employ the WQL and MASE metrics as evaluation criteria.}
    \label{tab:avg_scores_benchmark_1_all}
    \setlength{\tabcolsep}{4pt} 
        \begin{tabular}{cc|c|cc|cc|cc|cc|cc|cc}
            \cline{2-15}
            &\multicolumn{2}{c|}{Datasets}& \multicolumn{2}{c|}{Weather}& \multicolumn{2}{c|}{Traffic} & \multicolumn{2}{c|}{ETTh1} & \multicolumn{2}{c|}{FRED-MD} & \multicolumn{2}{c|}{Covid Deaths} & \multicolumn{2}{c}{NN5} \\
            \cline{2-15}
            &\multicolumn{2}{c|}{Metric}&WQL&MASE&WQL&MASE&WQL&MASE&WQL&MASE&WQL&MASE&WQL&MASE\\
            \cline{2-15}
            &\multirow{4}*{\rotatebox{90}{m-RAF}}& 50 & \textbf{0.168} & \textbf{1.677} & 0.254 & 2.114 & \textbf{0.124} & \textbf{1.136} & \textbf{0.164} & \textbf{0.794} & \textbf{0.011} & \textbf{12.097} & \textbf{0.210} & \textbf{0.698} \\
            
            &\multicolumn{1}{c|}{}& 75 & \textbf{0.167} & \textbf{1.575} & \textbf{0.250} & \textbf{2.301} & \textbf{0.127} & \textbf{1.312} & \textbf{0.107} & \textbf{0.682} & \textbf{0.009} & \textbf{14.921} & \textbf{0.222} & 0.739 \\
            
            &\multicolumn{1}{c|}{}& 100 & \textbf{0.167} & \textbf{1.505} & \textbf{0.256} & \textbf{2.798} & \textbf{0.095} & \textbf{1.167} & \textbf{0.113} & \textbf{0.685} & \textbf{0.008} & 11.781 & \textbf{0.200} & \textbf{0.642} \\
            
            &\multicolumn{1}{c|}{}& 150 & \textbf{0.163} & 1.094 & \textbf{0.246} & \textbf{1.973} & \textbf{0.108} & \textbf{0.979} & \textbf{0.069} & \textbf{0.441} & \textbf{0.008} & 11.491 & 0.199 & 0.618 \\
            
            \cline{2-15}

            &\multirow{4}*{\rotatebox{90}{m-Base.}}& 50 & 0.173 & 1.729 & \textbf{0.253} & \textbf{1.994} & 0.148 & 1.250 & 0.191 & 0.940 & 0.015 & 12.362 & 0.212 & 0.702 \\
            
            &\multicolumn{1}{c|}{} & 75 & 0.173 & 1.606 & 0.266 & 2.400 & 0.142 & 1.369 & 0.164 & 0.791 & 0.012 & 15.574  & 0.223 & \textbf{0.729} \\
            
            &\multicolumn{1}{c|}{}& 100 & 0.172 & 1.507 & 0.267 & 2.854 & 0.144 & 1.457 & 0.131 & 0.724 & 0.010 & \textbf{11.456} & 0.205 & 0.670 \\
            
            &\multicolumn{1}{c|}{}& 150 & 0.166 & \textbf{1.070} & 0.261 & 2.058 & 0.163 & 1.253 & 0.070 & 0.497 & 0.012 & \textbf{11.401} & \textbf{0.193} & \textbf{0.597} \\
            
                        \cline{2-15}\\
                        \cline{2-15} 
                        
            &\multirow{4}*{\rotatebox{90}{b-RAF}}& 50 & \textbf{0.158} & \textbf{1.667} & \textbf{0.225} & \textbf{2.123} & \textbf{0.067} & \textbf{0.871} & \textbf{0.045} & \textbf{0.660} & \textbf{0.009} & \textbf{8.822} & \textbf{0.171} & \textbf{0.552} \\
            
            &\multicolumn{1}{c|}{}& 75 & \textbf{0.158} & \textbf{1.574} & \textbf{0.231} & \textbf{2.274} & \textbf{0.057} & \textbf{0.837} & \textbf{0.045} & \textbf{0.613} & 0.008 & \textbf{13.892} & \textbf{0.150} & \textbf{0.519}\\
            
            &\multicolumn{1}{c|}{}& 100 & \textbf{0.162} & \textbf{1.500} & \textbf{0.226} & \textbf{2.827} & \textbf{0.048} &\textbf{0.815} & \textbf{0.091} & \textbf{0.603} & \textbf{0.006} & \textbf{10.699} & \textbf{0.155} & \textbf{0.490} \\
            
            &\multicolumn{1}{c|}{}& 150 & \textbf{0.166} & 1.132 & \textbf{0.223} & 2.029 & \textbf{0.054} & \textbf{0.693} & 0.036 & \textbf{0.377} & \textbf{0.010} & \textbf{10.443} & \textbf{0.153} & \textbf{0.499} \\
            
            \cline{2-15}

            &\multirow{4}*{\rotatebox{90}{b-Base.}}& 50 & 0.158 & 1.784 & 0.246 & 2.319 & 0.110  & 1.059 & 0.209 & 0.917 & 0.021 & 10.037 & 0.196 & 0.609 \\
            
            &\multicolumn{1}{c|}{} & 75 & 0.160 & 1.613 & 0.234 & 2.443 & 0.117 & 1.175 & 0.148 & 0.710 & \textbf{0.007} & 15.343 & 0.180 & 0.577 \\
            
            &\multicolumn{1}{c|}{}& 100 & 0.166 & 1.569 & 0.235 & 2.870 & 0.127 & 1.266 &  0.095 & 0.675 & 0.007 & 11.345 & 0.180 & 0.533 \\
            
            &\multicolumn{1}{c|}{}& 150 & 0.166 & \textbf{1.129} & 0.225 & \textbf{2.006} & 0.122 & 1.034 & \textbf{0.030} & 0.464 & 0.010 & 10.907 & 0.185 & 0.533 \\

        \end{tabular}
\end{table*}

For evaluation, we assessed \textsc{RAF} on both probabilistic and point-forecast performance, as recommended by \citep{ansari2024chronoslearninglanguagetime}. We used the weighted quantile loss (WQL) to measure the quality of probabilistic forecasts; WQL quantifies how closely the predicted distribution matches the observed values across quantile levels. Following \citep{ansari2024chronoslearninglanguagetime}, we computed WQL at nine evenly spaced quantiles, $\{0.1, 0.2, \dots, 0.9\}$. For point forecasts, we used the mean absolute scaled error (MASE; \citealp{HYNDMAN2006679}), which scales the absolute forecast error by the average in-sample error while accounting for seasonality.

Finally, for the first set of experiments, we averaged WQL and MASE over three distinct prediction lengths for each dataset, yielding a single score per context length (as in \citealp{ansari2024chronoslearninglanguagetime}). Furthermore, we report per-dataset relative improvements (RAF vs.\ baseline) for each benchmark, with details in Appendix~\ref{app:agg}. For the second set, we simply present WQL and MASE at the fixed $H$ and $C$ values.

\subsection{Main Results}

We present primary results on two benchmarks—Benchmark~\rom{1} (6 datasets) and Benchmark~\rom{2} (5 datasets). Across both, \textsc{RAF} outperforms the baseline on Chronos Mini and Chronos Base, with larger absolute gains on the larger model, indicating a clear size effect. We further evaluate \textsc{RAF} with four TSFMs—Chronos (Base), Moirai (Base), TimesFM, and Lag-Llama—and observe consistent improvements in both WQL and MASE. Lastly, fine-tuning with retrieval yields additional gains in Benchmark~\rom{1}.

\smallskip\smallskip

\noindent $\bullet$ \textbf{Benchmark \rom{1}} is comprised of 6 datasets selected to evaluate the performance of \textsc{RAF} across various, longer context and prediction lengths. These datasets were not used by Chronos during training.
Table \ref{tab:avg_scores_benchmark_1_all} summarizes the probabilistic and point forecasting performance of \textsc{RAF} and the baseline across four different context lengths, with average scores across three prediction lengths, based on their MASE and WQL scores, as described in Section \ref{metrics}. Full results for each prediction length are provided in the appendix.

\begin{table*}[t!]
    \centering
    \scriptsize 
    \caption{Average performance of RAF on Benchmark \rom{2} across three prediction lengths, $H \in \{3, 4, 5\}$, with context lengths $C \in \{10, 15, 18, 21\},$ evaluated using two models: Chronos Mini (m) and Chronos Base (b). The performance of RAF in both models was evaluated across various datasets using WQL and MASE as metrics.}
    \label{tab:avg_scores_benchmark_2_all}

    \renewcommand{\arraystretch}{1.15} 
    \setlength{\tabcolsep}{4pt} 
        \begin{tabular}{cc|c|cc|cc|cc|cc|cc}
            \cline{2-13}
            &\multicolumn{2}{c|}{Datasets}& \multicolumn{2}{c|}{Tourism (M.)}& \multicolumn{2}{c|}{Tourism (Q.)} & \multicolumn{2}{c|}{M1 (M.)} & \multicolumn{2}{c|}{Uber TLC} & \multicolumn{2}{c}{CIF-2016} \\
            \cline{2-13}
            &\multicolumn{2}{c|}{Metric}&WQL&MASE&WQL&MASE&WQL&MASE&WQL&MASE&WQL&MASE\\
            \cline{2-13}
            &\multirow{4}*{\rotatebox{90}{m-RAF}}& 10 & \textbf{0.190} & \textbf{0.797} & \textbf{0.146} & \textbf{2.319} & \textbf{0.174} & \textbf{1.277} & \textbf{0.221} & \textbf{1.234} & \textbf{0.055} & 1.259 \\

            &\multicolumn{1}{c|}{}& 15 & 0.576 & 2.411 & \textbf{0.092} & \textbf{1.238} & 0.170 & 1.324 & \textbf{0.183} & \textbf{1.112} & \textbf{0.051} & \textbf{1.180} \\

            &\multicolumn{1}{c|}{}& 18 & \textbf{0.176} & \textbf{1.629} & \textbf{0.090} & \textbf{1.046} & \textbf{0.160} & \textbf{1.121} & \textbf{0.173} & \textbf{0.983} & \textbf{0.056} & \textbf{0.812} \\

            &\multicolumn{1}{c|}{}& 21 & \textbf{0.086} & \textbf{1.271} & \textbf{0.089} & \textbf{1.134} & \textbf{0.155} & \textbf{1.073} & \textbf{0.159} & \textbf{0.990} & \textbf{0.053} & \textbf{0.867}\\

            \cline{2-13}

            &\multirow{4}*{\rotatebox{90}{m-Base.}}& 10 & 0.613 & 0.931 & 0.214 & 3.066 & 0.201 & 1.300 & 0.271 & 1.369 & 0.056 & \textbf{1.136} \\

            &\multicolumn{1}{c|}{} & 15 & \textbf{0.543} & \textbf{2.278} & 0.134 & 1.510 & \textbf{0.168} & \textbf{1.300} & 0.215 & 1.235 & 0.060 & 1.290 \\

            &\multicolumn{1}{c|}{}& 18 & 0.357 & 1.736 & 0.126 & 1.263 & 0.168 & 1.155 & 0.203 & 1.092 & 0.057 & 0.963 \\

            &\multicolumn{1}{c|}{}& 21 & 0.287 & 1.317 & 0.095 & 1.163 & 0.160 & 1.120 & 0.167 & 1.044 & 0.066 & 0.927 \\
    
                    \cline{2-13}\\
                        \cline{2-13}
            &\multirow{4}*{\rotatebox{90}{b-RAF}}& 10 & \textbf{0.459} & \textbf{0.859} & 0.101 & 1.467 & \textbf{0.170} & \textbf{1.280} & \textbf{0.212} & \textbf{1.171} & \textbf{0.070} & \textbf{1.281} \\

            &\multicolumn{1}{c|}{}& 15 & \textbf{0.279} & \textbf{2.631} & \textbf{0.085} & \textbf{1.166} & \textbf{0.159} & \textbf{1.300} & 0.188 & \textbf{1.078} & \textbf{0.060} & \textbf{1.293} \\

            &\multicolumn{1}{c|}{}& 18 & \textbf{0.135} & \textbf{1.584} & \textbf{0.088} & \textbf{1.071} & \textbf{0.170} & \textbf{1.071} & \textbf{0.143} & \textbf{0.892} & \textbf{0.057} & \textbf{1.026} \\

            &\multicolumn{1}{c|}{}& 21 & \textbf{0.074} & \textbf{1.213} & \textbf{0.080} & \textbf{1.013} & \textbf{0.157} & \textbf{0.930} & \textbf{0.146} & \textbf{0.936} & 0.048 & \textbf{0.724} \\

            \cline{2-13}  

            &\multirow{4}*{\rotatebox{90}{b-Base.}}& 10 & 0.562 & 0.888 & \textbf{0.093} & \textbf{1.203} & 0.180 & 1.298 & 0.272 & 1.394 & 0.073 & 1.310 \\

            &\multicolumn{1}{c|}{} & 15 & 0.320 & 2.688 & 0.091 & 1.188 & 0.167 & 1.307 & \textbf{0.186} & 1.134 & 0.064 & 1.373 \\

            &\multicolumn{1}{c|}{}& 18 & 0.371 & 1.696 & 0.095 & 1.124 & 0.175 & 1.117 & 0.188 & 1.005 & 0.060 & 1.044 \\

            &\multicolumn{1}{c|}{}& 21 & 0.370 & 1.331 & 0.086 & 1.113 & 0.174 & 1.077 & 0.156 & 0.986 & \textbf{0.047} & 0.924 \\

        \end{tabular}
\end{table*}

In general, \textsc{RAF} demonstrates better performance than the baseline approach in terms of WQL and MASE, with smaller values indicating better predictions. For instance, when \textsc{RAF} is tested on Chronos Mini and Chronos Base, in datasets such as Weather, FRED-MD, and ETTh1, \textsc{RAF} consistently outperforms the baseline across all four context lengths, showing superior performance in both WQL and MASE metrics. The improvements are particularly significant for Chronos Base in ETTh1 and FRED-MD, where \textsc{RAF} achieves lower WQL and MASE values across all context lengths, reinforcing the consistent advantage of \textsc{RAF}. 

For datasets like Traffic and Covid Deaths, the improvements made by \textsc{RAF} are relatively small but still notable for both models. In particular, for Covid Deaths, \textsc{RAF} shows a modest but consistent advantage in WQL, especially at shorter context lengths. Meanwhile, for the NN5 dataset, \textsc{RAF} and the baseline show similar performance, with minimal differences in both WQL and MASE across all context lengths on average in Chronos Mini. However, this is not the case for Chronos Base, as NN5 is one of the datasets where the average improvement is significant. This idea is supported by the analysis shown in Figure \ref{fig:synthetic}, where the error values for Chronos Base were lower than those for Chronos Mini across various context lengths. Furthermore, Chronos Mini is unable to solve the TS-R problem in the synthetic setting, even in a noiseless environment as depicted in Figure \ref{fig:synthetic}. This discrepancy emphasizes that \textsc{RAF} demonstrates enhanced capabilities with larger models, underscoring the importance of model selection in \textsc{RAF}, particularly for handling longer context lengths.  Overall, \textsc{RAF} generally exhibits more consistent and superior performance across most datasets in both models, with some exceptions where the performance gap is narrower.
\smallskip \smallskip

\noindent $\bullet$ \textbf{Benchmark \rom{2}} is comprised of 5 datasets selected to evaluate the performance of \textsc{RAF} across various scenarios, but this time with much shorter context and prediction lengths. Table \ref{tab:avg_scores_benchmark_2_all} summarizes the average performance of \textsc{RAF} and the baseline on Benchmark \rom{2}, in terms of MASE and WQL scores. Full results for each prediction length are again provided in the Appendix \ref{app:extended_results}.

\begin{table*}[t]
\centering
\caption{Evaluation of RAF on four TSFMs (Chronos Base, Moirai Base, TimesFM, Lag-Llama) over two benchmarks, each with four datasets. Benchmark~I uses a long-context setting ($C=100$, $H=10$); Benchmark~II uses a short-context setting ($C=21$, $H=4$). \textbf{Bold} numbers indicate better performance for that model/metric.}
\label{tab:model_scores}
\resizebox{\textwidth}{!}{%
\scriptsize
\setlength{\tabcolsep}{3.5pt}
\renewcommand{\arraystretch}{1.1}
\begin{tabular}{@{}cl*{16}{c}@{}}
\toprule
& \multirow{3}{*}{Dataset}
  & \multicolumn{4}{c}{\textbf{Chronos}}
  & \multicolumn{4}{c}{\textbf{Moirai}}
  & \multicolumn{4}{c}{\textbf{TimesFM}}
  & \multicolumn{4}{c}{\textbf{Lag-Llama}} \\
\cmidrule(lr){3-6}\cmidrule(lr){7-10}\cmidrule(lr){11-14}\cmidrule(lr){15-18}
& & \multicolumn{2}{c}{Baseline} & \multicolumn{2}{c}{RAF}
  & \multicolumn{2}{c}{Baseline} & \multicolumn{2}{c}{RAF}
  & \multicolumn{2}{c}{Baseline} & \multicolumn{2}{c}{RAF}
  & \multicolumn{2}{c}{Baseline} & \multicolumn{2}{c}{RAF} \\
\cmidrule(lr){3-4}\cmidrule(lr){5-6}\cmidrule(lr){7-8}\cmidrule(lr){9-10}\cmidrule(lr){11-12}\cmidrule(lr){13-14}\cmidrule(lr){15-16}\cmidrule(lr){17-18}
& & WQL & MASE & WQL & MASE
  & WQL & MASE & WQL & MASE
  & WQL & MASE & WQL & MASE
  & WQL & MASE & WQL & MASE \\
\midrule
\multirow{4}{*}{\rotatebox[origin=c]{90}{\textbf{Bench.\,I}}}
& $\mathtt{ETTh1}$          & 0.076 & 0.851 & \textbf{0.025} & \textbf{0.551} & 0.077 & 1.030 & \textbf{0.076} & \textbf{0.971} & 0.094 & 1.009 & \textbf{0.090} & \textbf{0.911} & 0.123 & 1.388 & \textbf{0.116} & \textbf{1.258} \\
& $\mathtt{FRED\text{-}MD}$       & 0.095 & 0.595 & \textbf{0.074} & \textbf{0.572} & 0.028 & 0.592 & \textbf{0.020} & \textbf{0.563} & 0.080 & 0.502 & \textbf{0.023} & \textbf{0.475} & 0.115 & 1.238 & \textbf{0.106} & \textbf{1.188} \\
& $\mathtt{NN5}$            & 0.157 & 0.442 & \textbf{0.145} & \textbf{0.432} & 0.154 & 0.492 & \textbf{0.142} & \textbf{0.455} & \textbf{0.679} & \textbf{1.843} & 0.801 & 2.046 & 0.274 & 0.877 & \textbf{0.260} & \textbf{0.869} \\
& $\mathtt{Covid\ Deaths}$  & 0.004 & 5.425 & \textbf{0.004} & \textbf{5.293} & \textbf{0.006} & 7.408 & 0.007 & \textbf{6.915} & 0.016 & 11.589 & \textbf{0.012} & \textbf{9.400} & \textbf{0.074} & 22.087 & 0.078 & \textbf{18.164} \\
\midrule
\multirow{4}{*}{\rotatebox[origin=c]{90}{\textbf{Bench.\,II}}}
& $\mathtt{Tourism\ (Q.)}$  & 0.090 & 1.105 & \textbf{0.088} & \textbf{1.035} & 0.179 & 1.905 & \textbf{0.175} & \textbf{1.841} & \textbf{0.096} & 1.259 & 0.097 & \textbf{1.216} & 0.240 & 3.156 & \textbf{0.203} & \textbf{2.799} \\
& $\mathtt{M1}$             & 0.172 & 1.067 & \textbf{0.140} & \textbf{0.880} & \textbf{0.169} & 1.147 & 0.174 & \textbf{1.128} & 0.197 & 1.194 & \textbf{0.177} & \textbf{1.073} & \textbf{0.198} & 1.335 & 0.200 & \textbf{1.282} \\
& $\mathtt{Uber\ TLC}$      & 0.150 & 0.954 & \textbf{0.145} & \textbf{0.876} & 0.206 & 1.110 & \textbf{0.202} & \textbf{1.104} & 0.130 & 0.710 & \textbf{0.120} & \textbf{0.658} & 0.272 & 1.379 & \textbf{0.251} & \textbf{1.346} \\
& $\mathtt{CIF\text{-}2016}$      & 0.040 & 0.954 & \textbf{0.038} & \textbf{0.692} & 0.051 & 0.896 & \textbf{0.038} & \textbf{0.844} & 0.073 & \textbf{1.150} & \textbf{0.048} & 1.214 & 0.059 & 1.294 & \textbf{0.051} & \textbf{1.281} \\
\midrule
\multicolumn{2}{@{}l}{\textit{avg.\ impr.}}
& & & \textbf{+15.8\%} & \textbf{+12.4\%}
& & & \textbf{+6.0\%}  & \textbf{+4.5\%}
& & & \textbf{+16.7\%} & \textbf{+4.8\%}
& & & \textbf{+6.1\%}  & \textbf{+6.4\%} \\
\bottomrule
\end{tabular}%
}
\end{table*}

\begin{table*}[t]
\centering
\caption{Evaluation of RAF on non-Transformer baselines (DLinear, GBDT with LGBM, ARIMA) over two benchmarks, each with four datasets. Benchmark~I uses a long-context setting ($C=100$, $H=10$); Benchmark~II uses a short-context setting ($C=21$, $H=4$). \textbf{Bold} numbers indicate better performance for that model/metric.}
\label{tab:nontransformer_baselines_raf}
\resizebox{\textwidth}{!}{%
\scriptsize
\setlength{\tabcolsep}{3.5pt}
\renewcommand{\arraystretch}{1.1}
\begin{tabular}{@{}cl*{12}{c}@{}}
\toprule
& \multirow{3}{*}{Dataset}
  & \multicolumn{4}{c}{\textbf{DLinear}}
  & \multicolumn{4}{c}{\textbf{GBDT (LGBM)}}
  & \multicolumn{4}{c}{\textbf{ARIMA}} \\
\cmidrule(lr){3-6}\cmidrule(lr){7-10}\cmidrule(lr){11-14}
& & \multicolumn{2}{c}{Baseline} & \multicolumn{2}{c}{RAF}
  & \multicolumn{2}{c}{Baseline} & \multicolumn{2}{c}{RAF}
  & \multicolumn{2}{c}{Baseline} & \multicolumn{2}{c}{RAF} \\
\cmidrule(lr){3-4}\cmidrule(lr){5-6}\cmidrule(lr){7-8}\cmidrule(lr){9-10}\cmidrule(lr){11-12}\cmidrule(lr){13-14}
& & WQL & MASE & WQL & MASE
  & WQL & MASE & WQL & MASE
  & WQL & MASE & WQL & MASE \\
\midrule
\multirow{4}{*}{\rotatebox[origin=c]{90}{\textbf{Bench.\,I}}}
& $\mathtt{ETTh1}$          & \textbf{0.460} & \textbf{3.891} & 0.471 & 3.940 & 0.271 & 1.968 & 0.271 & 1.968 & \textbf{0.111} & \textbf{1.471} & 0.221 & 1.777 \\
& $\mathtt{FRED\text{-}MD}$      & 0.296 & 1.120 & \textbf{0.270} & \textbf{1.029} & \textbf{0.507} & \textbf{1.640} & 0.520 & 1.634 & \textbf{0.029} & \textbf{0.495} & 0.056 & 0.513 \\
& $\mathtt{NN5}$            & 0.748 & 1.872 & \textbf{0.740} & \textbf{1.852} & \textbf{0.189} & \textbf{0.506} & 0.190 & 0.508 & 0.532 & 1.492 & \textbf{0.519} & \textbf{1.410} \\
& $\mathtt{Covid\ Deaths}$  & 0.064 & \textbf{13.942} & \textbf{0.058} & 14.578 & 0.059 & 18.790 & 0.059 & 18.790 & 0.003 & 7.117 & \textbf{0.003} & \textbf{7.069} \\
\midrule
\multirow{4}{*}{\rotatebox[origin=c]{90}{\textbf{Bench.\,II}}}
& $\mathtt{Tourism\ (Q.)}$  & 0.372 & 3.900 & \textbf{0.348} & \textbf{3.789} & 0.115 & \textbf{1.200} & \textbf{0.114} & 1.201 & 0.257 & 2.679 & \textbf{0.166} & \textbf{1.886} \\
& $\mathtt{M1}$             & \textbf{0.248} & \textbf{1.622} & 0.257 & 1.692 & \textbf{0.174} & \textbf{1.198} & 0.174 & 1.203 & \textbf{0.209} & 1.215 & 0.219 & \textbf{1.157} \\
& $\mathtt{Uber\ TLC}$      & \textbf{0.287} & \textbf{1.464} & 0.314 & 1.624 & \textbf{0.166} & 0.943 & 0.167 & \textbf{0.942} & \textbf{0.188} & \textbf{1.023} & 0.205 & 1.249 \\
& $\mathtt{CIF\text{-}2016}$   & 0.256 & 1.745 & \textbf{0.248} & \textbf{1.740} & 0.284 & 1.630 & 0.284 & 1.630 & 0.124 & \textbf{0.864} & \textbf{0.073} & 0.875 \\
\midrule
\multicolumn{2}{l}{\mbox{\textit{avg.\ impr.\ }}}
&  &                       % DLinear Baseline WQL
& \textbf{+1.7\;\%} & \textbf{-1.1\;\%}   % DLinear RAF
&  &                       % GBDT Baseline WQL
& \textbf{-0.35\;\%} & \textbf{-0.05\;\%}   % GBDT RAF
&  &                       % ARIMA Baseline WQL
& \textbf{-15.9\;\%} & \textbf{-0.9\;\%}   % ARIMA RAF
\\
\bottomrule
\end{tabular}%
}
\end{table*}

Similar to Benchmark \rom{1}, RAF demonstrates superior performance compared to the baseline on Benchmark \rom{2}. For Chronos Mini, in both the Tourism (Quarterly) and Uber TLC datasets, RAF outperforms the baseline across all four context lengths on average, yielding better results in both WQL and MASE. The same dominance in performance is also apparent in the Uber TLC dataset in Chronos Base, along with Tourism (Monthly). Even in the other datasets, such as Tourism (Monthly), M1, and CIF-2016, \textsc{RAF} maintains a significantly better performance than the baseline for both models. Notably, the only dataset seen during the training of Chronos was Uber TLC, which may have contributed to \textsc{RAF}'s particularly strong performance on this dataset. This observation suggests that while \textsc{RAF} generalizes well to unseen datasets, exposure to a specific dataset during training may further enhance its predictive accuracy for that dataset. This finding prompted us to explore the impact of \textsc{RAF} on fine-tuning models for better performance.

To assess overall performance gains across datasets and both benchmarks, we also report per-dataset relative improvements, as mentioned in Section~\ref{metrics}. Figures~\ref{fig:mase_performance} and \ref{fig:wql_performance} (Appendix~\ref{app:agg}) summarize the aggregate relative MASE and WQL values for Chronos Mini and Chronos Base. For Chronos Mini, datasets in Benchmark~\rom{2} generally benefit more from time-series retrieval than those in Benchmark~\rom{1}, as indicated by lower relative MASE/WQL values in Benchmark~\rom{2}. On the other hand, Chronos Base shows larger gains on Benchmark~\rom{1}, which features longer context and prediction lengths; its greater capacity appears to better exploit retrieved motifs under these settings, yielding lower relative MASE/WQL. This pattern suggests that retrieval is more effective with shorter contexts for smaller models (where nearest matches are easier to identify), while larger models can capitalize on retrieval even when contexts are long. Conversely, Benchmark~\rom{1} exhibits greater variability for Chronos Mini, indicating potential challenges in retrieving optimal matches for longer contexts in certain datasets.

Overall, RAF improves the performance of both Chronos models across benchmarks. For Chronos Mini, the aggregate relative scores (lower is better) are 0.887 (WQL) and 0.950 (MASE) on Benchmark~\rom{1}, and 0.762 (WQL) and 0.925 (MASE) on Benchmark~\rom{2}. For Chronos Base, the corresponding scores are 0.733 (WQL) and 0.880 (MASE) on Benchmark~\rom{1}, and 0.821 (WQL) and 0.944 (MASE) on Benchmark~\rom{2}. The overall averages across all datasets further confirm these gains: 0.823 (WQL) and 0.939 (MASE) for Chronos Mini, and 0.772 (WQL) and 0.908 (MASE) for Chronos Base (Appendix~\ref{app:agg}). These results suggest that more complex models, such as Chronos Base, can better leverage retrieved information under longer context lengths.

\smallskip \smallskip

\noindent\textbf{Across Different TSFMs.}
The results of RAF across four TSFMs are reported in Table~\ref{tab:model_scores}. RAF’s gains are not confined to a single backbone: it reduces both WQL and MASE for all four models despite their differing design choices. Averaged over datasets, the largest WQL improvements occur with TimesFM (+16.7\%) and Chronos (+15.8\%), suggesting that retrieval particularly benefits large models that were not fine-tuned on these benchmarks. For MASE, the largest relative reductions are on Chronos (+12.4\%) and Lag-Llama (+6.4\%), indicating that RAF also sharpens point forecasts. These cross-architecture gains imply that retrieval provides complementary, task-specific signals beyond each model’s inductive biases, exposing patterns not retained from pretraining.

Still, per-dataset results show that the magnitude of improvement varies with the data regime: large, seasonal datasets such as ETTh1, M1, and Tourism benefit strongly and consistently, whereas noisier series (e.g., Covid Deaths) exhibit smaller or mixed effects. This pattern aligns with our hypothesis that RAF excels when “nearby” motifs exist in the time-series pool and the base model can exploit them once surfaced. Overall, the averaged gains confirm that RAF is a broadly applicable plug-in: it lowers both probabilistic and point errors across diverse TSFMs without any architecture-specific engineering.

\smallskip \smallskip

\noindent\textbf{Across Non-Transformer Baselines.}
Table~\ref{tab:nontransformer_baselines_raf} shows that adding RAF to classic non-attention models yields little to no average benefit and can even hurt performance.
On DLinear, RAF slightly lowers WQL \,(+1.7\%) yet raises MASE \,(-1.1\%).
GBDT with LGBM is effectively unchanged: WQL \,(-0.35\%) and MASE \,(-0.05\%).
ARIMA degrades on average: WQL \,(-15.9\%) and MASE \,(-0.9\%).
Per-dataset outcomes are mixed as a few datasets show small gains, but the pattern is not consistent.
This aligns with our expectation that retrieval helps when the backbone can attend to and fuse retrieved motifs at inference time.
Linear filters such as DLinear, tree ensembles over fixed windows such as GBDT, and parametric ARIMA processes do not natively integrate cross-series context; retrieved segments behave as weak covariates and can disturb inductive biases, which often affects probabilistic calibration more than point accuracy, hence the larger swings in WQL. Complete experimental setups for the models are provided in Appendix \ref{app:experimental_details}.

\smallskip \smallskip

\noindent\textbf{Fine-tuning evaluations.} Motivated by the strong zero-shot performance of \textsc{RAF}, we fine-tuned Chronos Base and Mini on each Benchmark~\rom{1} dataset, comparing runs \emph{with} and \emph{without} retrieval (details in Appendix~\ref{app:finetune_results}). Each model was fine-tuned only on the dataset on which it was evaluated.

As shown in Table~\ref{tab:fine_tune_results} (Appendix~\ref{app:finetune_results}), \emph{Advanced RAF} achieves the lowest WQL/MASE on most datasets—e.g., on NN5, MASE drops from 0.563 to 0.401 for Chronos Mini and from 0.481 to 0.378 for Chronos Base—demonstrating that retrieval and fine-tuning are complementary. Exceptions exist: on Covid Deaths (Chronos Mini), Baseline FT outperforms Advanced RAF on both metrics. Notably, even without fine-tuning, the baseline exceeds Naïve RAF for this dataset under the chosen $(C,H)$, suggesting that fine-tuning may be unnecessary when retrieval adds limited signal.

\smallskip \smallskip

\noindent \textbf{Robustness to data corruption.} To evaluate how RAF behaves when the input data are imperfect, we run controlled stress tests with Gaussian noise and missing-at-random (MCAR) corruption on Chronos-Base (Appendix~\ref{sec:limitations}). RAF continues to outperform the baseline under moderate corruption, specifically for noise levels up to $\sigma \leq 0.6$ and for missing-data rates up to 60\%. Beyond these levels, performance declines gradually rather than failing abruptly. Interestingly, a small amount of noise ($\sigma = 0.2$) gives the largest relative gain, reducing WQL by 9.64\%, which suggests that slight perturbations may improve retrieval diversity. Overall, the main limit is not the noise level itself, but whether the corrupted context still preserves enough structural information for meaningful nearest-neighbor retrieval

\section{CONCLUSION}
\label{sec:conclusion}

In this paper, we have introduced the RAF framework for time series foundation models, which leverages retrieval augmentation and fine-tuning to enhance forecast accuracy. By incorporating external, domain-specific knowledge during inference through retrieval, RAF provides substantial improvements in both probabilistic and point forecasting tasks. Furthermore, we have validated the generality of RAF across four TSFMs—Chronos (Mini/Base), Moirai, TimesFM, and Lag-Llama—observing consistent improvements in both WQL and MASE. Finally, we have explored two variants: Naive RAF, which uses TSFMs as black boxes without modifying their weights, and Advanced RAF, which fine-tunes models for enhanced retrieval integration. 
We have evaluated the performance of both Naive and Advanced RAF across diverse datasets and settings.

Our experimental results demonstrate that both Naive and Advanced RAF outperform the standard baseline approach on average. These findings highlight the flexibility and robustness of the RAF framework in improving time series forecasting performance, particularly in scenarios that require adapting to varying historical data contexts and forecasting needs. Importantly, our study has also revealed that model size matters: Chronos Mini fails to solve simple synthetic retrieval tasks and larger models benefit more from retrieval-augmented forecasting both in synthetic and real experiments. 
As a future perspective, we propose expanding the RAF framework to handle multi-channel predictions.

\section*{Acknowledgements}
This work is supported by the National Science Foundation grants CCF-2046816, CCF-2403075, CCF-2212426, the Office of Naval Research grant N000142412289, a gift from Open Philanthropy, and an Adobe Data Science Research Award. The computational aspects of the research are generously supported by computational resources provided by the Amazon Research Award on Foundation Model Development.
\bibliography{references}

\clearpage

\appendix
\onecolumn

\section{Theoretical Results}
\label{app:theoretical_results}

\subsection{TS-R Problem}
In Section \ref{sec:problem_setup}, we asserted that a two-layer transformer architecture can solve the TS-R problem with mild assumptions employed by various transformer-based time-series architectures. In the below theorem, we prove that with patching, a 2-layer transformer architecture can indeed solve the TS-R problem. Note that our proof is based on the literature on the \textit{nearest neighbor retrieval}, where a previous line of work \citep{olsson2022incontextlearninginductionheads,gu2024mambalineartimesequencemodeling} has shown that softmax attention can implement nearest neighbor retrieval.

\textbf{Setting.} We use the definitions in Section \ref{sec:problem_setup}. Given $\vx\in\R^{L}$ denoting a univariate time series of length $L$, and with $C$ denoting the context length, we extract patches of size $C$ with a sliding window of size $1$. Furthermore, we assume $H\leq C$. Without loss of generality, from here on we assume $H=C$ as we can trim the output after the retrieval.  Overall, we get a patched input sequence $X = [x_1 \quad x_2 \quad ... \quad x_{L-C+1}]$. Moreover, we embed each $x_i$ to $\Bar{x}_i := [\frac{x_i^T}{\tn{x_i}} \quad \tn{x_i} ]^T \in \R^{C+1}$. That is, we embed each $x_i$ so that we store the direction and the magnitude in separate dimensions (there is a clear bijective mapping that is inverse of this embedding, denoted by $\vg$). Thus, with this mapping, we define the $\Bar{X} := [\Bar{x}_1 \quad \Bar{x}_2 \quad ... \quad \Bar{x}_{L-C+1}]$. Based on $\Bar{X}$, we define the token embedding matrix as $\mX := [\Bar{x}_1 \quad \Bar{x}_2 \quad ... \quad \Bar{x}_{L-C+1}]^T \in \R^{(L-C+1)\times (C+1)}$. Note that based on $X$, we can recover each $x_i$. 

Moreover, let $\vp_i$ be fixed positional encodings, denoting the positions of the tokens. Define $\mX_{PE} := [\Bar{x}_1 + \vp_1 ...  \quad \Bar{x}_{L-C+1}  +\vp_{L-C+1} ]^T$. For the ease of notation, we use $\mX = \mX_{PE}$ from here on.

\begin{assumption}\label{assump AR}
We assume that the positional encodings $(\vp_i)_{i=1}^{L-C+1}$ have unit $\ell_2$ norm and are unique. Moreover, we assume that positional encodings are orthogonal to tokens $\vp_i^T \Bar{x}_{j} = 0$ (if there are no such token positional encodings, without loss of generality, we can just concatenate $\vx_i$ with $0$ vectors of required size). In addition to that, we assume that retrieved token positions are rotated versions of the value positions. That is, there is a unitary matrix $\mR$ such that $\vp_{i+C} = \mR \vp_{i}$ for all $1 \leq i \leq L-2C+1$.    
\end{assumption}

As in the retrieval, given a matching motif patch, the value is in $C$ forward patches, we put the above rotational assumption with rotation value as $C$. Note that this idea is also employed in one of the most popular positional embedding strategies, namely Rotational Positional Encoding (RoPE) \citep{su2023roformerenhancedtransformerrotary}.   

%Note that if the positional embeddings are selected as standard basis vectors with dimension $L-C+1$, all of the conditions in Assumption \ref{assump AR} are satisfied. There may be some other positional embedding selections that may satisfy Assumption \ref{assump AR} with fewer dimensions.
Moreover, denote the projection matrix associated to the token embeddings via $\mPhi$, where $\mPhi \Bar{x}_1 = \Bar{x}_1 $ and $\mPhi \vp_i = 0$. This means $\mPhi^{\perp} = \mI - \mPhi$, implying that  $\mPhi^{\perp} \Bar{x}_1 = 0 $ and $\mPhi^{\perp} \vp_i = \vp_i$.

\textbf{Attention model.} We consider a 2-layer attention as $\mX_{tr} = \mX_{0:L-C, :}$, that is the truncated version of $\mX$ where we remove the last row. $\mN$ is the diagonal normalization matrix  that normalizes each row of $\mX_{tr}$ to be unit norm.  Moreover, In the first attention layer, we write $\Tilde{\vx} = f_1(x_{L-C+1}) = \mX_{tr}^T \mathbb{S}(\mN \mX_{tr}  \mW_1 \frac{x_{L-C+1}}{\tn{x_{L-C+1}}})$ and for the second attention layer we write $f_2(\Tilde{\vx}) = \mX_{tr}^T \mathbb{S}(\mX_{tr} \mW_2 \Tilde{\vx} )$.

\begin{theorem}[TS-R Problem]
\label{theorem:ts-r}
Consider the TS-R problem as described in Section \ref{sec:problem_setup}, Definition 1. Moreover, assume the setting, assumptions, and the attention model above. That is, given a time series of length $L$, i.e. $\vx \in \R^{L}$, that is patched and ending with motif $\vx_{L-C+1}$ of size $C$ and has a unique matching motif in the time series, followed by $\Upsilon$ of size $C=H$. Moreover:

\begin{enumerate}
    \item Set $\mW_1 = c.\mPhi$
    \item Set $\mW_2 = c.\mPhi^{\perp}\mR \mPhi^{\perp} $
\end{enumerate}
As $c \rightarrow \infty$, we have $\vg(f_2(f_1(x_{L-C+1}))) \rightarrow \Upsilon$.
\end{theorem}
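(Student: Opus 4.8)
The plan is to trace the action of the two attention layers on the query token $\Bar x_{L-C+1}=[\,\vm^\top/\tn{\vm}\;\;\tn{\vm}\,]^\top+\vp_{L-C+1}$ and show that, in the limit $c\to\infty$, the softmax in each layer becomes a hard $\arg\max$ that selects exactly the right token. First I would record the effect of the projections on the embedding: $\mPhi$ keeps the ``content'' part $\Bar x_i$ and kills the positional part $\vp_i$, while $\mPhi^{\perp}$ does the reverse; since $\tn{\Bar x_i}=\tn{[\,\vx_i^\top/\tn{\vx_i}\;\;\tn{\vx_i}\,]}=\sqrt{1+\tn{\vx_i}^2}$ is in general not constant, this is why one normalizes with $\mN$ before the first softmax — so that the first-layer score of token $i$ against the query is, up to the scalar $c$, the normalized inner product $\langle \Bar x_i/\tn{\Bar x_i},\,\Bar x_{L-C+1}/\tn{\Bar x_{L-C+1}}\rangle$, which by Cauchy–Schwarz is maximized (and equal to $1$) precisely when $\Bar x_i$ is a positive multiple of $\Bar x_{L-C+1}$, i.e. when $\vx_i=\vm$. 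By the uniqueness hypothesis in Definition~\ref{def tsr}, the only index in the truncated set $\{1,\dots,L-C\}$ achieving this is $i=t$. Hence as $c\to\infty$ the first softmax converges to the indicator at $t$, and $\Tilde\vx=f_1(x_{L-C+1})\to \mX_{tr}^\top e_t = \Bar x_t = [\,\vm^\top/\tn{\vm}\;\;\tn{\vm}\,]^\top+\vp_t$.

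Next I would feed $\Tilde\vx\to\Bar x_t+\vp_t$ (I keep writing $\Bar x_t$ for the content part) into the second layer. Here $\mW_2=c\,\mPhi^{\perp}\mR\,\mPhi^{\perp}$ is designed to ignore the content part and act only on the positional part: $\mPhi^{\perp}\Tilde\vx=\vp_t$, then $\mR\vp_t=\vp_{t+C}$ by Assumption~\ref{assump AR}, and $\mPhi^{\perp}$ again leaves $\vp_{t+C}$ fixed. So the second-layer score of token $i$ is $c\,\langle \Bar x_i+\vp_i,\ \vp_{t+C}\rangle = c\,\langle\vp_i,\vp_{t+C}\rangle$ using orthogonality of positional encodings to tokens; since the $\vp_i$ are unit-norm and distinct (hence pairwise inner products strictly below $1$), this is maximized uniquely at $i=t+C$. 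Thus as $c\to\infty$ the second softmax converges to the indicator at $t+C$ and $f_2(\Tilde\vx)\to \mX_{tr}^\top e_{t+C}=\Bar x_{t+C}=[\,\vx_{t+C}^\top/\tn{\vx_{t+C}}\;\;\tn{\vx_{t+C}}\,]^\top$ (the index $t+C\le L-C$ lies in the truncated range because $\Upsilon$ of length $C=H$ follows the matching motif). Finally, applying the inverse embedding $\vg$, which reconstructs a patch from its (direction, magnitude) pair, gives $\vg(f_2(f_1(x_{L-C+1})))\to \vx_{t+C}=\Upsilon$, since $\vx[t+C,\,t+2C-1]=\vx[t+C,\,t+C+H-1]=\Upsilon$ is the motif following the match. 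A small point to be careful about: the positional encoding is added to the query before the first layer too, but $\mW_1=c\mPhi$ annihilates it, so it does not disturb the content-matching step; similarly $\vp_{L-C+1}$ in $\Tilde\vx$ is harmless since after $f_1$ we only carry forward a convex combination that is, in the limit, $\Bar x_t$ plus a positional term that $\mW_2$ then strips.

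The main obstacle — and the part that deserves the most care — is the limiting argument for the two softmaxes: one must argue that the $\arg\max$ is \emph{strict} so that the softmax genuinely concentrates. For the first layer this rests on the strict case of Cauchy–Schwarz together with the uniqueness assumption (one should note that a non-matching patch $\vx_i$ could still be parallel to $\vm$ only if $\vx_i$ is a positive scalar multiple of $\vm$ with the same norm, which forces $\vx_i=\vm$, contradicting uniqueness — so no spurious ties arise); for the second layer it rests on $\vp_i$ being unit-norm and distinct, so $\langle\vp_i,\vp_{t+C}\rangle<1=\langle\vp_{t+C},\vp_{t+C}\rangle$ strictly. Once strictness is in hand, the standard fact that $\mathbb{S}(c\,\vu)\to e_{\arg\max \vu}$ as $c\to\infty$ closes each step, and continuity of $\vg$ transfers the limit through the inverse embedding. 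I would also state explicitly the harmless bookkeeping that removing the last row of $\mX$ (the truncation defining $\mX_{tr}$) only deletes the query token itself and the impossible index $L-C+1>t$, so nothing needed for retrieval is lost.
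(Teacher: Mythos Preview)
Your proposal is correct and follows essentially the same two-step argument as the paper: use $\mW_1=c\mPhi$ so the first attention layer performs content matching and the saturated softmax picks out the unique matching index $t$, then use $\mW_2=c\mPhi^{\perp}\mR\mPhi^{\perp}$ so the second layer performs a positional shift by $C$ and the saturated softmax picks out $t+C$, after which $\vg$ recovers $\Upsilon$. Your write-up is in fact more careful than the paper's (you explicitly justify strictness of each $\arg\max$ via Cauchy--Schwarz and the distinctness of the unit-norm $\vp_i$'s, and you check that $t+C$ lies in the truncated index range), so nothing is missing relative to the paper's own proof.
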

\begin{proof}
    Realize that with $\mW_1$, the positional encodings will be ignored and only token embeddings will remain. Moreover, $\mN \mX_{tr}  \mPhi \frac{x_{L-C+1}}{\tn{x_{L-C+1}}}$ will return a vector $\vv$ of size $L-C$ with the largest element at $j$, for index j, corresponding to the matching retrieval motif. As $c\rightarrow \infty$, the softmax will be saturated. Thus,  $\mathbb{S}(c.\vv) = \ve_j$ as $c \rightarrow \infty$. We get, $X_{tr} \ve_j = x_j$. In the second layer, realize that due to $\mPhi^{\perp}$, this time token embeddings will be ignored and $\mR$ aligns the token embeddings so that the future context of the retrieved element index can be returned. That is, $ \mX_{tr} \mPhi^{\perp}\mR \mPhi^{\perp} x_j $ will return a vector $\Bar{\vv}$ of size $L-C$ with the largest element at $j+C$. As $c\rightarrow\infty$, the softmax will be saturated, resulting in $\mathbb{S}(c.\Bar{\vv})= \ve_{j+C}$. Hence, we get $X_{tr} \ve_{j+C} = x_{j+C}$. Note that $\vg(x_{j+C}) = \Upsilon$, completing the proof.
\end{proof}
This theorem provides a construction of a 2-layer attention model that retrieves $\Upsilon$. 

\subsection{Synthetic Retrieval Experiment}
Here, we provide details about our synthetic retrieval experiment, as illustrated in Figure \ref{fig:synthetic}. We introduce randomness in our experimental setup through $\mathbf{Q}$, a randomly sampled orthonormal matrix for each $\vs$. Consequently, each data generation process involves $L^2$ learnable parameters, ensuring that for time series of length $C \leq L$, the data remains essentially random for Chronos. This effect is clearly observable in the non-RAF results presented in Figure \ref{fig:synthetic}. However, when RAF is employed, even the smaller Chronos models with retrieval capabilities exhibit significantly enhanced performance, despite the context length being relatively small compared to $L^2$.

Note that since Chronos is stochastic, we sampled forecasts from Chronos 20 times and took their mean values for each query, an approach also suggested by the Chronos paper. We assessed the retrieval performances under varying signal-to-noise ratios, as depicted in Figure \ref{fig:synthetic}. For each signal, we sampled many time series signals and averaged the metrics for all of them. From Figure \ref{fig:synthetic}, it is clear why Chronos-base outperforms Chronos-mini on retrieval tasks. In noisy settings, we generally observe that larger models perform retrieval better than smaller models, a trend also evident in experiments with real data.

\section{Experimental Details}
\label{app:experimental_details}

\subsection{Experimental Setup for Fine-tuning the Chronos Models}

\begin{table}[htbp]
\centering
\caption{Summary of hyperparameter settings used for Chronos fine-tuning}
\label{exp_details}
\begin{tabular}{l|c|c}
\hline
\textbf{Hyper-parameter Name} & \textbf{Baseline FT} & \textbf{Advanced RAF} \\
    \hline
    Database Formation/Validation/Testing & 0.7 / 0.1 / 0.2 & 0.7 / 0.1 / 0.2 \\
    Chronos Models & Mini, Base & Mini, Base \\
    Prediction Length & 10 & 10 \\
    Context Length & 75 & 160 \\
    Minimum Past & 30 & 60 \\
    Number of Epoch & 400 (Mini)/ 1000 (Base) & 400 (Mini)/ 1000 (Base) \\
    Learning Rate & 0.00001 & 0.00001 \\
    Number of Generated Samples & 20 & 20 \\
    LR Scheduler & Linear & Linear \\
    Optimizer & AdamW & AdamW \\
    Gradient Accumulation Steps & 1 & 1 \\
    Dropout Probability & 0.2 & 0.2 \\
    \hline
    \end{tabular}
\end{table}

The experimental details for the fine-tuning process are given in Table \ref{exp_details}. The table provides a summary of the hyperparameter settings used for fine-tuning two approaches: Baseline FT and Advanced RAF. Both approaches employ a data split of 70\% for database formation (used only for Advanced RAF), 10\% for validation on which the approaches are fine-tuned, and 20\% for testing. The Chronos Mini and Chronos Base models are used in both setups, with a prediction length of 10. However, there are key differences between the two approaches: the Baseline FT approach uses a context length of 75, while the Advanced RAF uses a longer context length of 160 due to the concatenation of the retrieved context and retrieved future. Still, the approaches are evaluated on the same samples during the test time. Additionally, the minimum past time steps considered are 30 for Baseline FT and 60 for Advanced RAF. This represents the minimum number of time steps that must be retained prior to the introduction of NaN values during the training process.

Despite these differences, other hyperparameters remain consistent between the two approaches. Both approaches fine-tune Chronos Mini for 400 epochs and Chronos Base for 1000 epochs, with a learning rate of 0.00001. Each approach generates 20 samples during fine-tuning, as maintained in \cite{ansari2024chronoslearninglanguagetime}, and employs a linear learning rate scheduler. The optimizer for both of them is AdamW, which incorporates weight decay for regularization. Gradient accumulation occurs after each step (set to 1), and a dropout probability of 0.2 is applied to both approaches. The experiments were conducted using NVIDIA A100 40GB and L40S 48GB GPUs. Finally, for the reproducibility of the results, the seed for dataset splitting is fixed at 42 throughout every experiment.

\subsection{Experimental Setup for other TSFMs}
We evaluate three pretrained TSFMs using a common retrieval pipeline in which \emph{all} retrieval embeddings are produced by the Chronos encoder for consistency. For Moirai , we instantiate the Mixture-of-Experts checkpoint \texttt{Salesforce/moirai-1.1-R-large} with automatic patching (\texttt{patch\_size=auto}), and sample $20$ trajectories per query; dynamic feature dimensions are passed directly from the dataset wrapper. For Lag-Llama and TimesFM, we load the official checkpoints (Lag-Llama from the authors’ repository; TimesFM via \texttt{google/timesfm-2.0-500m-pytorch}) and apply them in zero-shot mode, again conditioning on the same Chronos-derived retrieval embeddings. Unless stated otherwise, we do not perform additional fine-tuning on these TSFMs; all models are evaluated on the common test windows used throughout the paper.

\subsection{Experimental Setup for Non-Transformer Baselines}
For DLinear, where no public checkpoint was available, we fine-tuned the model on each dataset’s validation split for 100 epochs with a learning rate of $10^{-3}$ and then used the resulting checkpoint for comparing RAF and the baseline. This process was done separately for each dataset we evaluated. For GBDT (LightGBM), we constructed lag features $\{1,2,3,7,14,28\}$ from the validation split and trained a single multi-output regressor per dataset with \texttt{n\_estimators}$=1000$ and \texttt{max\_depth}$=64$ while keeping other settings at defaults. For ARIMA, we fit one model per series using StatsForecast AutoARIMA with season length inferred from the data frequency.

\newpage
\section{Ablation Study on Retrieval Database Formation}
\label{app:database}

In many real‐world forecasting applications, the target dataset may contain too few series to assemble an effective retrieval database solely from in‐dataset examples. To address this limitation, we evaluate two alternative retrieval strategies: (i) drawing the best-matching motif from a different dataset within the same domain (same‐domain retrieval), and (ii) using a fully cross‐domain dataset. This comparison allows us to determine whether related datasets can serve as practical substitutes when in‐dataset retrieval is infeasible, and to quantify the performance trade‐offs as domain alignment weakens.

\begin{table}[ht]
\centering
\caption{Performance of RAF on the \emph{Traffic} dataset with prediction length $H=15$, evaluated across four context lengths using various retrieval databases. Traffic and Uber TLC both belong to the transportation domain (same domain), whereas NN5 originates from the finance domain (cross-domain). Datasets achieving the \textbf{first} and \underline{second} best scores have been highlighted. }
\label{tab:traffic_retrieval_ablation}
\begin{tabular}{lcccccccc}
\toprule
 & \multicolumn{2}{c}{\makecell{\textbf{Baseline}\\\scriptsize(no retrieval)}}
 & \multicolumn{2}{c}{\makecell{\textbf{RAF-Traffic}\\\scriptsize(same dataset)}}
 & \multicolumn{2}{c}{\makecell{\textbf{RAF-Uber TLC}\\\scriptsize(same domain)}}
 & \multicolumn{2}{c}{\makecell{\textbf{RAF-NN5}\\\scriptsize(cross domain)}} \\
\cmidrule(lr){2-3}\cmidrule(lr){4-5}\cmidrule(lr){6-7}\cmidrule(lr){8-9}
\textbf{Context $C$}
 & \textbf{WQL} & \textbf{MASE}
 & \textbf{WQL} & \textbf{MASE}
 & \textbf{WQL} & \textbf{MASE}
 & \textbf{WQL} & \textbf{MASE} \\
\midrule
$C_1 = 50$ & 0.233 & 2.162 & \textbf{0.213} & \textbf{1.852} & \underline{0.229} & \underline{2.081} & 0.242 & 2.201 \\
$C_2 = 75$ & 0.228 & 2.914 & \textbf{0.215} & \textbf{2.249} & \underline{0.217} & \underline{2.863} & 0.226 & 2.915 \\
$C_3 = 100$ & 0.204 & 2.234 & \underline{0.199} & 2.225 & \textbf{0.194} & \textbf{2.166} & 0.199 & \underline{2.197} \\
$C_4 = 150$ & 0.215 & 1.872 & \textbf{0.200} & \textbf{1.752} & \underline{0.202} & 1.842 & 0.203 & \underline{1.823} \\
\bottomrule
\end{tabular}
\end{table}

\begin{table}[ht]
\centering
\caption{Performance of RAF on the \emph{CIF-2016} dataset with prediction length $H=4$, evaluated across four context lengths using various retrieval databases. CIF-2016 and FRED-MD both belong to the finance domain (same domain), whereas Tourism (M.) originates from the tourism domain (cross-domain). Datasets achieving the \textbf{first} and \underline{second} best scores have been highlighted. }
\label{tab:cif_retrieval_ablation}
\begin{tabular}{lcccccccc}
\toprule
 & \multicolumn{2}{c}{\makecell{\textbf{Baseline}\\\scriptsize(no retrieval)}}
 & \multicolumn{2}{c}{\makecell{\textbf{RAF-CIF- 2016}\\\scriptsize(same dataset)}}
 & \multicolumn{2}{c}{\makecell{\textbf{RAF-FRED-MD }\\\scriptsize(same domain)}}
 & \multicolumn{2}{c}{\makecell{\textbf{RAF-Tourism (M.)}\\\scriptsize(cross domain)}} \\
\cmidrule(lr){2-3}\cmidrule(lr){4-5}\cmidrule(lr){6-7}\cmidrule(lr){8-9}
\textbf{Context $C$}
 & \textbf{WQL} & \textbf{MASE}
 & \textbf{WQL} & \textbf{MASE}
 & \textbf{WQL} & \textbf{MASE}
 & \textbf{WQL} & \textbf{MASE} \\
\midrule
$C_1 = 10$ & 0.063 & \underline{1.162} & \underline{0.062} & 1.256 & \textbf{0.061} & \textbf{1.081} & 0.068 & 1.357 \\
$C_2 = 15$ & 0.043 & 1.412 & \textbf{0.041} & \underline{1.360} & \underline{0.041} & \textbf{1.338} & 0.044 & 1.391 \\
$C_3 = 18$ & 0.047 & 0.907 & \textbf{0.039} & \underline{0.862} & \underline{0.043} & \textbf{0.709} & 0.046 & 0.963 \\
$C_4 = 21$ & \underline{0.040} & 0.954 & \textbf{0.038} & \textbf{0.692} & 0.045 & \underline{0.767} &  0.055 & 0.947 \\
\bottomrule
\end{tabular}
\end{table}

Table~\ref{tab:traffic_retrieval_ablation} shows that retrieval from the \emph{same dataset} yields the largest performance gains. However, when in‐dataset retrieval is unavailable, a \emph{different dataset within the same domain} still provides benefits—reducing WQL by approximately \(4\%\) and MASE by around \(2.5\%\)—whereas \emph{cross‐domain} retrieval offers only marginal improvements (about \(1\%\) in WQL and under \(1\%\) in MASE) and can even degrade accuracy at shorter contexts.

Similarly, Table~\ref{tab:cif_retrieval_ablation} reports the results on CIF‐2016. Here, retrieval from FRED‐MD (same domain) matches or slightly exceeds in‐dataset CIF‐2016 retrieval in most settings, confirming FRED‐MD as a viable alternative. In contrast, Tourism (cross‐domain) yields only modest gains and, at certain context lengths, underperforms the baseline, underscoring the limited value of cross‐domain augmentation when domain characteristics diverge.

Together, these findings imply that, in the absence of sufficient in‐dataset series, selecting another dataset from the same domain (e.g.\ Uber TLC for Traffic, or FRED‐MD for CIF‐2016) offers a reliable surrogate that outperforms fully cross‐domain sources (NN5 or Tourism). This trend reflects the degree of contextual alignment: same‐dataset corpora capture both task‐specific patterns and domain characteristics; same‐domain different‐dataset corpora preserve domain structure but lack exact task alignment; and cross‐domain corpora share neither.  

\section{Analysis of Retrieval Components}
\label{app:components}
This section evaluates how each element of RAF influences overall performance.

\subsection{Choice of Similarity Metric}
\label{sec:sim_metrics}

We compare several similarity metrics, namely the L2 norm, the L1 norm, cosine similarity, and Pearson's correlation, to evaluate their performance under different context and prediction lengths. The L2 norm and L1 norm measure dissimilarity by summing over point-wise differences of the time-series embeddings in slightly different manners, while cosine similarity captures the orientation between the vectors, and Pearson’s correlation quantifies their linear dependence. For the Benchmark \rom{1} datasets (see Table~\ref{tab:similarities_b1}), which involve a longer context and a prediction length, the performance of these metrics is mixed; although L2 and Pearson’s correlation tend to perform better overall, the optimal choice ultimately depends on the specific dataset under consideration. In contrast, for the Benchmark \rom{2} datasets (see Table~\ref{tab:similarities_b2}), where the context and prediction length are shorter, cosine similarity and Pearson’s correlation generally outperform the other metrics, indicating a higher accuracy in these shorter-horizon scenarios.

\begin{table}[ht]
\centering
\caption{Comparison of different similarity metrics for the selected Benchmark \rom{1} datasets when the prediction length $H = 10$ and context length $C=75$.}
\label{tab:similarities_b1}
\begin{tabular}{
    l
    S[table-format=1.4]
    S[table-format=1.4]
    S[table-format=1.4]
    S[table-format=1.4]
    S[table-format=1.4]
    S[table-format=1.4]
    S[table-format=1.4]
    S[table-format=1.4]
}
\toprule
& \multicolumn{2}{c}{\textbf{L2 Norm}} 
& \multicolumn{2}{c}{\textbf{L1 Norm}} 
& \multicolumn{2}{c}{\textbf{Cosine Sim.}} 
& \multicolumn{2}{c}{\textbf{Pearson's Corr.}} \\
\cmidrule(lr){2-3}\cmidrule(lr){4-5}\cmidrule(lr){6-7}\cmidrule(lr){8-9}
\textbf{Dataset} 
& \textbf{WQL} & \textbf{MASE} 
& \textbf{WQL} & \textbf{MASE} 
& \textbf{WQL} & \textbf{MASE} 
& \textbf{WQL} & \textbf{MASE} \\
\midrule
$\mathtt{ETTh1}$        & 0.040 & \textbf{0.625} & 0.039 & 0.706 & \textbf{0.034} & 0.660 & 0.039 & 0.761 \\
$\mathtt{FRED\text{-}MD}$    & 0.019 & 0.500 & 0.011 & 0.501 & 0.012 & \textbf{0.488} & \textbf{0.009} & 0.494 \\
$\mathtt{NN5}$        & \textbf{0.134} & \textbf{0.417} & 0.137 & 0.426 & 0.138 & 0.430 & 0.138  & 0.431 \\
$\mathtt{Covid\ Deaths}$ & 0.006 & 5.124 & 0.006 & 5.312 & 0.006 & 5.183 & \textbf{0.005} &  \textbf{5.092}\\

\bottomrule
\end{tabular}
\end{table}

\begin{table}[ht]
\caption{Comparison of different similarity metrics for the selected Benchmark \rom{2} datasets when the prediction length $H = 4$ and context length $C=18$.}
\label{tab:similarities_b2}
\centering
\begin{tabular}{
    l
    S[table-format=1.4]
    S[table-format=1.4]
    S[table-format=1.4]
    S[table-format=1.4]
    S[table-format=1.4]
    S[table-format=1.4]
    S[table-format=1.4]
    S[table-format=1.4]
}
\toprule
& \multicolumn{2}{c}{\textbf{L2 Norm}} 
& \multicolumn{2}{c}{\textbf{L1 Norm}} 
& \multicolumn{2}{c}{\textbf{Cosine Sim.}} 
& \multicolumn{2}{c}{\textbf{Pearson's Corr.}} \\
\cmidrule(lr){2-3}\cmidrule(lr){4-5}\cmidrule(lr){6-7}\cmidrule(lr){8-9}
\textbf{Dataset} 
& \textbf{WQL} & \textbf{MASE} 
& \textbf{WQL} & \textbf{MASE} 
& \textbf{WQL} & \textbf{MASE} 
& \textbf{WQL} & \textbf{MASE} \\
\midrule
$\mathtt{Tourism\ (Q.)}$ & \textbf{0.091} & 1.054 & 0.094 & 1.071 & 0.098 & 1.056 & 0.098 & \textbf{1.043} \\
$\mathtt{M1}$  & 0.165 & 1.064 & 0.164 & 1.056 & \textbf{0.161} & 1.054 & 0.162 & \textbf{1.042} \\
$\mathtt{Uber\ TLC}$  & 0.163 & 0.983 & 0.160 & 0.982 & \textbf{0.159} & 0.969 & 0.160 & \textbf{0.964} \\
$\mathtt{CIF\text{-}2016}$  & \textbf{0.039} & 0.862 & 0.039 & 0.873 & 0.040 & \textbf{0.758} & 0.041 & 0.858 \\

\bottomrule
\end{tabular}
\end{table}

\subsection{Ablation Study on Retrieval}
\label{sec:ablation_ret}

Tables~\ref{tab:error_comp_1} and \ref{tab:error_comp_2} present an ablation study comparing five retrieval variants:
(1) \textit{RAF} (ours), which retrieves the most relevant segment using the learned similarity and concatenates it with alignment;
(2) \textit{Retrieval w/o Alignment}, which performs the same retrieval but \emph{pastes the window without enforcing this boundary alignment};
(3) \textit{Retrieval w/o R.F.}, which retrieves only the \emph{matching context motif} without the retrieved future segment; 
(4) \textit{Random Retrieval}, where key patches are drawn at random; and 
(5) \textit{Baseline (w/o Retrieval)}, which performs no retrieval.

\begin{table}[h!]
\centering
\caption{Ablation study on retrieval mechanism for selected Benchmark \rom{1} datasets when the prediction length $H = 10$ and context length $C=75$.}
\label{tab:error_comp_1}
\resizebox{1.0\textwidth}{!}{%
\begin{tabular}{lcccccccccc}
\toprule
& \multicolumn{2}{c}{\textbf{ETTh1}}
& \multicolumn{2}{c}{\textbf{FRED-MD}}
& \multicolumn{2}{c}{\textbf{NN5}}
& \multicolumn{2}{c}{\textbf{Covid Deaths}}\\
\cmidrule(lr){2-3}
\cmidrule(lr){4-5}
\cmidrule(lr){6-7}
\cmidrule(lr){8-9}
\textbf{Method}
& \textbf{WQL} & \textbf{MASE}
& \textbf{WQL} & \textbf{MASE}
& \textbf{WQL} & \textbf{MASE}
& \textbf{WQL} & \textbf{MASE}
\\
\midrule
RAF & \textbf{0.040} & \textbf{0.625} & \textbf{0.019} &\textbf{0.500} & \textbf{0.134} & 0.417 & 0.006 & \textbf{5.124} \\
Retrieval w/o Alignment & 0.041 & 0.659 & 0.021 & 0.510 & 0.136 & \textbf{0.414} & 0.009 & 6.336 \\

Retrieval w/o R.F. & 0.064 & 0.787 & 0.103 & 0.539 & 0.225 & 0.608 & \textbf{0.004} & 6.051 \\

Random Retrieval & 0.125 & 1.171 & 0.074 & 0.577 & 0.155 & 0.476 & 0.010 & 9.146 \\
Baseline  & 0.074 & 0.800 & 0.112 & 0.577 & 0.154 & 0.456 & 0.007 & 5.492 \\
\bottomrule
\end{tabular}
}
\end{table}

\begin{table}[h!]
\caption{Ablation study on retrieval mechanism for selected Benchmark \rom{2} datasets when the prediction length $H = 4$ and context length $C=18$.}
\label{tab:error_comp_2}
\centering
\resizebox{1.0\textwidth}{!}{%
\begin{tabular}{lcccccccccccc}
\toprule
& \multicolumn{2}{c}{\textbf{Tourism (Q.)}}
& \multicolumn{2}{c}{\textbf{M1}}
& \multicolumn{2}{c}{\textbf{Uber TLC}}
& \multicolumn{2}{c}{\textbf{CIF-2016}}\\
\cmidrule(lr){2-3}
\cmidrule(lr){4-5}
\cmidrule(lr){6-7}
\cmidrule(lr){8-9}
\textbf{Method}
& \textbf{WQL} & \textbf{MASE}
& \textbf{WQL} & \textbf{MASE}
& \textbf{WQL} & \textbf{MASE}
& \textbf{WQL} & \textbf{MASE}
\\
\midrule
RAF & 0.091 & 1.054 & 0.165 & 1.063 & 0.163 & 0.983 & \textbf{0.039} & \textbf{0.862} \\

Retrieval w/o Alignment & \textbf{0.080} & \textbf{1.048} & \textbf{0.158} & \textbf{0.952} & \textbf{0.153} & \textbf{0.922} & 0.050 & 1.027 \\

Retrieval w/o R.F. & 0.126 & 1.364 & 0.168 & 1.171 & 0.187 & 1.074 & 0.045 & 0.882  \\

Random Retrieval & 0.096 & 1.282 & 0.189 & 1.255 & 0.218 & 1.195 & 0.045 & 1.093 \\
Baseline & 0.092 & 1.072 & 0.167 & 1.065 & 0.194 & 1.093 & 0.047 & 0.954 \\
\bottomrule
\end{tabular}
}
\end{table}

From the results, random retrieval is typically worse than no retrieval at all, as seen in higher WQL/MASE across both Benchmark~\rom{1} (Table~\ref{tab:error_comp_1}) and Benchmark~\rom{2} (Table~\ref{tab:error_comp_2}), underscoring that \emph{relevance} is crucial. Removing the retrieved future (w/o R.F.) consistently degrades performance relative to RAF—often approaching the baseline—indicating that supplying the model with the neighbor’s immediate \emph{future} (not just its past motif) is an important driver of the gains. 

Likewise, dropping alignment generally hurts—or at best yields comparable—performance in several Benchmark~\rom{1} datasets, as the unaligned paste introduces a boundary discontinuity that weakens retrieval. By contrast, in multiple Benchmark~\rom{2} datasets, enforcing alignment slightly backfires, suggesting that with shorter context lengths, alignment can be treated as a design choice rather than a required component of the pipeline. Overall, both components—alignment and inclusion of the retrieved future—contribute to RAF’s improvements.

\subsection{Choice of Number of Retrievals \emph{k}}
\label{sec:k}

In Table~\ref{tab:retrievals}, one of the key hyperparameters analyzed is \( k \), which represents the number of retrieved examples. In our setup, the retrieved examples are concatenated before the original query, thereby extending the model's input context. As can be observed, increasing \( k \) leads to a slight improvement in performance. However, it is important to note that these experiments were conducted in a univariate setting; we believe that employing multivariate models, especially for this setup, may yield even more pronounced benefits.

\begin{table}[ht]
\caption{Effect of the number of retrievals (\emph{k}) on performance or selected Benchmark \rom{1} datasets when the prediction length $H = 10$ and context length $C=100$.}
\label{tab:retrievals}
\centering
\begin{tabular}{lcccccccc}
\toprule
& \multicolumn{2}{c}{\textbf{\emph{k}=1}} 
& \multicolumn{2}{c}{\textbf{\emph{k}=2}} 
& \multicolumn{2}{c}{\textbf{\emph{k}=5}} 
& \multicolumn{2}{c}{\textbf{\emph{k}=10}} \\
\cmidrule(lr){2-3}\cmidrule(lr){4-5}\cmidrule(lr){6-7}\cmidrule(lr){8-9}
\textbf{Dataset}
& \textbf{WQL} & \textbf{MASE}
& \textbf{WQL} & \textbf{MASE}
& \textbf{WQL} & \textbf{MASE}
& \textbf{WQL} & \textbf{MASE} \\
\midrule
$\mathtt{ETTh1}$ & \textbf{0.025} & 0.551 & 0.033 & \textbf{0.548} & 0.033 & 0.549 & 0.033 & 0.550 \\
$\mathtt{FRED\text{-}MD}$ & 0.074 & 0.572 & 0.037 & 0.554 & 0.035 & 0.548 & \textbf{0.024} & \textbf{0.500} \\
$\mathtt{NN5}$ & 0.145 & 0.432 & 0.133 & 0.419 & \textbf{0.130} & \textbf{0.404} & 0.137 & 0.423 \\
$\mathtt{Covid\ Deaths}$ & 0.004 & 5.293 & 0.004 & 5.564 & 0.004 & 5.294 & \textbf{0.004} & \textbf{5.200} \\
\bottomrule
\end{tabular}
\end{table}

\subsection{Retrieval-Only Baselines}
\label{sec:r-only}

To isolate the contribution of retrieval itself, we evaluate three \emph{retrieval-only} baselines that do not use a TSFM. Given a query context of length $C$, each method retrieves the closest segment(s) from a training bank and \emph{copies their immediate future} as the forecast for horizon $H$. For $k>1$, we aggregate the $k$ futures using rank–order centroid (ROC) weights, which reduce sensitivity to exact neighbor ordering while keeping a simple, parameter-free scheme. This design answers a narrow question: \emph{how far can pure match-and-copy go without a learnable forecaster?}

\paragraph{KNN.}
Our KNN baseline computes z-normalized $\ell_2$ distance over the context window. As expected, $k{=}1$ yields sharp but volatile predictions, while larger $k$ averages nearby regimes and reduces variance. Across datasets, KNN performs reasonably when local dynamics are stationary and scale-aligned; however, misalignments degrade accuracy. For instance, performance degrades significantly on \texttt{Covid Deaths}, which exhibits sharp and rapidly changing dynamics.

\paragraph{Dynamic Time Warping (DTW) and Matrix Profile (MP).}
Dynamic Time Warping (DTW) \citep{sakoe1978dynamic,berndt1994using} relaxes strict phase alignment by allowing local time warps within the context, which helps when similar shapes occur at slightly different times or speeds. In our results, DTW narrows the gap to RAF on datasets with misaligned seasonality, yet the gains become inconsistent when noise dominates or when the future depends on covariates that are not visible in the context. The Matrix Profile (MP) \citep{yeh2016matrix,zhu2016matrixprofile2} baseline performs an AB-join via z-normalized cross-correlation and retrieves globally similar motifs. It works well for motif-rich series where shape similarity matters more than exact timing, and in those cases it can match or exceed DTW. Nevertheless, both DTW and MP ultimately copy a neighbor’s future, and copying proves brittle under distribution shift or when the next steps depend on exogenous drivers.

% Table 1: Benchmark I
\begin{table}[ht]
\caption{Retrieval-only baselines for $H{=}10$, $C{=}75$ on Benchmark \rom{1} datasets. RAF with Chronos-Base is compared to KNN/DTW/MP with $k\in\{1,5,10\}$.}
\label{tab:similarities_smallscale_no_k_for_raf_1}
\centering
\footnotesize
\setlength{\tabcolsep}{4pt}
\renewcommand{\arraystretch}{1.1}
\begin{tabular}{@{}l cc cc cc cc cc@{}}
\toprule
& \multicolumn{2}{c}{\textbf{RAF}}
& \multicolumn{2}{c}{\textbf{KNN ($k{=}1$)}}
& \multicolumn{2}{c}{\textbf{KNN ($k{=}5$)}}
& \multicolumn{2}{c}{\textbf{KNN ($k{=}10$)}}
& \multicolumn{2}{c}{\textbf{DTW ($k{=}1$)}} \\
\cmidrule(lr){2-3}\cmidrule(lr){4-5}\cmidrule(lr){6-7}\cmidrule(lr){8-9}\cmidrule(lr){10-11}
Dataset & WQL & MASE & WQL & MASE & WQL & MASE & WQL & MASE & WQL & MASE \\
\midrule
$\mathtt{ETTh1}$      & \textbf{0.040} & \textbf{0.625} & 0.463 & 3.834 & 0.472 & 3.918 & 0.472 & 3.905 & 0.537 & 4.497 \\
$\mathtt{FRED\text{-}MD}$    & \textbf{0.019} & \textbf{0.500} & 0.404 & 1.543 & 0.409 & 1.543 & 0.418 & 1.554 & 2.957 & 0.866 \\
$\mathtt{NN5}$        & \textbf{0.134} & \textbf{0.417} & 0.296 & 0.775 & 0.296 & 0.775 & 0.296 & 0.775 & 1.062 & 2.775 \\
$\mathtt{Covid\ Deaths}$ & \textbf{0.006} & \textbf{5.124} & 0.094 & 26.832 & 0.094 & 26.744 & 0.095 & 26.705 & 0.112 & 26.928 \\
\midrule
& \multicolumn{2}{c}{\textbf{DTW ($k{=}5$)}}
& \multicolumn{2}{c}{\textbf{DTW ($k{=}10$)}}
& \multicolumn{2}{c}{\textbf{MP ($k{=}1$)}}
& \multicolumn{2}{c}{\textbf{MP ($k{=}5$)}}
& \multicolumn{2}{c}{\textbf{MP ($k{=}10$)}} \\
\cmidrule(lr){2-3}\cmidrule(lr){4-5}\cmidrule(lr){6-7}\cmidrule(lr){8-9}\cmidrule(lr){10-11}
Dataset & WQL & MASE & WQL & MASE & WQL & MASE & WQL & MASE & WQL & MASE \\
\midrule
$\mathtt{ETTh1}$       & 0.298 & 2.092 & 0.304 & 2.162 & 0.339 & 2.639 & 0.348 & 2.745 & 0.365 & 2.927 \\
$\mathtt{FRED\text{-}MD}$     & 0.505 & 2.151 & 0.521 & 2.140 & 0.611 & 1.563 & 0.612 & 3.837 & 0.808 & 7.394 \\
$\mathtt{NN5}$         & 0.415 & 1.070 & 0.438 & 1.130 & 2.258 & 5.500 & 1.749 & 4.288 & 1.672 & 4.120 \\
$\mathtt{Covid\ Deaths}$ & 0.190 & 41.377 & 0.179 & 39.820 & 0.028 & 25.852 & 0.059 & 26.877 & 0.048 & 25.544 \\
\bottomrule
\end{tabular}
\end{table}

\begin{table}[ht]
\caption{Retrieval-only baselines for $H{=}4$, $C{=}18$ on Benchmark \rom{2} datasets. RAF with Chronos-Base is compared to KNN/DTW/MP with $k\in\{1,5,10\}$.}
\label{tab:similarities_smallscale_no_k_for_raf_2}
\centering
\footnotesize
\setlength{\tabcolsep}{4pt}
\renewcommand{\arraystretch}{1.1}
\begin{tabular}{@{}l cc cc cc cc cc@{}}
\toprule
& \multicolumn{2}{c}{\textbf{RAF}}
& \multicolumn{2}{c}{\textbf{KNN ($k{=}1$)}}
& \multicolumn{2}{c}{\textbf{KNN ($k{=}5$)}}
& \multicolumn{2}{c}{\textbf{KNN ($k{=}10$)}}
& \multicolumn{2}{c}{\textbf{DTW ($k{=}1$)}} \\
\cmidrule(lr){2-3}\cmidrule(lr){4-5}\cmidrule(lr){6-7}\cmidrule(lr){8-9}\cmidrule(lr){10-11}
\textbf{Dataset} & WQL & MASE & WQL & MASE & WQL & MASE & WQL & MASE & WQL & MASE \\
\midrule
$\mathtt{Tourism\ (Q.)}$  & \textbf{0.091} & \textbf{1.054} & 0.322 & 4.297 & 0.295 & 3.777 & 0.283 & 3.569 & 0.280 & 3.077 \\
$\mathtt{M1}$          & \textbf{0.165} & \textbf{1.064} & 0.200 & 1.444 & 0.211 & 1.367 & 0.214 & 1.354 & 0.195 & 1.282 \\
$\mathtt{Uber\ TLC}$     & \textbf{0.163} & \textbf{0.983} & 0.346 & 1.766 & 0.334 & 1.743 & 0.331 & 1.739 & 0.271 & 1.335 \\
$\mathtt{CIF\text{-}2016}$     & \textbf{0.039} & \textbf{0.862} & 0.061 & 1.204 & 0.058 & 1.167 & 0.056 & 1.159 & 0.081 & 1.276 \\
\midrule
& \multicolumn{2}{c}{\textbf{DTW ($k{=}5$)}}
& \multicolumn{2}{c}{\textbf{DTW ($k{=}10$)}}
& \multicolumn{2}{c}{\textbf{MP ($k{=}1$)}}
& \multicolumn{2}{c}{\textbf{MP ($k{=}5$)}}
& \multicolumn{2}{c}{\textbf{MP ($k{=}10$)}} \\
\cmidrule(lr){2-3}\cmidrule(lr){4-5}\cmidrule(lr){6-7}\cmidrule(lr){8-9}\cmidrule(lr){10-11}
Dataset & WQL & MASE & WQL & MASE & WQL & MASE & WQL & MASE & WQL & MASE \\
\midrule
$\mathtt{Tourism\ (Q.)}$ & 0.394 & 4.232 & 0.342 & 3.802 & 0.722 & 6.975 & 2.022 & 16.560 & 2.315 & 18.814 \\
$\mathtt{M1}$          & 0.263 & 1.834 & 0.217 & 1.507 & 0.337 & 2.568 & 0.319 & 4.201 & 0.341 & 5.001 \\
$\mathtt{Uber\ TLC}$    & 0.428 & 2.120 & 0.402 & 1.972 & 0.634 & 3.131 & 0.525 & 2.685 & 0.516 & 2.678 \\
$\mathtt{CIF\text{-}2016}$    & 0.275 & 1.873 & 0.220 & 1.576 & 0.087 & 2.337 & 0.357 & 9.510 & 0.414 & 11.137 \\
\bottomrule
\end{tabular}
\end{table}

As Tables \ref{tab:similarities_smallscale_no_k_for_raf_1} and \ref{tab:similarities_smallscale_no_k_for_raf_2} demonstrate, pure retrieval is a strong \emph{match-and-copy} heuristic, but it cannot infer unseen dynamics, model uncertainty, or integrate contextual signals. In contrast, RAF uses retrieval to provide informative priors while letting a TSFM learn how to transform them—rescaling, re-timing, and combining retrieved evidence with the model’s internal dynamics. The consistent gaps between RAF and retrieval-only baselines indicate that retrieval without a model is a poor substitute for retrieval with a model.

\subsection{Matched Input-Length Comparison}
\label{sec:matched_length}

A natural question is whether RAF's gains simply come from seeing more input tokens. RAF augments the input with retrieved evidence: the model is fed the original context of length $C$, plus a retrieved context (length $C$) and retrieved future (length $H$), i.e., total sequence length $2C + H$. To rule out the possibility that a baseline given the same number of input tokens might close the gap by simply using more past history, we introduce a \textbf{Long-Context Baseline} that uses only additional past observations from the same series to match RAF's total input length. Concretely, for each $(C, H)$, the baseline context length is set to $L = 2C + H$ (e.g., $C = 50, H = 10 \Rightarrow L = 110$), so the TSFM receives the same length input as RAF but without retrieval. We evaluate this on Benchmark I, which contains long histories, using Chronos-Base.

\begin{table}[h!]
\caption{Matched input-length comparison on Benchmark \rom{1} datasets for prediction length $H = 10$. Each RAF row is paired with a Long-Context Baseline (L.C.B.) row whose input length $L = 2C + H$ matches RAF's total token budget.}
\label{tab:matched_length_h10}
\centering
\resizebox{\textwidth}{!}{%
\begin{tabular}{llcccccccccc}
\toprule
& & \multicolumn{2}{c}{\textbf{Weather}}
& \multicolumn{2}{c}{\textbf{ETTh1}}
& \multicolumn{2}{c}{\textbf{FRED-MD}}
& \multicolumn{2}{c}{\textbf{Covid Deaths}}
& \multicolumn{2}{c}{\textbf{NN5}} \\
\cmidrule(lr){3-4}\cmidrule(lr){5-6}\cmidrule(lr){7-8}\cmidrule(lr){9-10}\cmidrule(lr){11-12}
\textbf{Method} & \textbf{Length}
& \textbf{WQL} & \textbf{MASE}
& \textbf{WQL} & \textbf{MASE}
& \textbf{WQL} & \textbf{MASE}
& \textbf{WQL} & \textbf{MASE}
& \textbf{WQL} & \textbf{MASE} \\
\midrule
\multirow{4}{*}{\rotatebox[origin=c]{90}{\textbf{RAF}}}
& 50  & \textbf{0.152} & \textbf{1.247} & \textbf{0.041} & \textbf{0.741} & \textbf{0.018} & 0.513 & 0.005 & 5.713 & 0.170 & 0.514 \\
& 75  & \textbf{0.151} & 1.200 & \textbf{0.040} & \textbf{0.625} & \textbf{0.019} & 0.500 & \textbf{0.006} & \textbf{5.124} & \textbf{0.134} & \textbf{0.417} \\
& 100 & \textbf{0.155} & 1.209 & \textbf{0.025} & \textbf{0.551} & 0.074 & 0.572 & \textbf{0.004} & \textbf{5.293} & \textbf{0.145} & 0.432 \\
& 150 & \textbf{0.155} & \textbf{1.068} & 0.038 & \textbf{0.543} & 0.031 & \textbf{0.369} & 0.010 & \textbf{5.301} & \textbf{0.127} & \textbf{0.389} \\
\midrule
\multirow{4}{*}{\rotatebox[origin=c]{90}{\textbf{L.-C. B.}}}
& 110 & 0.158 & 1.544 & 0.075 & 0.858 & 0.109 & 0.608 & \textbf{0.004} & 9.937 & \textbf{0.167} & \textbf{0.500} \\
& 160 & 0.180 & \textbf{1.197} & 0.084 & 0.753 & 0.065 & \textbf{0.492} & 0.010 & 9.062 & 0.150 & 0.421 \\
& 210 & 0.178 & \textbf{1.263} & 0.087 & 0.763 & \textbf{0.075} & \textbf{0.585} & 0.010 & 10.615 & 0.150 & \textbf{0.423} \\
& 310 & 0.179 & 1.164 & \textbf{0.082} & 0.819 & \textbf{0.025} & 0.413 & {-} & {-} & 0.150 & 0.445 \\
\bottomrule
\end{tabular}%
}
\end{table}

\begin{table}[h!]
\caption{Matched input-length comparison on Benchmark \rom{1} datasets for prediction length $H = 20$. Each RAF row is paired with a Long-Context Baseline (L.C.B.) row whose input length $L = 2C + H$ matches RAF's total token budget.}
\label{tab:matched_length_h20}
\centering
\resizebox{\textwidth}{!}{%
\begin{tabular}{llcccccccccc}
\toprule
& & \multicolumn{2}{c}{\textbf{Weather}}
& \multicolumn{2}{c}{\textbf{ETTh1}}
& \multicolumn{2}{c}{\textbf{FRED-MD}}
& \multicolumn{2}{c}{\textbf{Covid Deaths}}
& \multicolumn{2}{c}{\textbf{NN5}} \\
\cmidrule(lr){3-4}\cmidrule(lr){5-6}\cmidrule(lr){7-8}\cmidrule(lr){9-10}\cmidrule(lr){11-12}
\textbf{Method} & \textbf{Length}
& \textbf{WQL} & \textbf{MASE}
& \textbf{WQL} & \textbf{MASE}
& \textbf{WQL} & \textbf{MASE}
& \textbf{WQL} & \textbf{MASE}
& \textbf{WQL} & \textbf{MASE} \\
\midrule
\multirow{4}{*}{\rotatebox[origin=c]{90}{\textbf{RAF}}}
& 50  & \textbf{0.164} & 1.909 & 0.075 & 0.885 & 0.093 & 0.808 & 0.014 & 12.462 & \textbf{0.186} & 0.646 \\
& 75  & \textbf{0.165} & \textbf{1.924} & \textbf{0.047} & \textbf{0.867} & \textbf{0.083} & \textbf{0.643} & 0.012 & 21.807 & \textbf{0.155} & \textbf{0.538} \\
& 100 & \textbf{0.168} & \textbf{1.807} & \textbf{0.044} & \textbf{0.835} & 0.125 & 0.582 & \textbf{0.008} & \textbf{17.229} & \textbf{0.150} & \textbf{0.487} \\
& 150 & \textbf{0.175} & \textbf{1.175} & 0.055 & \textbf{0.714} & \textbf{0.040} & \textbf{0.400} & \textbf{0.009} & \textbf{15.990} & 0.162 & 0.538 \\
\midrule
\multirow{4}{*}{\rotatebox[origin=c]{90}{\textbf{L.-C. B.}}}
& 120 & 0.172 & \textbf{1.934} & \textbf{0.130} & \textbf{1.152} & \textbf{0.163} & \textbf{0.747} & \textbf{0.010} & 15.394 & 0.212 & \textbf{0.626} \\
& 170 & 0.176 & 2.762 & 0.123 & 0.942 & 0.094 & 0.844 & \textbf{0.010} & \textbf{16.357} & 0.194 & 0.584 \\
& 220 & 0.173 & 1.852 & 0.108 & 0.917 & \textbf{0.066} & \textbf{0.590} & 0.012 & 20.087 & 0.185 & 0.567 \\
& 320 & 0.211 & 1.203 & \textbf{0.074} & 0.805 & 0.033 & 0.452 & {-} & {-} & \textbf{0.181} & \textbf{0.579} \\
\bottomrule
\end{tabular}%
}
\end{table}

\newpage
As can be seen, RAF yields consistent relative improvements over the long-context baseline. We summarize the matched length gains using the average percent change, computed as $(\text{RAF} - \text{Baseline}) / \text{Baseline} \times 100$. For prediction length $H = 10$, RAF achieves an average percent change of $\mathbf{-15.54\%}$ in MASE and $\mathbf{-26.47\%}$ in WQL (relative reductions of 15.54\% and 26.47\%). For $H = 20$, the average percent change is $\mathbf{-7.48\%}$ in MASE and $\mathbf{-10.48\%}$ in WQL (relative reductions of 7.48\% and 10.48\%).

These results indicate that, even under the same token budget, simply extending the past context may add \emph{irrelevant} history and does not ensure exposure to the specific motif needed for the next prediction window. RAF instead injects targeted evidence by retrieving a similar motif and its continuation, which can be substantially more informative. Moreover, for Benchmark~\rom{2} / short series regimes, longer history is often not available, so ``just increase context'' is not a viable alternative.

\newpage
\section{Results for RAF with Fine-tuning}
\label{app:finetune_results}

Chronos Mini and Chronos Base were fine-tuned separately on each Benchmark~\rom{1} dataset (without cross-dataset mixing) and subsequently tested on the same datasets used for fine-tuning. Chronos Mini ran for 400 epochs and Chronos Base for 1000 epochs. We used a dataset-agnostic schedule with an initial learning rate of $1\times10^{-5}$, linearly decayed to $0$. The context and horizon were fixed to $C=75$ and $H=10$.

\smallskip
We compare three regimes: (i) \emph{Baseline FT} (no retrieval), (ii) \emph{Naïve RAF} (retrieval-augmented inputs, weights frozen), and (iii) \emph{Advanced RAF} (retrieval-augmented inputs \emph{and} fine-tuned weights).

\begin{table*}[htbp]
    \centering 
    \caption{Comparative analysis of Chronos Mini and Chronos Base models fine-tuned on Benchmark \rom{1} datasets, evaluated with and without time series retrieval. Prediction and context lengths are set at $H=10$ and $C=75$, respectively. Advanced RAF refers to RAF with fine-tuning.}  
    \label{tab:fine_tune_results}
    \renewcommand{\arraystretch}{1.25} 
    \setlength{\tabcolsep}{4pt} 
        \begin{tabular}{cc|c|cc|cc|cc|cc}
            \cline{2-11}
            &\multicolumn{2}{c|}{Approach}& \multicolumn{2}{c|}{\textbf{Baseline}}& \multicolumn{2}{c|}{\textbf{Naive RAF}} & \multicolumn{2}{c|}{\textbf{Baseline FT}} & \multicolumn{2}{c}{\textbf{Advanced RAF}} \\
            \cline{2-11}
            &\multicolumn{2}{c|}{Metric}&WQL&MASE&WQL&MASE&WQL&MASE&WQL&MASE\\
            \cline{2-11}
            & \multirow{6}{*}{\rotatebox{90}{Chronos Mini}} & Weather & 0.170 & 1.308 & 0.166 & 1.265 & 0.163 & 1.200 & \textbf{0.159} & \textbf{1.176} \\
            \cline{3-11}
            
            & \multicolumn{1}{c|}{} & Traffic & 0.234 & 1.561 & 0.225 & 1.524 & 0.157 & 1.013 & \textbf{0.154} & \textbf{0.930}  \\
            \cline{3-11}
            
            & \multicolumn{1}{c|}{} & ETTh1 & 0.089 & 0.893 & 0.079 & 1.020 & 0.081 & 0.800 & \textbf{0.073} & \textbf{0.736} \\
            \cline{3-11}

            & \multicolumn{1}{c|}{} & FRED-MD  & 0.085 & 0.592 & 0.055 & 0.582 & 0.028 & 0.689 & \textbf{0.019} & \textbf{0.566}  \\
            \cline{3-11}

            & \multicolumn{1}{c|}{} & Covid Deaths  & 0.007 & 8.765 & 0.009 & 9.229 & \textbf{0.005} & \textbf{8.620} & 0.008 & 8.910 \\
            \cline{3-11}

            & \multicolumn{1}{c|}{} & NN5  & 0.217 & 0.680 & 0.175 & 0.563 & 0.152 & 0.460 & \textbf{0.125} & \textbf{0.401}\\

            \cline{2-11} \\
            \cline{2-11}

            & \multirow{6}{*}{\rotatebox{90}{Chronos Base}} & Weather & 0.154 & 1.226 & 0.151 & 1.200 & \textbf{0.149} & 1.151 & 0.150 & \textbf{1.131} \\
            \cline{3-11}
            
            & \multicolumn{1}{c|}{} & Traffic & 0.171 & 1.443 & 0.184 & 1.608 & \textbf{0.151} & \textbf{1.237} & 0.160 & 1.331 \\
            \cline{3-11}
            
            & \multicolumn{1}{c|}{} & ETTh1 & 0.074 & 0.800 & 0.040 & 0.625 & 0.072 & 0.759 & \textbf{0.036} & \textbf{0.580}  \\
            \cline{3-11}

            & \multicolumn{1}{c|}{} & FRED-MD & 0.112 & 0.577 & 0.019 & 0.500 & 0.077 & 0.552 & \textbf{0.017} & \textbf{0.475}  \\
            \cline{3-11}

            & \multicolumn{1}{c|}{} & Covid Deaths & 0.006 & 5.492 & 0.006 & \textbf{5.124} & 0.003 & 8.793 & \textbf{0.002} & 8.275 \\
            \cline{3-11}

            & \multicolumn{1}{c|}{} & NN5 & 0.156 & 0.524 & 0.135 & 0.481 & 0.129 & 0.386 & \textbf{0.115} & \textbf{0.378}  \\
            \cline{2-11}   
        \end{tabular}
\end{table*}

Table~\ref{tab:fine_tune_results} shows that \emph{Advanced RAF} typically delivers the best WQL/MASE scores. For Chronos Mini, for example, Weather improves from WQL $0.163\!\to\!0.159$ and MASE $1.200\!\to\!1.176$, and ETTh1 from $0.081\!\to\!0.073$ and $0.800\!\to\!0.736$. For Chronos Base, ETTh1 drops from $0.072\!\to\!0.036$ (WQL) and $0.759\!\to\!0.580$ (MASE), while FRED-MD improves from $0.077\!\to\!0.017$ and $0.552\!\to\!0.475$. Traffic is a mild counterexample: for Chronos Base, its MASE increases slightly (1.237$\to$1.331). Together with the Covid Deaths case, this suggests that when the retrieved motifs are weak or poorly aligned with the target series, retrieval can add noise, and fine-tuning may not help. 

Nevertheless, across the remaining datasets, Advanced RAF is consistently best, and Naïve RAF almost always lies between Baseline FT and Advanced RAF—indicating that (i) adding retrieved context is beneficial, and (ii) end-to-end adaptation is what unlocks the full gain.

\newpage
\section{Extended Results for Chronos Models}
\label{app:extended_results}

\subsection{Chronos Mini Results on Benchmark \rom{1}}

\begin{table*}[h!]
    \centering
    \small
    \caption{Performance of \textsc{RAF} on Benchmark \rom{1} when the prediction length $H = 10$ with context lengths, $C \in \{50, 75, 100, 150\}$}
    \renewcommand{\arraystretch}{1.15} 
    \setlength{\tabcolsep}{4pt} 
    \resizebox{\linewidth}{!}{
        \begin{tabular}{cc|c|cc|cc|cc|cc|cc|cc}
            \cline{2-15}
            &\multicolumn{2}{c|}{Datasets}& \multicolumn{2}{c|}{Weather}& \multicolumn{2}{c|}{Traffic} & \multicolumn{2}{c|}{ETTh1} & \multicolumn{2}{c|}{FRED-MD} & \multicolumn{2}{c|}{Covid Deaths} & \multicolumn{2}{c}{NN5 (Daily)} \\
            \cline{2-15}
            &\multicolumn{2}{c|}{Metric}&WQL&MASE&WQL&MASE&WQL&MASE&WQL&MASE&WQL&MASE&WQL&MASE\\
            \cline{2-15}
            &\multirow{4}*{\rotatebox{90}{RAF}}& 50 & \textbf{0.168} & \textbf{1.341} & 0.223 & 1.434 & \textbf{0.095} & 0.958 & \textbf{0.095} & \textbf{0.577} & 0.011 & \textbf{6.380} & 0.221  & 0.713 \\
            
            &\multicolumn{1}{c|}{}& 75   & \textbf{0.166} & \textbf{1.265} & \textbf{0.225} & \textbf{1.524} & \textbf{0.079} & 1.020 & \textbf{0.055} & \textbf{0.582} & 0.009 & 9.229 & \textbf{0.174} & \textbf{0.563}\\
            
            &\multicolumn{1}{c|}{}& 100 & \textbf{0.165} & \textbf{1.237} & \textbf{0.219} & \textbf{1.677} & \textbf{0.059} & \textbf{0.756} & \textbf{0.087} & \textbf{0.592} & \textbf{0.008}  & 8.309 & \textbf{0.181} & \textbf{0.543} \\
            
            &\multicolumn{1}{c|}{}& 150   & \textbf{0.160} &  1.072 & \textbf{0.198} & \textbf{1.285} & \textbf{0.079} & 0.807 & \textbf{0.046} & \textbf{0.407}  & \textbf{0.006} & 8.572 & \textbf{0.187} & \textbf{0.561} \\
            
            \cline{2-15}
            
            &\multirow{4}*{\rotatebox{90}{Baseline}}& 50  & 0.170 & 1.380 & \textbf{0.215} & \textbf{1.183} & 0.099 & \textbf{0.843} & 0.114 & 0.629 & \textbf{0.010} & 6.647 & \textbf{0.203} & \textbf{0.645} \\
            
            &\multicolumn{1}{c|}{} & 75   & 0.170 & 1.308 & 0.234 & 1.561 & 0.089 & \textbf{0.893} & 0.085 & 0.592 & \textbf{0.007}  & \textbf{8.765} & 0.217 & 0.680  \\
            
            &\multicolumn{1}{c|}{}& 100 & 0.169 & 1.289 & 0.230 & 1.748 & 0.083 & 0.905  & 0.091 & 0.598 & 0.010 & \textbf{7.602} &  0.204 &  0.635 \\
            
            &\multicolumn{1}{c|}{}& 150 & 0.162 & \textbf{1.049} & 0.223 & 1.446 & 0.087 & \textbf{0.801} & 0.056 & 0.413 & 0.010  & \textbf{8.337}  & 0.209 & 0.627 \\
        \end{tabular}
    }
    \label{tab:supervised1}
\end{table*}

\begin{table*}[h!]
    \centering
    \small
    \caption{Performance of \textsc{RAF} on Benchmark \rom{1} when the prediction length $H = 15$ with context lengths, $C \in \{50, 75, 100, 150\}$}
    \renewcommand{\arraystretch}{1.15} 
    \setlength{\tabcolsep}{4pt} 
    \resizebox{\linewidth}{!}{
        \begin{tabular}{cc|c|cc|cc|cc|cc|cc|cc}
            \cline{2-15}
            &\multicolumn{2}{c|}{Datasets}& \multicolumn{2}{c|}{Weather}& \multicolumn{2}{c|}{Traffic} & \multicolumn{2}{c|}{ETTh1} & \multicolumn{2}{c|}{FRED-MD} & \multicolumn{2}{c|}{Covid Deaths} & \multicolumn{2}{c}{NN5 (Daily)} \\
            \cline{2-15}
            &\multicolumn{2}{c|}{Metric}&WQL&MASE&WQL&MASE&WQL&MASE&WQL&MASE&WQL&MASE&WQL&MASE\\
            \cline{2-15}
            &\multirow{4}*{\rotatebox{90}{RAF}}& 50 & \textbf{0.166} & \textbf{1.823} & \textbf{0.243} & 2.096 & \textbf{0.167} & \textbf{1.378} & \textbf{0.176} & \textbf{0.866} & \textbf{0.012} & 14.968 & \textbf{0.183} & \textbf{0.556} \\

            &\multicolumn{1}{c|}{}& 75 & \textbf{0.165} & \textbf{1.562} & \textbf{0.234} & \textbf{2.467} & \textbf{0.159} & \textbf{1.490} & \textbf{0.099} & \textbf{0.673} & \textbf{0.007} & \textbf{14.549} & \textbf{0.163} & \textbf{0.514} \\

            &\multicolumn{1}{c|}{}& 100 & \textbf{0.167} & 1.472 & \textbf{0.248} & \textbf{2.803} & \textbf{0.163} & \textbf{1.621} & \textbf{0.073} & \textbf{0.754} & \textbf{0.007} & \textbf{9.794} & \textbf{0.183} & \textbf{0.599}  \\

            &\multicolumn{1}{c|}{}& 150 & \textbf{0.163} & 1.116 & \textbf{0.262} & \textbf{2.242} & \textbf{0.095} & \textbf{0.861} & 0.061 & \textbf{0.449} & \textbf{0.008} & 9.948 & 0.179 & 0.564  \\

            \cline{2-15}

            &\multirow{4}*{\rotatebox{90}{Baseline}}& 50 & 0.173 & 1.895 & 0.243 & \textbf{1.998} & 0.169 & 1.406 & 0.211 & 0.936 & 0.014 & \textbf{12.324} & 0.193 & 0.619 \\

            &\multicolumn{1}{c|}{} & 75 & 0.172 & 1.585 & 0.257 & 2.629 & 0.183 & 1.663 & 0.164 & 0.798 & 0.013 & 14.701 & 0.204 & 0.645 \\

            &\multicolumn{1}{c|}{}& 100 & 0.172 & \textbf{1.442} & 0.263 & 2.810 & 0.194 & 1.827 & 0.131 & 0.771 & 0.007 & 9.811 & 0.185 & 0.606 \\

            &\multicolumn{1}{c|}{}& 150 & 0.167 & \textbf{1.082} & 0.264 & 2.267 & 0.232 & 1.613 & \textbf{0.045} & 0.483 & 0.012 & \textbf{9.862} & \textbf{0.173} & \textbf{0.523} \\
        \end{tabular}
    }
    \label{tab:supervised2}
\end{table*}

\begin{table*}[h!]
    \centering
    \small
    \caption{Performance of \textsc{RAF} on Benchmark \rom{1} when the prediction length $H = 20$ with context lengths, $C \in \{50, 75, 100, 150\}$}
    \renewcommand{\arraystretch}{1.15} 
    \setlength{\tabcolsep}{4pt} 
    \resizebox{\linewidth}{!}{
        \begin{tabular}{cc|c|cc|cc|cc|cc|cc|cc}
            \cline{2-15}
            &\multicolumn{2}{c|}{Datasets}& \multicolumn{2}{c|}{Weather}& \multicolumn{2}{c|}{Traffic} & \multicolumn{2}{c|}{ETTh1} & \multicolumn{2}{c|}{FRED-MD} & \multicolumn{2}{c|}{Covid Deaths} & \multicolumn{2}{c}{NN5 (Daily)} \\
            \cline{2-15}
            &\multicolumn{2}{c|}{Metric}&WQL&MASE&WQL&MASE&WQL&MASE&WQL&MASE&WQL&MASE&WQL&MASE\\
            \cline{2-15}
            &\multirow{4}*{\rotatebox{90}{RAF}}& 50 & \textbf{0.171} & \textbf{1.867} & \textbf{0.296} & 2.813 & \textbf{0.112} & \textbf{1.071} & \textbf{0.222} & \textbf{0.939} & \textbf{0.009} & \textbf{14.943} & \textbf{0.226} & \textbf{0.824} \\

            &\multicolumn{1}{c|}{}& 75 & \textbf{0.169} & \textbf{1.898} & \textbf{0.292} & \textbf{2.911} & \textbf{0.143} & \textbf{1.426} & \textbf{0.167} & \textbf{0.792} & \textbf{0.010} & \textbf{20.984} & 0.330 & 1.139 \\

            &\multicolumn{1}{c|}{}& 100 & \textbf{0.169} & 1.805 & \textbf{0.302} & \textbf{3.915} & \textbf{0.063} & \textbf{1.125} & 0.178 & \textbf{0.707} & \textbf{0.011} & 17.239 & 0.236 & 0.782 \\

            &\multicolumn{1}{c|}{}& 150 & \textbf{0.165} & 1.095 & \textbf{0.278} & \textbf{2.391} & \textbf{0.151} & \textbf{1.269} & \textbf{0.099} & \textbf{0.468} & \textbf{0.011} & \textbf{15.954} & 0.231 & 0.728 \\

            \cline{2-15}

            &\multirow{4}*{\rotatebox{90}{Baseline}}& 50 & 0.176 & 1.911 & 0.301 & \textbf{2.801} & 0.176 & 1.500 & 0.247 & 1.254 & 0.021 & 18.115 & 0.239 & 0.838 \\

            &\multicolumn{1}{c|}{} & 75 & 0.176 & 1.924 & 0.307 & 3.012 & 0.154 & 1.550 & 0.242 & 0.982 & 0.010 & 23.256 & \textbf{0.249} & \textbf{0.861}  \\

            &\multicolumn{1}{c|}{}& 100 & 0.176 & \textbf{1.789} & 0.309 & 4.003 & 0.155 & 1.640 & \textbf{0.169} & 0.802 & 0.013 & \textbf{16.956} & \textbf{0.226} & \textbf{0.767} \\

            &\multicolumn{1}{c|}{}& 150 & 0.170  & \textbf{1.080} & 0.295 & 2.461 & 0.170 & 1.344 & 0.109 & 0.595 & 0.014 & 16.003 & \textbf{0.197} & \textbf{0.641} \\
        \end{tabular}
    }
    \label{tab:supervised3}
\end{table*}

\newpage
\subsection{Chronos Mini Results on Benchmark \rom{2}}

\begin{table*}[h]
    \centering
    \small
    \caption{Performance of \textsc{RAF} on Benchmark \rom{2} when the prediction length $H = 3$ with context lengths, $C \in \{10, 15, 18, 21\}$}
    \renewcommand{\arraystretch}{1.15} 
    \setlength{\tabcolsep}{4pt} 
    \resizebox{0.9\linewidth}{!}{
        \begin{tabular}{cc|c|cc|cc|cc|cc|cc}
            \cline{2-13}
            &\multicolumn{2}{c|}{Datasets}& \multicolumn{2}{c|}{Tourism (M.)}& \multicolumn{2}{c|}{Tourism (Q.)} & \multicolumn{2}{c|}{M1 (M.)} & \multicolumn{2}{c|}{Uber TLC} & \multicolumn{2}{c}{CIF-2016} \\
            \cline{2-13}
            &\multicolumn{2}{c|}{Metric}&WQL&MASE&WQL&MASE&WQL&MASE&WQL&MASE&WQL&MASE\\
            \cline{2-13}
            &\multirow{4}*{\rotatebox{90}{RAF}}& 10 & \textbf{0.115} & \textbf{0.737} & \textbf{0.131} & \textbf{1.729} & \textbf{0.176} & \textbf{1.199} & \textbf{0.250} & \textbf{1.108} & \textbf{0.053} & 1.211 \\

            &\multicolumn{1}{c|}{}& 15 & 0.682 & 2.252 & \textbf{0.105} & \textbf{1.402} & 0.192 & 1.362 & \textbf{0.177} & \textbf{1.223} & 0.053 & \textbf{1.223} \\

            &\multicolumn{1}{c|}{}& 18 & \textbf{0.171} & \textbf{1.627} & \textbf{0.100} & \textbf{1.072} & \textbf{0.182} & \textbf{1.102} & \textbf{0.172} & \textbf{0.809} & 0.051 & \textbf{0.747} \\

            &\multicolumn{1}{c|}{}& 21 & \textbf{0.075} & \textbf{1.233} & \textbf{0.095} & \textbf{1.214} & \textbf{0.162} & \textbf{1.037} & \textbf{0.162} & \textbf{1.034} & \textbf{0.043} & \textbf{0.735} \\

            \cline{2-13}

            &\multirow{4}*{\rotatebox{90}{Baseline}}& 10 & 0.744 & 0.932 & 0.223 & 3.162 & 0.225 & 1.217 & 0.311 & 1.188 & 0.053 & \textbf{1.079} \\

            &\multicolumn{1}{c|}{} & 15 & \textbf{0.677} & \textbf{2.178} & 0.105 & 1.596 & \textbf{0.184} & \textbf{1.266} & 0.210 & 1.302 & \textbf{0.053} & 1.255 \\

            &\multicolumn{1}{c|}{}& 18 & 0.596 & 1.714 & 0.105 & 1.310 & 0.185 & 1.118 & 0.209 & 0.850 & \textbf{0.045} & 0.924 \\

            &\multicolumn{1}{c|}{}& 21 & 0.513 & 1.446 & 0.103 & 1.293 & 0.177 & 1.071 & 0.171 & 1.110 & 0.052 & 0.836 \\

        \end{tabular}
    }
    \label{tab:supervised4}
\end{table*}

\begin{table*}[!h]
    \centering
    \small
    \caption{Performance of \textsc{RAF} on Benchmark \rom{2} when the prediction length $H = 4$ with context lengths, $C \in \{10, 15, 18, 21\}$}
    \renewcommand{\arraystretch}{1.15} 
    \setlength{\tabcolsep}{4pt} 
    \resizebox{0.9\linewidth}{!}{
        \begin{tabular}{cc|c|cc|cc|cc|cc|cc}
            \cline{2-13}
            &\multicolumn{2}{c|}{Datasets}& \multicolumn{2}{c|}{Tourism (M.)}& \multicolumn{2}{c|}{Tourism (Q.)} & \multicolumn{2}{c|}{M1 (M.)} & \multicolumn{2}{c|}{Uber TLC} & \multicolumn{2}{c}{CIF-2016} \\
            \cline{2-13}
            &\multicolumn{2}{c|}{Metric}&WQL&MASE&WQL&MASE&WQL&MASE&WQL&MASE&WQL&MASE\\
            \cline{2-13}
            &\multirow{4}*{\rotatebox{90}{RAF}}& 10 & \textbf{0.161} & \textbf{0.755} & \textbf{0.122} & \textbf{2.331}  & \textbf{0.194} & \textbf{1.268} & \textbf{0.201} & \textbf{1.271} & \textbf{0.040} & 1.258 \\
            
            &\multicolumn{1}{c|}{}& 15 & 0.562 & 2.436 & \textbf{0.088} & \textbf{1.155} & \textbf{0.178} & \textbf{1.324} & \textbf{0.224} & \textbf{1.155} & \textbf{0.045} & \textbf{1.163} \\
            
            &\multicolumn{1}{c|}{}& 18 & \textbf{0.248} & \textbf{1.539} & \textbf{0.081} & \textbf{0.972} & \textbf{0.169} & 1.121 & \textbf{0.188} & \textbf{1.117} & 0.043 & \textbf{0.780} \\
            
            &\multicolumn{1}{c|}{}& 21 & \textbf{0.088} & 1.284 & \textbf{0.085} & \textbf{1.009} & 0.185 & \textbf{1.090} & \textbf{0.170} & 1.067 & \textbf{0.039} & \textbf{0.896} \\
            
            \cline{2-13}
            
            &\multirow{4}*{\rotatebox{90}{Baseline}}& 10 & 0.594 & 0.875 & 0.200 & 2.913 & 0.198 & 1.295 & 0.251 & 1.465 & 0.049 & \textbf{1.102} \\
            
            &\multicolumn{1}{c|}{} & 15 & \textbf{0.557} & \textbf{2.227} & 0.140 & 1.445 & 0.179 & 1.328 & 0.226 & 1.234 & 0.053 & 1.457 \\
            
            &\multicolumn{1}{c|}{}& 18 & 0.279 & 1.592 & 0.130 & 1.210 & 0.177 & \textbf{1.083} & 0.218 & 1.241 & \textbf{0.038} & 0.935 \\
            
            &\multicolumn{1}{c|}{}& 21 & 0.238 & \textbf{1.231} & 0.090 & 1.090 & \textbf{0.176} & 1.126 & 0.177 & \textbf{1.044} & 0.040 & 0.981 \\
        \end{tabular}
    }
    \label{tab:supervised5}
\end{table*}

\begin{table*}[!h]
    \centering
    \small
    \caption{Performance of \textsc{RAF} on Benchmark \rom{2} when the prediction length $H = 5$ with context lengths, $C \in \{10, 15, 18, 21\}$}
    \renewcommand{\arraystretch}{1.15} 
    \setlength{\tabcolsep}{4pt} 
    \resizebox{0.9\linewidth}{!}{
        \begin{tabular}{cc|c|cc|cc|cc|cc|cc}
            \cline{2-13}
            &\multicolumn{2}{c|}{Datasets}& \multicolumn{2}{c|}{Tourism (M.)}& \multicolumn{2}{c|}{Tourism (Q.)} & \multicolumn{2}{c|}{M1 (M.)} & \multicolumn{2}{c|}{Uber TLC} & \multicolumn{2}{c}{CIF-2016} \\
            \cline{2-13}
            &\multicolumn{2}{c|}{Metric}&WQL&MASE&WQL&MASE&WQL&MASE&WQL&MASE&WQL&MASE\\
            \cline{2-13}
            &\multirow{4}*{\rotatebox{90}{RAF}}& 10 & \textbf{0.294} & \textbf{0.898} & \textbf{0.185} & \textbf{2.897} & \textbf{0.154} & \textbf{1.364} & \textbf{0.210} & \textbf{1.323} & 0.072 & 1.308 \\

            &\multicolumn{1}{c|}{}& 15 & 0.483 & 2.545 & \textbf{0.083} & \textbf{1.158} & \textbf{0.139} & \textbf{1.287} & \textbf{0.149} & \textbf{0.957} & \textbf{0.055} & \textbf{1.153} \\

            &\multicolumn{1}{c|}{}& 18 & \textbf{0.110} & \textbf{1.722} & \textbf{0.088} & \textbf{1.095} & \textbf{0.128} & \textbf{1.138} & \textbf{0.159} & \textbf{1.023} & \textbf{0.074} & \textbf{0.908} \\

            &\multicolumn{1}{c|}{}& 21 & \textbf{0.094} & 1.296 & \textbf{0.087} & 1.179 & \textbf{0.119} & \textbf{1.092} & \textbf{0.146} & \textbf{0.867} & \textbf{0.075} & 0.971 \\

            \cline{2-13}

            &\multirow{4}*{\rotatebox{90}{Baseline}}& 10 & 0.502 & 0.987 & 0.220 & 3.125 & 0.178 & 1.386 & 0.251 & 1.453 & \textbf{0.067} & \textbf{1.226}\\

            &\multicolumn{1}{c|}{} & 15 & \textbf{0.397} & \textbf{2.430} & 0.156 & 1.489 & 0.141 & 1.304 & 0.208 & 1.167 & 0.075 & 1.157  \\

            &\multicolumn{1}{c|}{}& 18 & 0.197 & 1.903 & 0.142 & 1.268 & 0.140 & 1.264 & 0.184 & 1.184 & 0.090 & 1.030 \\

            &\multicolumn{1}{c|}{}& 21 & 0.111 & \textbf{1.275} & 0.092 & \textbf{1.106} & 0.127 & 1.160 & 0.152 & 0.979 & 0.106 & \textbf{0.963} \\
        \end{tabular}
    }
    \label{tab:supervised6}
\end{table*}
\newpage

\subsection{Chronos Base Results on Benchmark \rom{1}}

\begin{table*}[htbp]
    \centering
    \small
    \caption{Performance of \textsc{RAF} on Benchmark \rom{1} when the prediction length $H = 10$ with context lengths, $C \in \{50, 75, 100, 150\}$}
    \renewcommand{\arraystretch}{1.15} 
    \setlength{\tabcolsep}{4pt} 
    \resizebox{\linewidth}{!}{
        \begin{tabular}{cc|c|cc|cc|cc|cc|cc|cc}
            \cline{2-15}
            &\multicolumn{2}{c|}{Datasets}& \multicolumn{2}{c|}{Weather}& \multicolumn{2}{c|}{Traffic} & \multicolumn{2}{c|}{ETTh1} & \multicolumn{2}{c|}{FRED-MD} & \multicolumn{2}{c|}{Covid Deaths} & \multicolumn{2}{c}{NN5 (Daily)} \\
            \cline{2-15}
            &\multicolumn{2}{c|}{Metric}&WQL&MASE&WQL&MASE&WQL&MASE&WQL&MASE&WQL&MASE&WQL&MASE\\
            \cline{2-15}
            &\multirow{4}*{\rotatebox{90}{RAF}}& 50 & \textbf{0.152} & \textbf{1.247} & \textbf{0.178} & \textbf{1.789} & \textbf{0.041} & \textbf{0.741} & \textbf{0.018} & \textbf{0.513} & \textbf{0.005} & \textbf{5.713} & \textbf{0.170} & \textbf{0.514} \\
            
            &\multicolumn{1}{c|}{}& 75   &\textbf{0.151} & \textbf{1.200} & 0.183 & 1.608 & \textbf{0.040} & \textbf{0.625} & \textbf{0.019} & \textbf{0.500} & \textbf{0.006} & \textbf{5.124} & \textbf{0.134} & \textbf{0.417} \\
            
            &\multicolumn{1}{c|}{}& 100 & 0.155 & \textbf{1.209} & \textbf{0.194} & \textbf{2.479} & \textbf{0.025} & \textbf{0.551} & \textbf{0.074} & \textbf{0.572 } & \textbf{0.004} & \textbf{5.293} & \textbf{0.145} & \textbf{0.432} \\
            
            &\multicolumn{1}{c|}{}& 150 & \textbf{0.155} & \textbf{1.068} & \textbf{0.193} & 2.014 & \textbf{0.038} & \textbf{0.543} & 0.031 & \textbf{0.369} & 0.010 & \textbf{5.301} & \textbf{0.127} & 0.389 \\
            
            \cline{2-15}
            
            &\multirow{4}*{\rotatebox{90}{Baseline}}& 50 & 0.155 & 1.322 & 0.205 & 1.976 & 0.090 & 0.799 & 0.113 & 0.647 & 0.012 & 6.220 & 0.181 & 0.520\\
            
            &\multicolumn{1}{c|}{} & 75 & 0.154 & 1.226 & \textbf{0.171} & \textbf{1.443} & 0.074 & 0.800 & 0.112 & 0.577 & 0.007 & 5.492 & 0.154 & 0.456\\
            
            &\multicolumn{1}{c|}{}& 100 & \textbf{0.154} & 1.251 & 0.205 & 2.490 & 0.076 & 0.851 & 0.095 & 0.595 & 0.004 & 5.425 & 0.157 & 0.442\\
            
            &\multicolumn{1}{c|}{}& 150 & 0.156 & 1.097 & 0.196 & \textbf{1.888} & 0.077 & 0.752 & \textbf{0.022} & 0.419 & \textbf{0.009} & 5.772 & 0.128 & \textbf{0.378} \\
        \end{tabular}
    }
    \label{tab:supervised7}
\end{table*}

\begin{table*}[htbp]
    \centering
    \small
    \caption{Performance of \textsc{RAF} on Benchmark \rom{1} when the prediction length $H = 15$ with context lengths, $C \in \{50, 75, 100, 150\}$}
    \renewcommand{\arraystretch}{1.15} 
    \setlength{\tabcolsep}{4pt} 
    \resizebox{\linewidth}{!}{
        \begin{tabular}{cc|c|cc|cc|cc|cc|cc|cc}
            \cline{2-15}
            &\multicolumn{2}{c|}{Datasets}& \multicolumn{2}{c|}{Weather}& \multicolumn{2}{c|}{Traffic} & \multicolumn{2}{c|}{ETTh1} & \multicolumn{2}{c|}{FRED-MD} & \multicolumn{2}{c|}{Covid Deaths} & \multicolumn{2}{c}{NN5 (Daily)} \\
            \cline{2-15}
            &\multicolumn{2}{c|}{Metric}&WQL&MASE&WQL&MASE&WQL&MASE&WQL&MASE&WQL&MASE&WQL&MASE\\
            \cline{2-15}
            &\multirow{4}*{\rotatebox{90}{RAF}}& 50 & \textbf{0.156} & \textbf{1.845} & \textbf{0.213} & \textbf{1.852} & \textbf{0.086} & \textbf{0.986} & \textbf{0.025} & \textbf{0.659} & \textbf{0.008} & \textbf{8.291} & \textbf{0.156} & \textbf{0.495} \\
            
            &\multicolumn{1}{c|}{}& 75   & \textbf{0.158} & 1.598 & \textbf{0.215} & \textbf{2.249} & \textbf{0.083} & \textbf{1.017} & \textbf{0.038} & \textbf{0.689} & \textbf{0.005} & \textbf{14.743} & \textbf{0.160} & \textbf{0.539} \\
            
            &\multicolumn{1}{c|}{}& 100 & \textbf{0.163} & \textbf{1.485} & \textbf{0.199} & \textbf{2.225} & \textbf{0.074} & \textbf{1.058} & 0.073 & \textbf{0.656} & \textbf{0.006} & \textbf{9.577} & \textbf{0.171} & \textbf{0.549} \\
            
            &\multicolumn{1}{c|}{}& 150   & \textbf{0.167} & 1.151 & \textbf{0.200} & \textbf{1.752} & \textbf{0.068} & \textbf{0.820} & 0.037 & \textbf{0.362} & 0.010 & \textbf{10.040} & \textbf{0.169} & \textbf{0.567} \\
            
            \cline{2-15}
            
            &\multirow{4}*{\rotatebox{90}{Baseline}}& 50  & 0.159 & 1.885 & 0.233 & 2.162 & 0.121 & 1.169 & 0.248 & 0.939 & 0.020 & 10.089 & 0.166 & 0.507\\
            
            &\multicolumn{1}{c|}{} & 75 & 0.162 & \textbf{1.571} & 0.228 & 2.914 & 0.150 & 1.453 & 0.128 & 0.722 & 0.005 & 16.238 & 0.191 & 0.592 \\
            
            &\multicolumn{1}{c|}{}& 100 & 0.176 & 1.516 & 0.204 & 2.234 & 0.164 & 1.602 & \textbf{0.070} & 0.682 & 0.006 & 10.364 & 0.193 & 0.571 \\
            
            &\multicolumn{1}{c|}{}& 150 & 0.169 & \textbf{1.147} & 0.215 & 1.872 & 0.155 & 1.239 & \textbf{0.024} & 0.475 & \textbf{0.009} & 10.118 & 0.218 & 0.608 \\
        \end{tabular}
    }
    \label{tab:supervised8}
\end{table*}

\begin{table*}[!h]
    \centering
    \small
    \caption{Performance of \textsc{RAF} on Benchmark \rom{1} when the prediction length $H = 20$ with context lengths, $C \in \{50, 75, 100, 150\}$}
    \renewcommand{\arraystretch}{1.15} 
    \setlength{\tabcolsep}{4pt} 
    \resizebox{\linewidth}{!}{
        \begin{tabular}{cc|c|cc|cc|cc|cc|cc|cc}
            \cline{2-15}
            &\multicolumn{2}{c|}{Datasets}& \multicolumn{2}{c|}{Weather}& \multicolumn{2}{c|}{Traffic} & \multicolumn{2}{c|}{ETTh1} & \multicolumn{2}{c|}{FRED-MD} & \multicolumn{2}{c|}{Covid Deaths} & \multicolumn{2}{c}{NN5 (Daily)} \\
            \cline{2-15}
            &\multicolumn{2}{c|}{Metric}&WQL&MASE&WQL&MASE&WQL&MASE&WQL&MASE&WQL&MASE&WQL&MASE\\
            \cline{2-15}
            &\multirow{4}*{\rotatebox{90}{RAF}}& 50 & 0.164 & \textbf{1.909} & \textbf{0.282} & \textbf{2.727} & \textbf{0.075} & \textbf{0.885} & \textbf{0.093} & \textbf{0.808} & \textbf{0.014} & \textbf{12.462} & \textbf{0.186} & \textbf{0.646} \\
            
            &\multicolumn{1}{c|}{}& 75 & \textbf{0.165} & \textbf{1.924} & \textbf{0.294} & \textbf{2.966} & \textbf{0.047} & \textbf{0.867} & \textbf{0.083} &\textbf{0.643} & 0.012 & \textbf{21.807} & \textbf{0.155} & \textbf{0.538} \\
            
            &\multicolumn{1}{c|}{}& 100 & 0.168 & \textbf{1.807} & \textbf{0.284} & \textbf{3.776} & \textbf{0.044} & \textbf{0.835} & 0.125 & \textbf{0.582} & \textbf{0.008} & \textbf{17.229} & \textbf{0.150} & \textbf{0.487} \\
            
            &\multicolumn{1}{c|}{}& 150 & 0.175 & 1.175 & 0.275 & 2.321 & \textbf{0.055} & \textbf{0.714} & \textbf{0.040} & \textbf{0.400} & \textbf{0.009} & \textbf{15.990} & \textbf{0.162} & \textbf{0.538} \\
            
            \cline{2-15}
            
            &\multirow{4}*{\rotatebox{90}{Baseline}}& 50 & \textbf{0.161} & 2.146 & 0.299 & 2.820 & 0.120 & 1.209 & 0.268 & 1.163 & 0.031 & 13.800 & 0.242 & 0.798\\
            
            &\multicolumn{1}{c|}{} & 75 & 0.165 & 2.042 & 0.301 & 2.972 & 0.128 & 1.274 & 0.203 & 0.829 & \textbf{0.011} & 24.301 & 0.192 & 0.613\\
            
            &\multicolumn{1}{c|}{}& 100 & \textbf{0.168} & 1.940 & 0.294 & 3.886 & 0.143 & 1.346 & \textbf{0.119} & 0.749 & 0.009 & 18.247 & 0.189 & 0.584\\
            
            &\multicolumn{1}{c|}{}& 150 & \textbf{0.174} & \textbf{1.144} & \textbf{0.263} & \textbf{2.257} & 0.133 & 1.109 & 0.043 & 0.497 & 0.011 & 16.831 & 0.208 & 0.613\\
        \end{tabular}
    }
    \label{tab:supervised9}
\end{table*}

\newpage
\subsection{Chronos Base Results on Benchmark \rom{2}}

\begin{table*}[htbp]
    \centering
    \small
    \caption{Performance of \textsc{RAF} on Benchmark \rom{2} when the prediction length $H = 3$ with context lengths, $C \in \{10, 15, 18, 21\}$}
    \renewcommand{\arraystretch}{1.15} 
    \setlength{\tabcolsep}{4pt} 
    \resizebox{0.9\linewidth}{!}{
        \begin{tabular}{cc|c|cc|cc|cc|cc|cc}
            \cline{2-13}
            &\multicolumn{2}{c|}{Datasets}& \multicolumn{2}{c|}{Tourism (M.)}& \multicolumn{2}{c|}{Tourism (Q.)} & \multicolumn{2}{c|}{M1 (M.)} & \multicolumn{2}{c|}{Uber TLC} & \multicolumn{2}{c}{CIF-2016} \\
            \cline{2-13}
            &\multicolumn{2}{c|}{Metric}&WQL&MASE&WQL&MASE&WQL&MASE&WQL&MASE&WQL&MASE\\
            \cline{2-13}
            &\multirow{4}*{\rotatebox{90}{RAF}}& 10 & \textbf{0.470} & \textbf{0.836} & \textbf{0.072} & 1.406 & \textbf{0.172} & \textbf{1.162} & \textbf{0.252} & \textbf{1.040} & 0.083 & \textbf{1.202} \\

            &\multicolumn{1}{c|}{}& 15 & \textbf{0.209} & 2.521 & \textbf{0.091} & \textbf{1.174} & 0.181 & \textbf{1.235} & 0.207 & \textbf{1.162} & \textbf{0.020} & \textbf{1.339} \\

            &\multicolumn{1}{c|}{}& 18 & \textbf{0.198} & \textbf{1.643} & \textbf{0.080} & \textbf{0.973} & \textbf{0.182} & \textbf{1.029} & \textbf{0.142} & \textbf{0.745} & 0.082 & \textbf{1.011} \\

            &\multicolumn{1}{c|}{}& 21 & \textbf{0.070} & \textbf{1.118} & \textbf{0.072} & \textbf{0.908} & \textbf{0.169} & \textbf{0.867} & \textbf{0.148} & \textbf{1.004} & \textbf{0.026} & \textbf{0.639} \\

            \cline{2-13}

            &\multirow{4}*{\rotatebox{90}{Baseline}}& 10 & 0.643 & 0.901 & 0.075 & \textbf{1.172} & 0.204 & 1.192 & 0.295 & 1.257 & \textbf{0.079} & 1.282 \\

            &\multicolumn{1}{c|}{} & 15 & 0.230 & \textbf{2.421} & 0.097 & 1.190 & \textbf{0.178} & 1.260 & \textbf{0.205} & 1.205 & 0.025 & 1.395\\

            &\multicolumn{1}{c|}{}& 18 & 0.560 & 1.679 & 0.099 & 1.093 & 0.185 & 1.115 & 0.212 & 0.899 & \textbf{0.078} & 1.039\\

            &\multicolumn{1}{c|}{}& 21 & 0.512 & 1.352 & 0.073 & 0.973 & 0.174 & 1.015 & 0.166 & 1.079 & 0.027 & 0.793\\

        \end{tabular}
    }
    \label{tab:supervised10}
\end{table*}

\begin{table*}[!h]
    \centering
    \small
    \caption{Performance of \textsc{RAF} on Benchmark \rom{2} when the prediction length $H = 4$ with context lengths, $C \in \{10, 15, 18, 21\}$}
    \renewcommand{\arraystretch}{1.15} 
    \setlength{\tabcolsep}{4pt} 
    \resizebox{0.9\linewidth}{!}{
        \begin{tabular}{cc|c|cc|cc|cc|cc|cc}
            \cline{2-13}
            &\multicolumn{2}{c|}{Datasets}& \multicolumn{2}{c|}{Tourism (M.)}& \multicolumn{2}{c|}{Tourism (Q.)} & \multicolumn{2}{c|}{M1 (M.)} & \multicolumn{2}{c|}{Uber TLC} & \multicolumn{2}{c}{CIF-2016} \\
            \cline{2-13}
            &\multicolumn{2}{c|}{Metric}&WQL&MASE&WQL&MASE&WQL&MASE&WQL&MASE&WQL&MASE\\
            \cline{2-13}
            &\multirow{4}*{\rotatebox{90}{RAF}}& 10 & \textbf{0.485} & 0.829 & 0.111 & 1.323 & 0.203 & \textbf{1.354} & \textbf{0.191} & \textbf{1.198} & \textbf{0.062} & 1.256 \\

            &\multicolumn{1}{c|}{}& 15 & 0.395 & 2.703 & \textbf{0.081} & \textbf{1.146} & \textbf{0.178} & 1.415 & 0.185 & \textbf{1.100} & \textbf{0.041} & \textbf{1.360} \\

            &\multicolumn{1}{c|}{}& 18 & \textbf{0.107} & \textbf{1.559} & \textbf{0.091} & \textbf{1.054} & \textbf{0.165} & \textbf{1.063} & \textbf{0.163} & \textbf{0.983} & \textbf{0.039} & \textbf{0.862} \\

            &\multicolumn{1}{c|}{}& 21 & \textbf{0.068} & \textbf{1.173} & \textbf{0.088} & \textbf{1.035} & \textbf{0.140} & \textbf{0.880} & \textbf{0.145} & \textbf{0.876} & \textbf{0.038} & \textbf{0.692} \\

            \cline{2-13}

            &\multirow{4}*{\rotatebox{90}{Baseline}}& 10 & 0.554 & \textbf{0.827} & \textbf{0.090} & \textbf{1.113} & \textbf{0.201} & 1.358 & 0.261 & 1.463 & 0.063 & \textbf{1.162} \\

            &\multicolumn{1}{c|}{} & 15 & \textbf{0.359} & \textbf{2.514} & 0.082 & 1.151 & 0.194 & \textbf{1.392} & \textbf{0.178} & 1.122 & 0.043 & 1.412\\

            &\multicolumn{1}{c|}{}& 18 & 0.336 & 1.584 & 0.092 & 1.072 & 0.167 & 1.065 & 0.194 & 1.093 & 0.047 & 0.907\\

            &\multicolumn{1}{c|}{}& 21 & 0.370 & 1.289 & 0.090 & 1.105 & 0.172 & 1.067 & 0.150 & 0.954 & 0.040 & 0.954\\

        \end{tabular}
    }
    \label{tab:supervised11}
\end{table*}

\begin{table*}[!h]
    \centering
    \small
    \caption{Performance of \textsc{RAF} on Benchmark \rom{2} when the prediction length $H = 5$ with context lengths, $C \in \{10, 15, 18, 21\}$}
    \renewcommand{\arraystretch}{1.15} 
    \setlength{\tabcolsep}{4pt} 
    \resizebox{0.9\linewidth}{!}{
        \begin{tabular}{cc|c|cc|cc|cc|cc|cc}
            \cline{2-13}
            &\multicolumn{2}{c|}{Datasets}& \multicolumn{2}{c|}{Tourism (M.)}& \multicolumn{2}{c|}{Tourism (Q.)} & \multicolumn{2}{c|}{M1 (M.)} & \multicolumn{2}{c|}{Uber TLC} & \multicolumn{2}{c}{CIF-2016} \\
            \cline{2-13}
            &\multicolumn{2}{c|}{Metric}&WQL&MASE&WQL&MASE&WQL&MASE&WQL&MASE&WQL&MASE\\
            \cline{2-13}
            &\multirow{4}*{\rotatebox{90}{RAF}}& 10 & \textbf{0.422} & \textbf{0.911} & 0.121 & 1.671 & 0.136 & \textbf{1.323} & \textbf{0.195} & \textbf{1.274} & \textbf{0.066} &\textbf{1.385} \\

            &\multicolumn{1}{c|}{}& 15 & \textbf{0.233} & \textbf{2.669} & \textbf{0.083} & \textbf{1.177} & \textbf{0.118} & \textbf{1.250} & \textbf{0.172} & \textbf{0.972} & \textbf{0.120} & \textbf{1.181} \\

            &\multicolumn{1}{c|}{}& 18 & \textbf{0.101} & \textbf{1.551} & \textbf{0.092} & \textbf{1.188} & \textbf{0.162} & \textbf{1.120} & \textbf{0.125} & \textbf{0.949} & \textbf{0.052} & 1.206 \\

            &\multicolumn{1}{c|}{}& 21 & \textbf{0.085} & \textbf{1.349} & \textbf{0.081} & \textbf{1.096} & \textbf{0.162} & \textbf{1.042} & \textbf{0.145} & 0.927 & 0.081 & \textbf{0.840} \\

            \cline{2-13}

            &\multirow{4}*{\rotatebox{90}{Baseline}}& 10 & 0.488 & 0.937 & \textbf{0.113} & \textbf{1.326} & \textbf{0.134} & 1.344 & 0.261 & 1.463 & 0.077 & 1.487\\

            &\multicolumn{1}{c|}{} & 15 & 0.372 & 3.131 & 0.093 & 1.222 & 0.129 & 1.269 & 0.176 & 1.076 & 0.122 & 1.310\\

            &\multicolumn{1}{c|}{}& 18 & 0.217 & 1.826 & 0.093 & 1.206 & 0.174 & 1.170 & 0.158 & 1.023 & 0.054 & \textbf{1.186} \\

            &\multicolumn{1}{c|}{}& 21 & 0.227 & 1.353 & 0.094 & 1.261 & 0.177 & 1.148 & 0.150 & \textbf{0.925} & \textbf{0.075} & 1.024 \\

        \end{tabular}
    }
    \label{tab:supervised12}
\end{table*}

\newpage
\subsection{Aggregate Relative MASE and WQL Scores}
\label{app:agg}
We summarize per-dataset improvements using a relative score. For a dataset $d$ and context length $C$, let $s^{\text{RAF}}_{d,C}$ and $s^{\text{base}}_{d,C}$ denote the (averaged) WQL or MASE of the Chronos model with and without retrieval. Then, the relative score is
\[
r_{d,C} \;=\; \frac{s^{\text{RAF}}_{d,C}}{s^{\text{base}}_{d,C}},
\]
so values below $1$ indicate improvement. To aggregate across the multiple context lengths used for a dataset, we take the geometric mean. Figures~\ref{fig:mase_performance} and \ref{fig:wql_performance} plot these aggregated relative scores (baseline normalized to $1$) for Chronos Mini and Chronos Base.

\begin{figure}[h!]
    \centering
    \includegraphics[width=\textwidth]{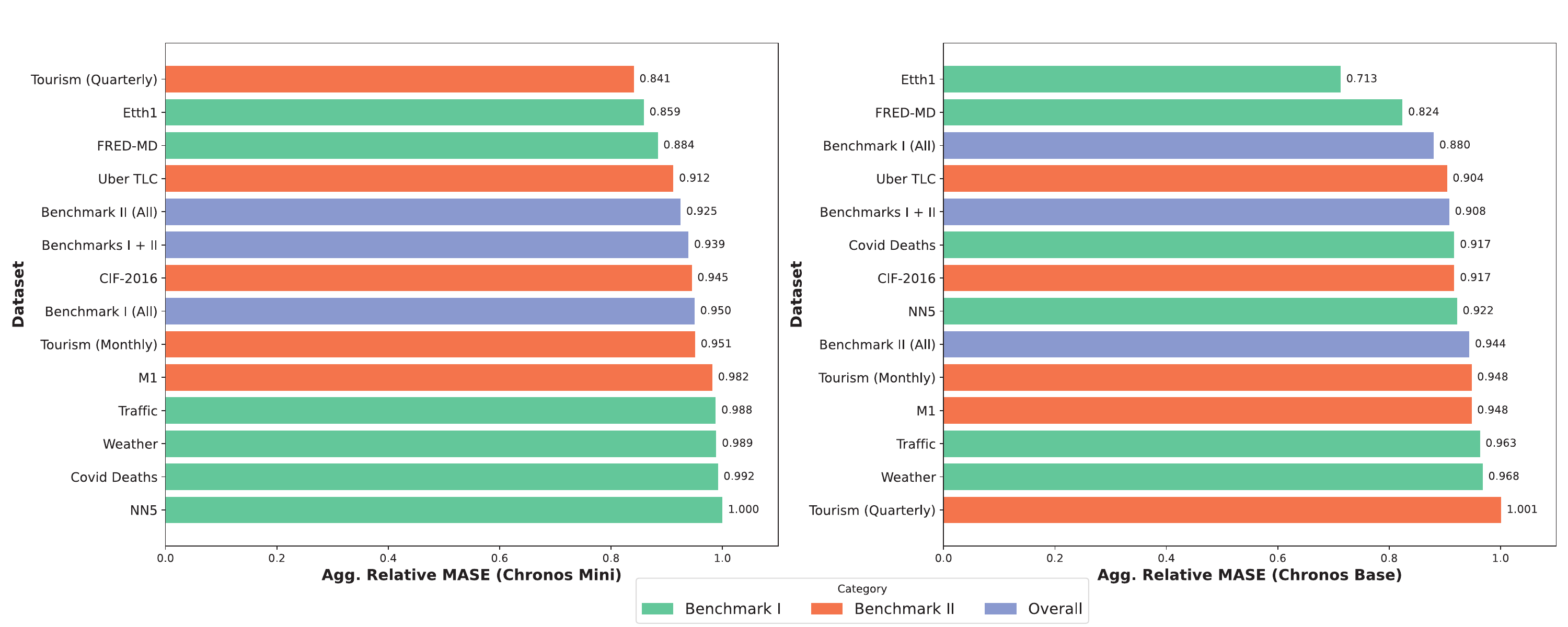}
    \vspace{-6pt}
    \caption{Aggregated Relative MASE performance for Chronos Mini and Chronos Base across datasets and benchmarks.}
    \label{fig:mase_performance}
\end{figure}

\begin{figure}[h!]
    \centering
    \includegraphics[width=\textwidth]{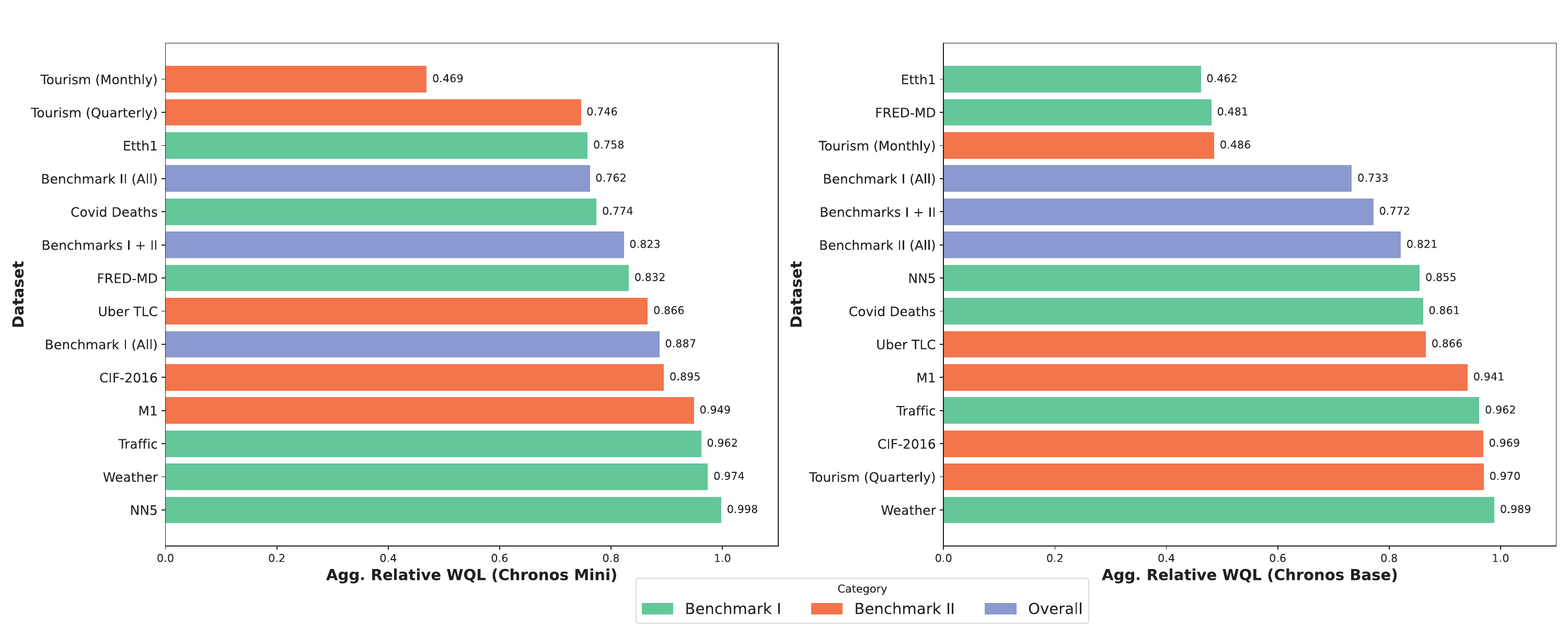}
    \vspace{-6pt}
    \caption{Aggregated Relative WQL performance for Chronos Mini and Chronos Base across datasets and benchmarks.}
    \label{fig:wql_performance}
\end{figure}

\newpage
\section{Robustness Analysis}
\label{sec:limitations}

RAF's gains depend on the quality of the retrieved context. To probe this dependency, we run controlled synthetic stress tests: we generate univariate series of length 240 and evaluate across 240 rolling windows with $C{=}100$ and $H{=}20$ on Chronos-Base. We report
\[
\%\Delta \text{ mean WQL} = \left(\frac{\text{WQL}_{\text{RAF}}}{\text{WQL}_{\text{Base}}} - 1\right) \times 100
\]
alongside the \textbf{failure rate}, defined as the fraction of windows where RAF increases WQL.

\paragraph{Noise sensitivity.} We corrupt series with i.i.d.\ Gaussian noise $\varepsilon_t \sim \mathcal{N}(0,\sigma^2)$. Table~\ref{tab:noise} shows that RAF improves mean WQL at every noise level, but the margin narrows from $-7.62\%$ ($\sigma{=}0$) to $-5.27\%$ ($\sigma{=}0.8$) as noisier contexts yield less reliable retrievals. The failure rate remains between 30--41\%.

\begin{table}[h]
\caption{Gaussian noise sweep.}
\label{tab:noise}
\centering
\begin{tabular}{ccccc}
\toprule
$\sigma$ & WQL (Base) & WQL (RAF) & \%$\Delta$ & Fail\,\% \\
\midrule
0.0 & 0.01245 & \textbf{0.01150} & $-$7.62 & 41.25 \\
0.2 & 0.06462 & \textbf{0.05840} & $-$9.64 & 35.00 \\
0.4 & 0.11160 & \textbf{0.10372} & $-$7.06 & 32.08 \\
0.6 & 0.16024 & \textbf{0.15029} & $-$6.21 & 30.42 \\
0.8 & 0.20608 & \textbf{0.19523} & $-$5.27 & 31.25 \\
\bottomrule
\end{tabular}
\end{table}

\paragraph{Sparsity.} We drop observations at random with probability $p$ (MCAR) and fill gaps via linear interpolation before retrieval. Table~\ref{tab:sparsity} shows graceful degradation up to $p{=}0.6$; at $p{=}0.8$ the interpolated context is too unreliable for similarity matching and retrieval slightly hurts ($+1.67\%$, failure rate $\approx$50\%).

\begin{table}[h!]
\caption{MCAR sparsity sweep.}
\label{tab:sparsity}
\centering
\begin{tabular}{ccccc}
\toprule
$p$ & WQL (Base) & WQL (RAF) & \%$\Delta$ & Fail\,\% \\
\midrule
0.2 & 0.09576 & \textbf{0.09187} & $-$4.06 & 41.67 \\
0.4 & 0.24416 & \textbf{0.22022} & $-$9.80 & 34.58 \\
0.6 & 0.38024 & \textbf{0.37236} & $-$2.07 & 45.42 \\
0.8 & \textbf{0.41410} & 0.42100 & $+$1.67 & 49.58 \\
\bottomrule
\end{tabular}
\end{table}

Taken together, these results reveal that RAF is notably resilient to moderate corruption. Under noise, the best relative improvement actually occurs at $\sigma{=}0.2$ ($-9.64\%$), suggesting that mild noise can even improve retrieval diversity by breaking ties among near-identical candidates. The failure rate simultaneously drops to its lowest point (30--32\%) at moderate noise levels ($\sigma \in \{0.4, 0.6\}$), indicating that while individual WQL values worsen, the retrievals that \emph{do} match become more reliably beneficial. The sparsity results tell a complementary story that linear interpolation preserves enough structure for effective retrieval up to 60\% missingness, and the sharp transition at $p{=}0.8$ marks the point at which the interpolated signal no longer resembles the true context. This suggests that RAF's practical boundary is determined not by noise magnitude, but by whether the corrupted context retains sufficient shape information for meaningful nearest-neighbor matching.

\clearpage
\section{Qualitative Results}
\label{app:qualitative_results}

\begin{figure}[H]
    \centering
    \vspace{-20pt}
    \textbf{Benchmark \rom{1}}\par\vspace{2pt}
    \includegraphics[width=\textwidth]{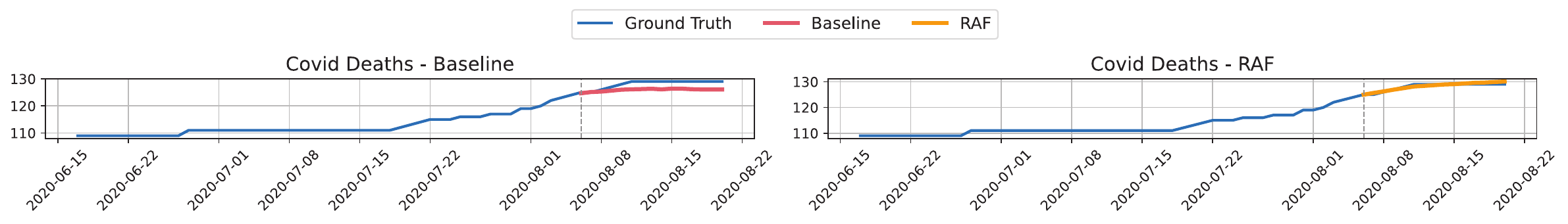}\vspace{-2pt}
    \includegraphics[width=\textwidth]{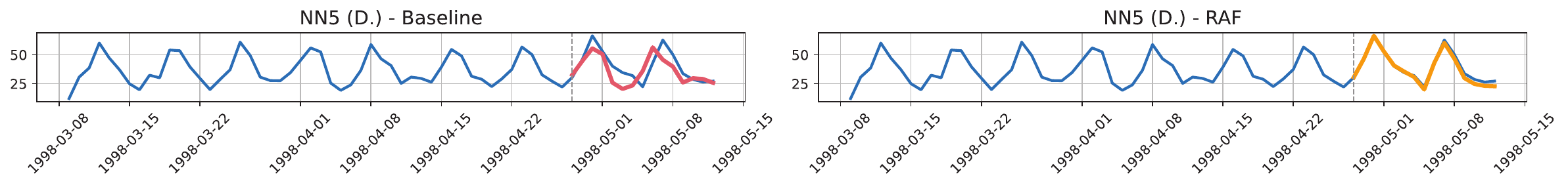}\vspace{-2pt}
    \includegraphics[width=\textwidth]{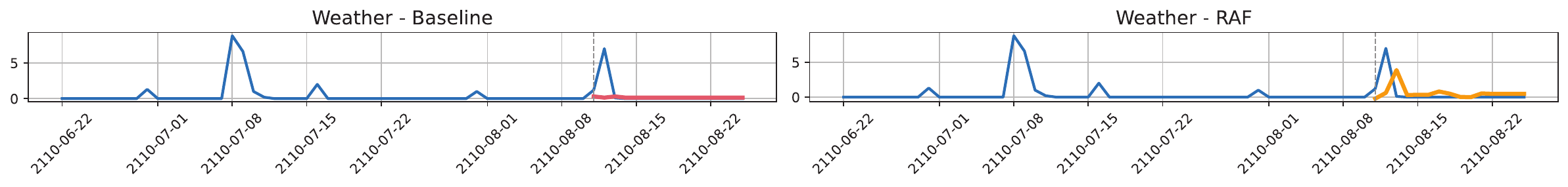}\vspace{-2pt}
    \includegraphics[width=\textwidth]{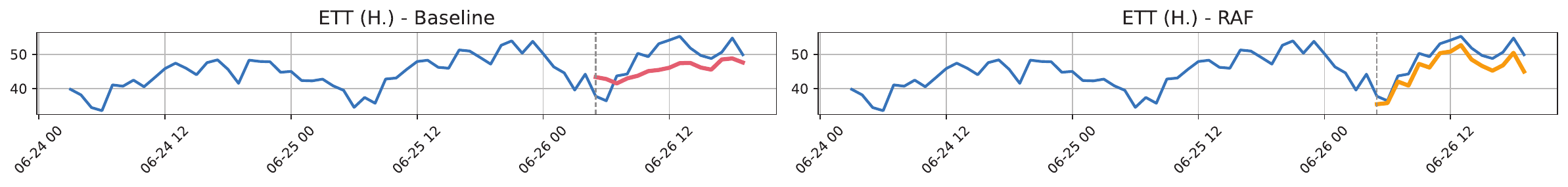}\vspace{-2pt}
    \includegraphics[width=\textwidth]{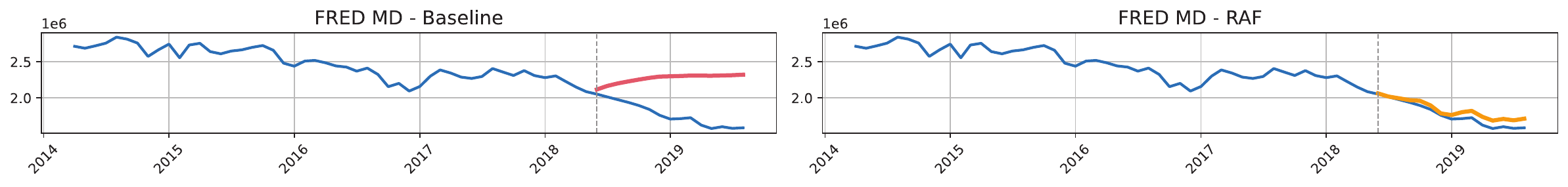}
    \vspace{2pt}
    \textbf{Benchmark \rom{2}}\par\vspace{2pt}
    \includegraphics[width=\textwidth]{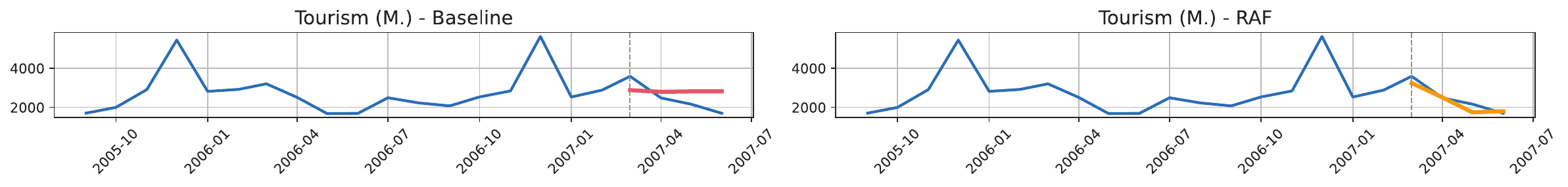}\vspace{-2pt}
    \includegraphics[width=\textwidth]{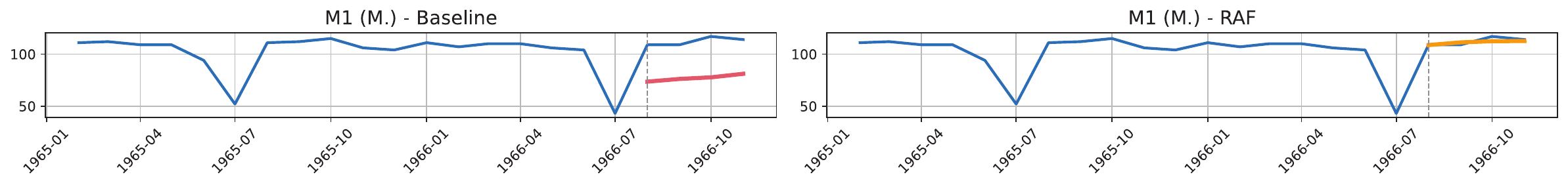}\vspace{-2pt}
    \includegraphics[width=\textwidth]{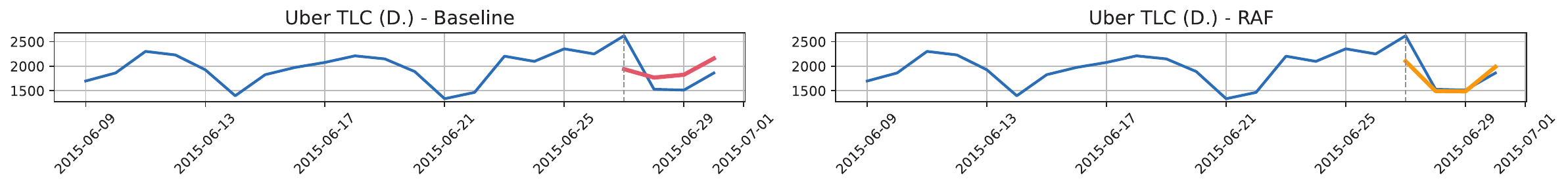}\vspace{-2pt}
    \includegraphics[width=\textwidth]{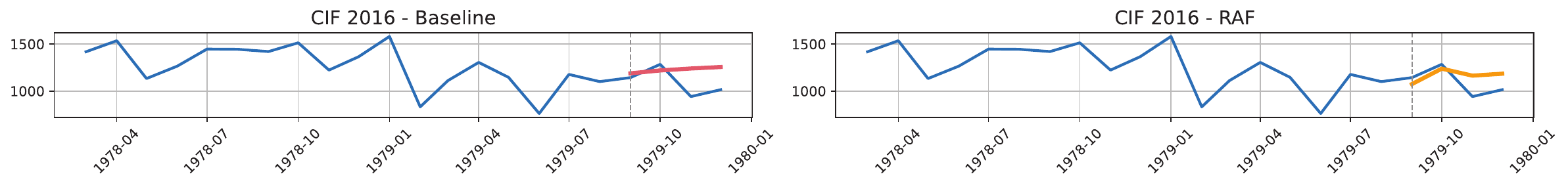}
    \caption{Qualitative results on Chronos Base. \textbf{Top:} Benchmark \rom{1} ($C=50$, $H=15$). \textbf{Bottom:} Benchmark \rom{2} ($C=18$, $H=4$).}
    \label{fig:qualitative_results}
\end{figure}

\begin{figure}[p]
    \centering
    \vspace{-20pt}
    \textbf{Benchmark \rom{1}}\par\vspace{2pt}
    \includegraphics[width=\textwidth]{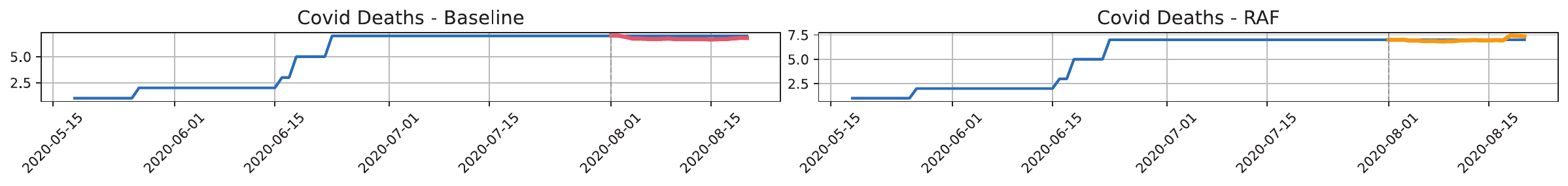}\vspace{-2pt}
    \includegraphics[width=\textwidth]{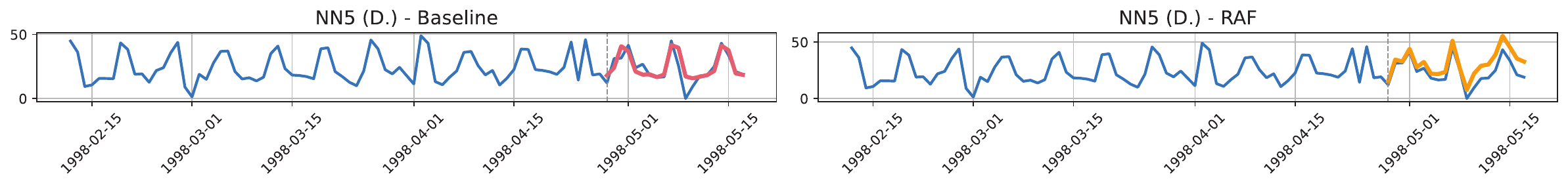}\vspace{-2pt}
    \includegraphics[width=\textwidth]{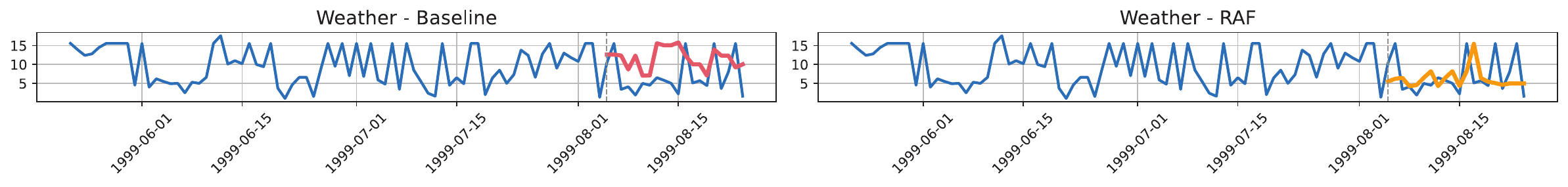}\vspace{-2pt}
    \includegraphics[width=\textwidth]{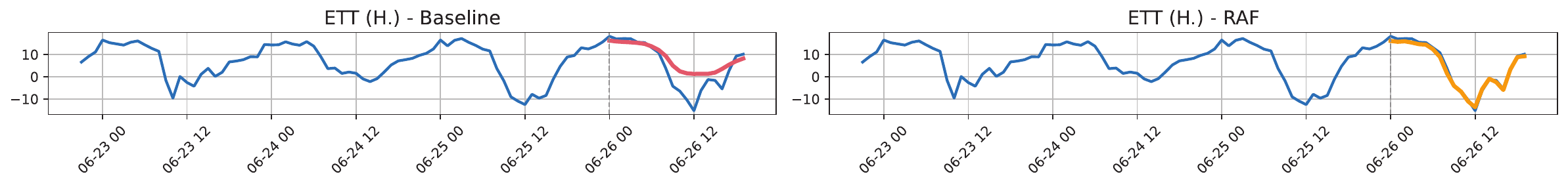}\vspace{-2pt}
    \includegraphics[width=\textwidth]{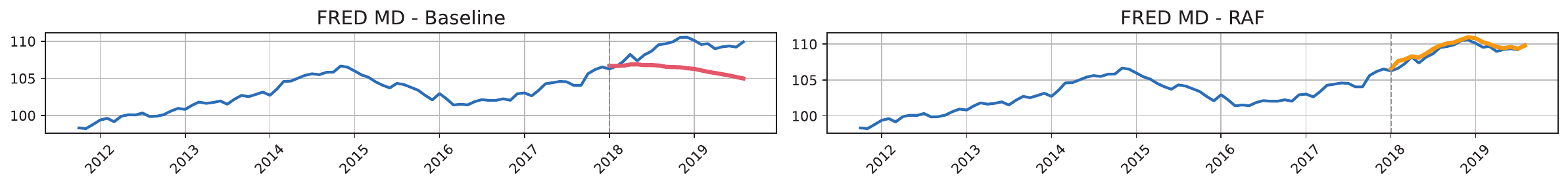}
    \vspace{2pt}
    \textbf{Benchmark \rom{2}}\par\vspace{2pt}
    \includegraphics[width=\textwidth]{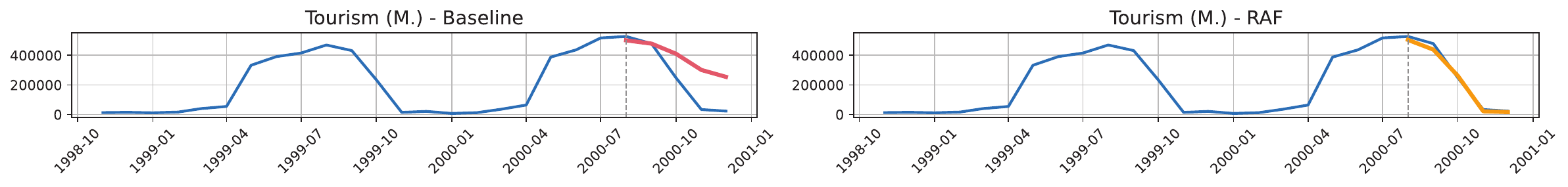}\vspace{-2pt}
    \includegraphics[width=\textwidth]{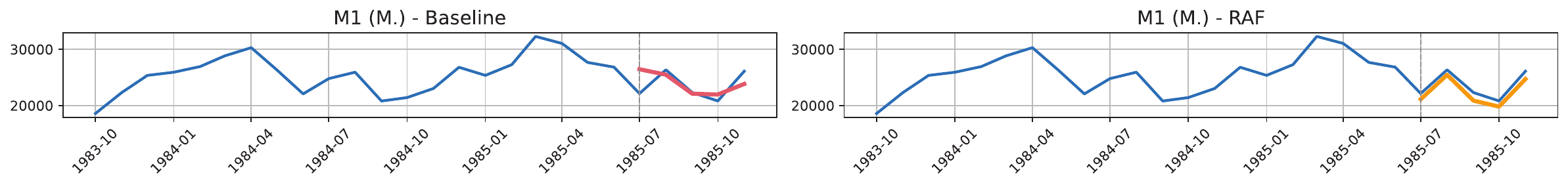}\vspace{-2pt}
    \includegraphics[width=\textwidth]{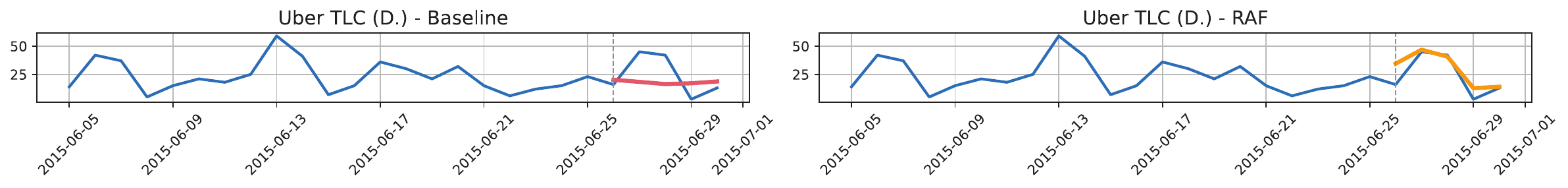}\vspace{-2pt}
    \includegraphics[width=\textwidth]{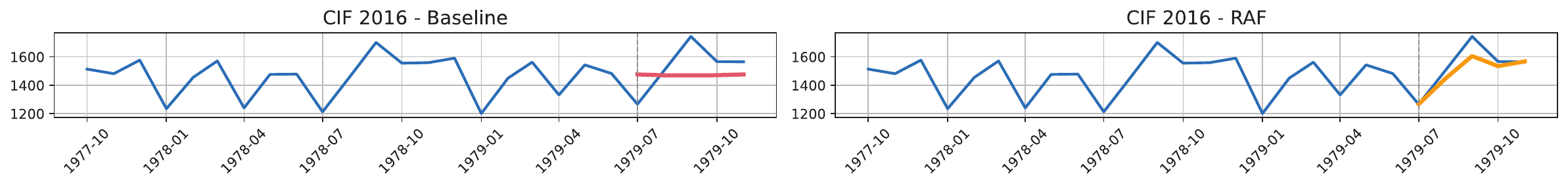}
    \caption{Qualitative results on Chronos Base. \textbf{Top:} Benchmark \rom{1} ($C=75$, $H=20$). \textbf{Bottom:} Benchmark \rom{2} ($C=21$, $H=5$).}
    \label{fig:qualitative_results_2}
\end{figure}

\clearpage
\section{Datasets}
\label{app:datasets}
\begin{table*}[!h]
\centering
\caption{Overview of the Benchmark Datasets}
\label{tab:datasets}
\renewcommand{\arraystretch}{1.15} 
\setlength{\tabcolsep}{4pt} 
\begin{tabular}{l|p{4cm}|p{3cm}|p{2cm}|p{4cm}}
\thickhline
\multicolumn{2}{c|}{\textbf{Dataset}}    & \textbf{Domain}   & \textbf{Frequency} & \textbf{Number of Series} \\ \thickhline
\multirow{6}{*}{\rotatebox{90}{Benchmark \rom{1}}} 
   & Weather & Nature & 1D & 3010 \\ \cline{2-5} 
   & Traffic & Transport & 1H & 862 \\ \cline{2-5} 
   & ETT (Hourly) & Energy & 1H & 14 \\ \cline{2-5} 
   & FRED-MD & Finance & 1M & 107 \\ \cline{2-5} 
   & Covid Deaths & Health & 1D & 266 \\ \cline{2-5} 
   & NN5 (Daily) & Finance & 1D & 111 \\ \thickhline
   
\multirow{5}{*}{\rotatebox{90}{Benchmark \rom{2}}} 
   & Tourism (Monthly) & Tourism & 1M & 366 \\ \cline{2-5} 
   & Tourism (Quarterly) & Tourism & 1Q & 427 \\ \cline{2-5} 
   & M1 (Monthly) & Finance & 1M & 617 \\ \cline{2-5} 
   & Uber TLC (Daily) & Transport & 1D & 262 \\ \cline{2-5} 
   & CIF-2016 & Finance & 1M & 72 \\ \thickhline
\end{tabular}
\end{table*}

\vspace{-6pt}
\subsection{Benchmark \rom{1} Datasets}
\vspace{-6pt}

\noindent\textbf{Weather}\quad dataset \citep{godahewa2021monashtimeseriesforecasting} contains daily time series data with 3010 series for rainfall, recorded at various weather stations across Australia.

\noindent\textbf{Traffic}\quad dataset \citep{godahewa2021monashtimeseriesforecasting} consists of 862 hourly time series representing road occupancy rates on freeways in the San Francisco Bay area, covering the period from 2015 to 2016.

\noindent\textbf{ETT}\quad dataset \citep{haoyietal-informer-2021} includes 14 time series about oil temperatures and additional covariates of electrical transformers from two stations in China, recorded at 1-hour intervals.

\noindent\textbf{FRED-MD}\quad \citep{godahewa2021monashtimeseriesforecasting} contains 107 monthly time series showing a set of macro-economic indicators from the Federal Reserve Bank starting from 01/01/1959.

\noindent\textbf{Covid Deaths}\quad \citep{godahewa2021monashtimeseriesforecasting} includes 266 daily time series representing the total number of COVID-19 deaths in various countries and states, covering the period from January 22, 2020, to August 20, 2020. The data was sourced from the Johns Hopkins repository.

\noindent\textbf{NN5}\quad dataset \citep{godahewa2021monashtimeseriesforecasting} consists of 111 daily time series of cash withdrawals from Automated Teller Machines (ATMs) in the UK, and was utilized in the NN5 forecasting competition.

\vspace{-6pt}
\subsection{Benchmark \rom{2} Datasets}
\vspace{-6pt}

\noindent\textbf{Tourism}\quad dataset \citep{godahewa2021monashtimeseriesforecasting, athanasopoulos2011}, derived from a Kaggle competition, includes 366 monthly and 427 quarterly tourism-related time series.

\noindent\textbf{M1}\quad \citep{godahewa2021monashtimeseriesforecasting, makridakis1979accuracy} contains 617 time series used in the M1 forecasting competition, covering areas such as microeconomics, macroeconomics, and demographics.

\noindent\textbf{Uber TLC}\quad contains 262 time series with daily frequency, representing the number of Uber pick-ups from various locations in New York, between January and June 2015. Data obtained from \url{https://github.com/fivethirtyeight/uber-tlc-foil-response}.

\noindent\textbf{CIF-2016}\quad \citep{godahewa2021monashtimeseriesforecasting} consists of banking data used in the CIF-2016 forecasting competition. It includes 24 real-time series, while the remaining 48 are artificially generated.

\end{document}